\documentclass{article}

\usepackage{microtype}
\usepackage{graphicx}
\usepackage{subcaption}
\usepackage{booktabs} 
\usepackage{makecell}

\usepackage{hyperref}

\usepackage{times}
\usepackage{dsfont}

\usepackage[preprint]{icml2026}

\usepackage{amsmath}
\usepackage{amssymb}
\usepackage{mathtools}
\usepackage{amsthm}

\usepackage{scrextend}
\usepackage{microtype}
\definecolor{mydarkblue}{rgb}{0,0.08,0.45}
\usepackage[most]{tcolorbox}
\usepackage{algorithm}
\usepackage{algorithmic}
\usepackage{amsmath}

\usepackage{afterpage}

\usepackage[capitalize,noabbrev]{cleveref}

\theoremstyle{plain}
\newtheorem{theorem}{Theorem}[section]

\theoremstyle{definition}

\theoremstyle{remark}
\newtheorem{remark}[theorem]{Remark}

\usepackage{listings}
\usepackage{colortbl}
\usepackage{amsthm}

\crefname{listing}{algorithm}{algorithms}
\Crefname{listing}{Algorithm}{Algorithms}
\usepackage{pifont}
\usepackage{float}

\newcommand{\xmark}{\ding{55}}%
\newcommand{\cmark}{\ding{51}}%

\usepackage[textsize=tiny]{todonotes}

\icmltitlerunning{D-CORE: Incentivizing Task Decomposition in Large Reasoning Models for Complex Tool Use}

\begin{document}

\twocolumn[
  \icmltitle{D-CORE: Incentivizing Task Decomposition in Large Reasoning Models for Complex Tool Use}



  \icmlsetsymbol{equal}{*}

  \begin{icmlauthorlist}
    \icmlauthor{Bowen Xu}{equal,comp}
    \icmlauthor{Shaoyu Wu}{equal,comp}
    \icmlauthor{Hao Jiang}{comp}
    \icmlauthor{Kai Liu}{comp}
    \icmlauthor{Xin Chen}{comp}
    \icmlauthor{Lulu Hu}{comp}
    \icmlauthor{Bin Yang}{comp}
  \end{icmlauthorlist}

  \icmlaffiliation{comp}{Alibaba Cloud Computing, Alibaba Group}

  \icmlcorrespondingauthor{Bowen Xu}{bowen.xbw@alibaba-inc.com}
  \icmlcorrespondingauthor{Shaoyu Wu}{wushaoyu.wsy@alibaba-inc.com}

  \icmlkeywords{Machine Learning, ICML}

  \vskip 0.3in
]



\printAffiliationsAndNotice{\icmlEqualContribution}
\begin{abstract}
Effective tool use and reasoning are essential capabilities for large reasoning models~(LRMs) to address complex real-world problems. Through empirical analysis, we identify that current LRMs lack the capability of sub-task decomposition in complex tool use scenarios, leading to Lazy Reasoning. To address this, we propose a two-stage training framework D-CORE~(\underline{\textbf{D}}ecomposing tasks and \underline{\textbf{Co}}mposing \underline{\textbf{Re}}asoning processes) that first incentivize the LRMs' task decomposition reasoning capability via self-distillation, followed by diversity-aware reinforcement learning~(RL) to restore LRMs' reflective reasoning capability. D-CORE achieves robust tool-use improvements across diverse benchmarks and model scales. Experiments on BFCLv3 demonstrate superiority of our method: D-CORE-8B reaches 77.7\% accuracy, surpassing the best-performing 8B model by 5.7\%. Meanwhile, D-CORE-14B establishes a new state-of-the-art at 79.3\%, outperforming 70B models despite being 5$\times$ smaller. The source code is available at \url{https://github.com/alibaba/EfficientAI}.
\end{abstract}
\begin{figure}[!h]
    \centering
    \includegraphics[width=1.0\linewidth]{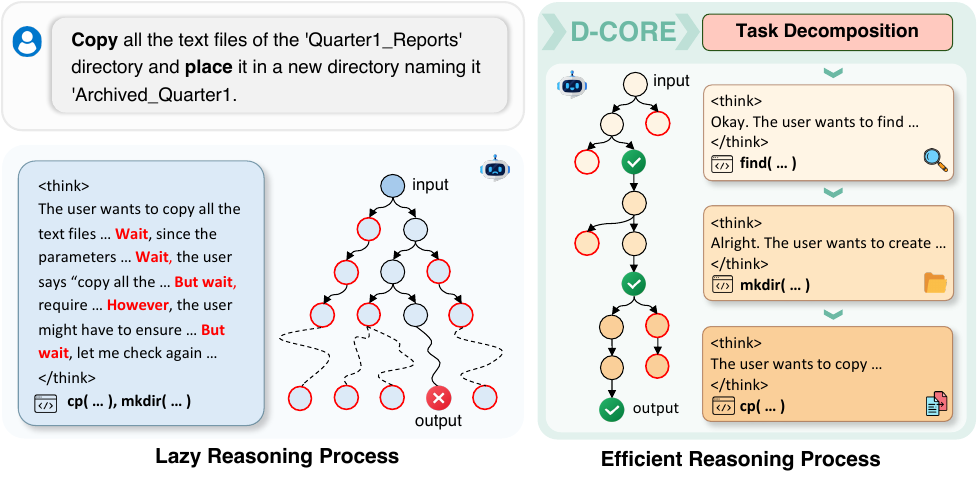}
    \caption{Comparison of baseline and D-CORE trained LRMs in complex tool use scenarios. Baseline LRMs exhibit ``Lazy Reasoning'' with repetitive reflection and incorrect answers, while D-CORE trained LRMs decompose tasks into executable subtasks.}
    \label{fig:d_core}
\end{figure}
\vspace{-5mm}

\begin{figure*}[!t]
    \centering
    \includegraphics[width=\textwidth]{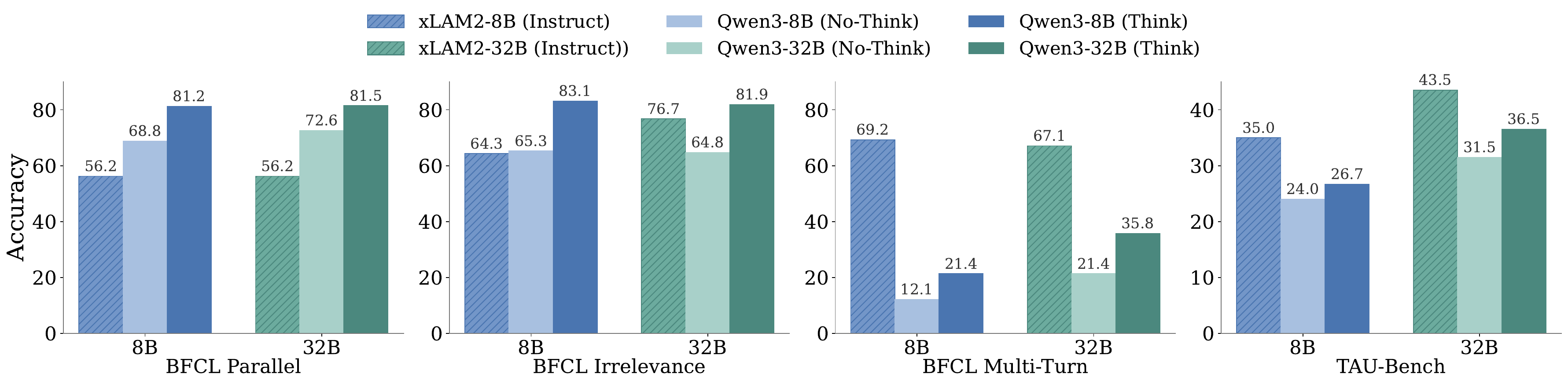}
    \vspace{-5mm}
    \caption{Comparison of LRM Qwen3 vs. instruct LLM xLAM2 performance on BFCLv3 Parallel, Irrelevance , Multi-turn and $\tau$-bench task.}
    \label{fig:qwen3_xlam2_compare}
    \vspace{-3mm}
\end{figure*}
\vspace{-3mm}
\section{Introduction}

Tool use equips Large Language Models (LLMs) with the ability to invoke external interfaces, serving as a cornerstone for autonomous agents~\citep{jimenez2024swebench, wei2025browsecomp, xie2024osworld, zhou2023webarena}. As tasks evolve from simple queries to compositional workflows~\citep{workfbench,taskbench}, recent benchmarks underscore the necessity for robust reasoning in real-world scenarios~\citep{yao2024tau,patilberkeley}. However, current paradigms face a dichotomy. Conventional tool use LLM approaches dominated by rule-based SFT~\citep{liu2024apigen,liu2024toolace,zhong2025complexfuncbench,prabhakar2025apigen,yin2025magnet,chen2023t}, suffering from poor generalization in complex scenarios~\citep{chen2025acebench,chu2025sft}. Conversely, while the RL-enhanced LRMs demonstrates success in math~\citep{Deepseek-R1,yang2025qwen3,Anthropic2025Claude3.7,openai_o1,o3-mini} and single-turn tool use tasks~\citep{qian2025toolrl,zhang2025nemotron}, we observe a diminishing return in complex tool use scenarios: LRMs consume substantiallly more tokens for reasoning yet yield marginal performance gains over LLMs. This points to a fundamental challenge: 
\textbf{how to effectively translate reasoning computation into complex tool proficiency for LRMs}. 

We investigate Qwen3-series LRMs~\citep{yang2025qwen3} and observe a critical issue: while effective in single-turn tool use scenarios, they suffer form ``Lazy Reasoning'' in complex multi-turn contexts. The models generate extensive but meaningless reasoning processes, impeding RL optimization~\citep{yue2025does,gandhi2025cognitive,ning2025not}. We attribute this degradation to the lack of task decomposition, verified by the effectiveness of decomposition-based prompting~\citep{khot2022decomposed,least2most}. 


Motivated by this, we propose D-CORE~(\underline{\textbf{D}}ecomposing tasks and \underline{\textbf{Co}}mposing \underline{\textbf{Re}}asoning processes). This framework explicitly enforces decomposition and diversity via self-distillation and diversity-aware GRPO~(DA-GRPO). As shown in Figure~\ref{fig:d_core}, D-CORE converts inefficient reasoning cycles into effective, step-by-step processes. We first employ self-distillation to bootstrap task decomposition, organizing sub-task executions into trajectories. However, this supervised approach tends to homogenize reasoning and reduce reflection. We remedy this via DA-GRPO, which adds an entropy regularization term to the advantage function. This design balances structural decomposition with reasoning diversity, enabling the LRM to autonomously decompose tasks and execute tools.
Our main contributions are:
\begin{itemize}
\item We identify that LRMs lack the capability of sub-task decomposition in complex tool-use scenarios, leading to the phenomenon of ``Lazy Reasoning''.
\item We develop a self-distillation framework that integrates task decomposition with reasoning process composition, enabling LRMs to acquire sophisticated sequential tool use strategies during reasoning without requiring additional human annotation.
\item We propose DA-GRPO that incorporates entropy-based advantage functions to enable self-distillation LRMs to restore their reflection capabilities while maintaining task decomposition abilities, thereby addressing more complex tool use scenarios.
\end{itemize}

\begin{figure*}[htbp]
\centering
\begin{subfigure}{0.24\textwidth}
\centering
\includegraphics[width=\textwidth]{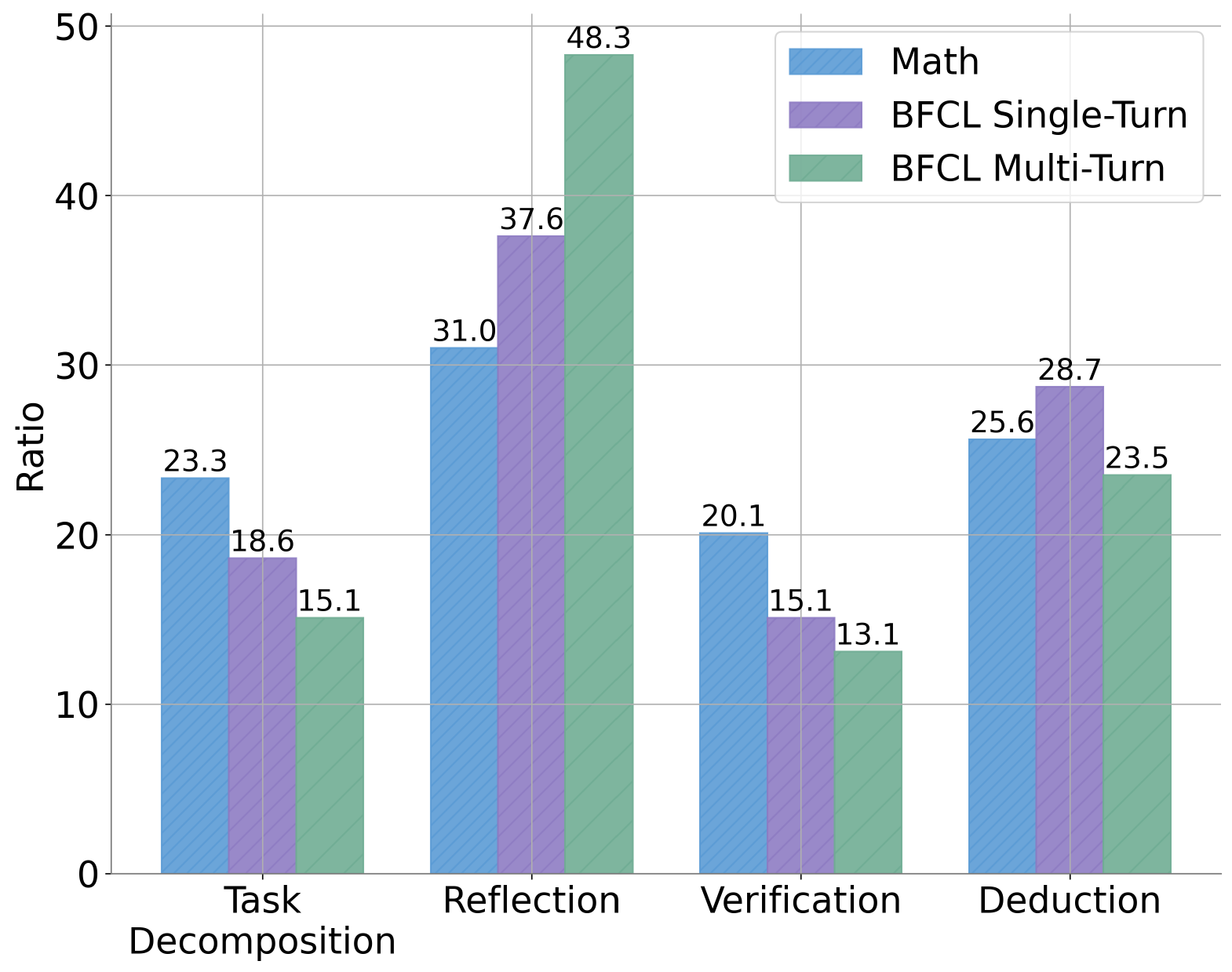}
\vspace{-3mm}
  \caption{\small Four Behaviors}
  \label{fig:motivation_a}
\end{subfigure}
\begin{subfigure}{0.24\textwidth}
\centering
\includegraphics[width=\textwidth]{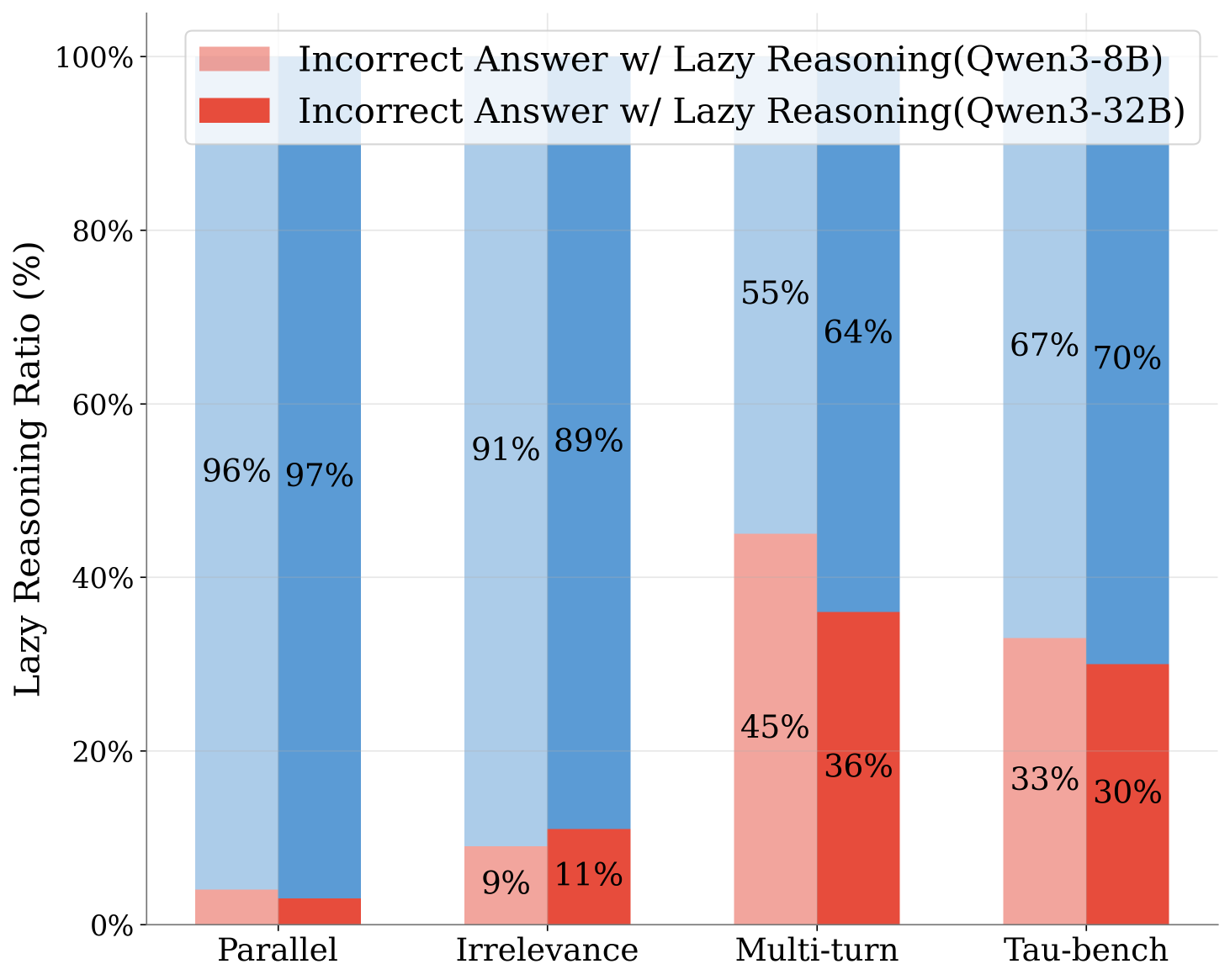}
\vspace{-3mm}
  \caption{\small Lazy Reasoning Ratio}
  \label{fig:motivation_b}
\end{subfigure}
\begin{subfigure}{0.24\textwidth}
  \centering
  \includegraphics[width=\textwidth]{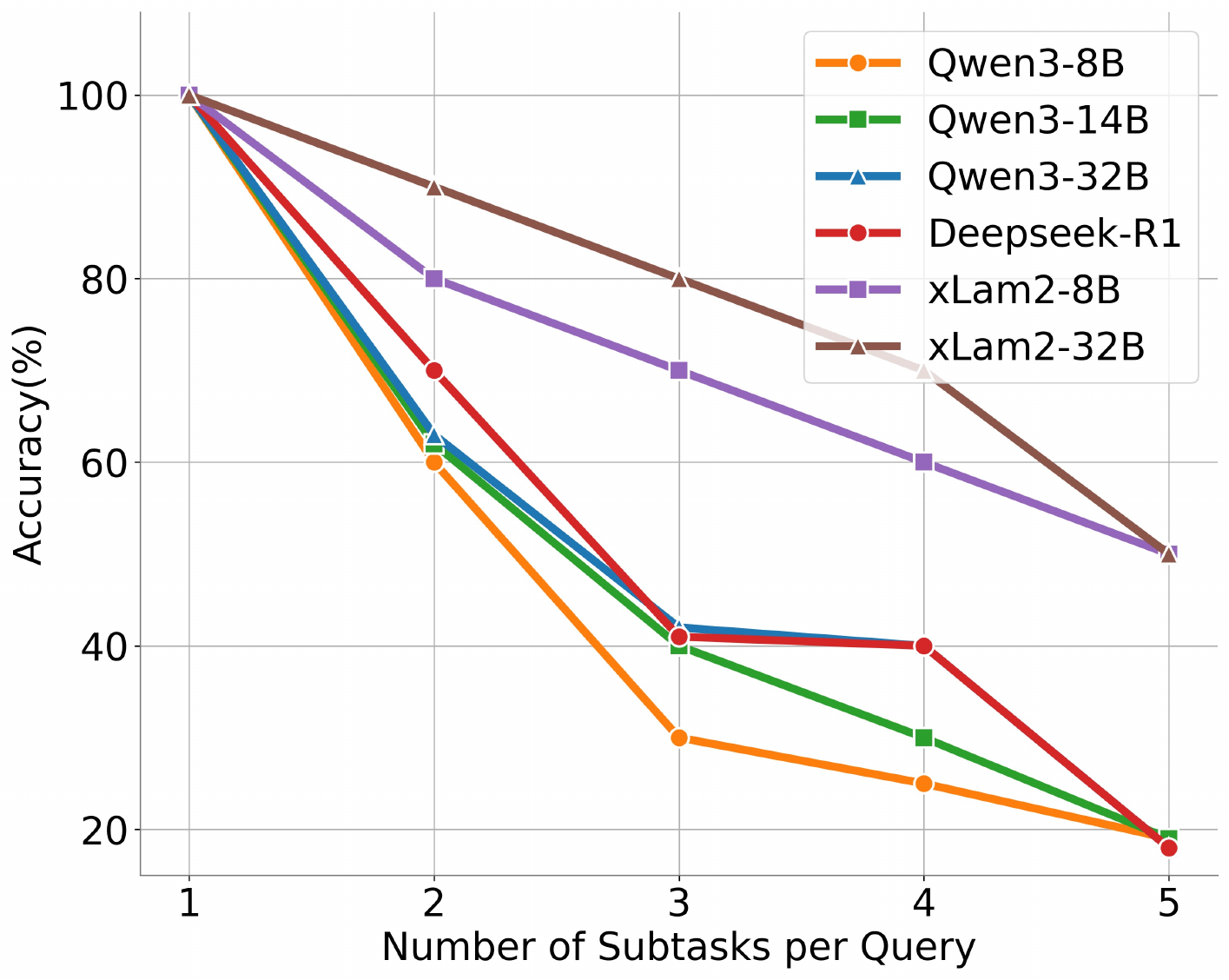}
\vspace{-3mm}
   \caption{\small Manual Composition}
  \label{fig:motivation_c}
\end{subfigure}
\begin{subfigure}{0.24\textwidth}
  \centering
  \includegraphics[width=\textwidth]{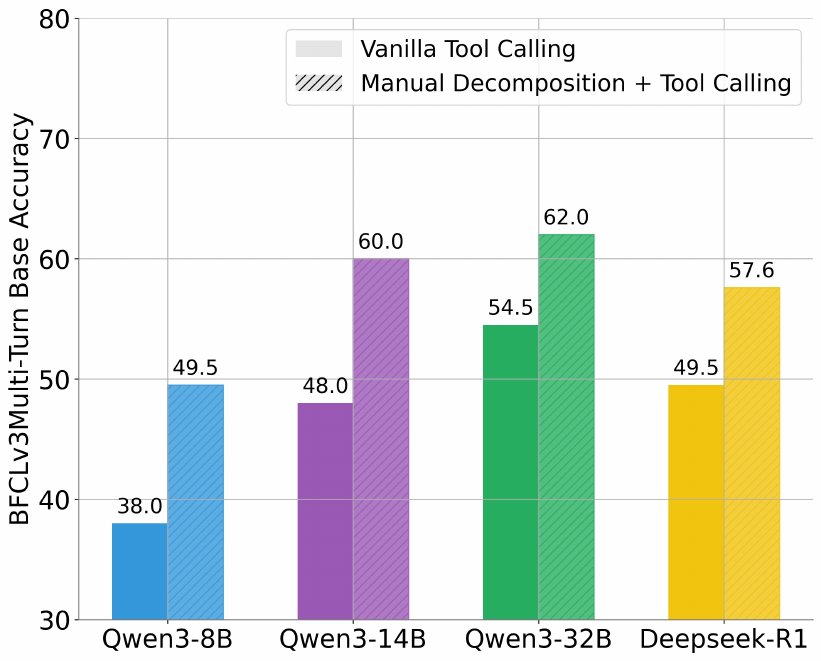}
\vspace{-3mm}
   \caption{Manual Decomposition}
   \label{fig:motivation_d}
\end{subfigure}
\caption{\textbf{(a)} Distribution of four behavioral categories across rollout samples. \textbf{(b)} Ratio of lazy reasoning in different tasks.\textbf{(c)} Accuracy degradation caused by subtask composition.\textbf{(d)} Accuracy improvement through manual task decomposition.} 
\vspace{-3mm}
\end{figure*}

\section{Tool Use Reasoning: Patterns and Limitations}
\subsection{Preliminary}
\textbf{Tool use tasks.} Tool use tasks can be categorized into single-turn and multi-turn tasks based on context dependency. Single-turn tasks can be formulated as Markov decision process $M(P, T, Q) \rightarrow \tau$,  where $P$ denotes the system policy, $T$ represents the available tool set, $Q$ corresponds to the current query, and $\tau$ represents tool call results. Upon decomposing query $Q$ into subtasks $S=\{s_1, s_2, \ldots, s_n\}$, three key scenarios for $M$ arise:
$$
M \in \begin{cases}
\text{Sequential: } s_i \text{ depends on output of } s_{i-1}, \\
\text{Parallel: } s_i \text{ can execute parrallely},  \\
\text{Irrelevant: } Q \text{ requires no tool use}.
\end{cases} 
$$
The primary challenge lies in the fact that multi-intent and tool irrelevance $Q$ make $M$ particularly challenging for tool use LLMs ~\citep{liu2024toolace,lin2024hammer}. Furthermore, multi-turn tool use scenarios~\citep{yao2024tau,patilberkeley,prabhakar2025apigen} can be formulated as a $M(P, T, C, Q) \rightarrow \tau$, where $C$ denotes the conversation history. Unlike single-turn tasks, this formulation introduces additional complexity by requiring consideration of both the current query $Q$ intent and the long-term intent embedded in $C$.

\textbf{Reasoning process.} A reasoning process $\mathcal{R}$ refers to the sequence of intermediate steps through which a LRM arrives at its final answer. Within the LRM outputs examined in this paper, reasoning processes are specifically delimited by \texttt{<think>} and \texttt{</think>} tags. We controlled LRMs' reasoning processes through prompts containing \texttt{"<think>\textbackslash n\textbackslash n</think>\textbackslash n\textbackslash n"} tags to enable ``no-think'' modes. A thought $r$ is the basic logical block in a reasoning process. Given a reasoning process $\mathcal{R}$, it can be decomposed into an ordered sequence of thoughts: $\mathcal{R} = \{r_1, r_2, . . . , r_n\}$ , where $n$ represents the total number of thoughts.


\subsection{Reasoning Process Enhances Tool Use Awareness}

To examine the impact of reasoning on tool use, we evaluate on BFCLv3~\citep{patilberkeley} (parallel, irrelevance, multi-turn) and $\tau$-bench~\citep{yao2024tau}. We benchmark the Qwen3 LRM series~\citep{yang2025qwen3} against the specialized xLAM2 series~\citep{prabhakar2025apigen}, including a ``no-think'' Qwen3 baseline to isolate reasoning effects. As shown in Figure~\ref{fig:qwen3_xlam2_compare}, standard LRMs outperform both the ``no-think'' baseline and specialized models on parallel and irrelevance tasks. This indicates that explicit reasoning effectively captures the structural dependencies between the query $Q$ and tool set $T$. However, in complex multi-turn scenarios, LRMs lag behind specialized models. This significant performance gap motivates our investigation.


\subsection{Lazy Reasoning in Tool Use}
\textbf{Reasoning behavior variations in tool use.} 
To investigate the deficiency in multi-turn scenarios, we sample 20 rollouts per query on BFCL v3 using Qwen3-8B, employing MATH-500~\citep{hendrycks2021measuring} as a reference baseline~\citep{ning2025not,bogdan2025thought,venhoff2025understanding}. Following \citet{ning2025not}, we parse each reasoning process into thoughts and categorize them into four types: (i) \textbf{Task Decomposition} (breaking down problems), (ii) \textbf{Reflection} (revising errors), (iii) \textbf{Verification} (checking results), and (iv) \textbf{Deduction} (inferring from assumptions).




Figure~\ref{fig:motivation_a} reveals a distinct contrast: compared to single-turn or math tasks, multi-turn reasoning exhibits minimal task decomposition yet excessive reflection.This implies a mode of ineffective computation: the model expends significant capacity on verbose generation while bypassing substantive structural reasoning. We term this phenomenon \textbf{Lazy Reasoning}. We quantify this behavior based on empirical thresholds for token generation and reflection frequency. As shown in Figure~\ref{fig:motivation_b}, the prevalence of Lazy Reasoning correlates strongly with failures in multi-turn interactions. Further implementation details are provided in Appendix~\ref{ap:lazy_reasoning}.

\textbf{Addressing Lazy Reasoning.}  Recent analysis~\citep{patilberkeley} attributes multi-turn failures to deficits in planning and execution tracking. This aligns with our observation. We hypothesize that Lazy Reasoning is a compensatory mechanism: models default to unproductive trial-and-error when they lack the capacity for structural decomposition. To verify this, we construct multi-subtask queries by composing single-subtask ones~\citep{chen2024facilitating}. Figure~\ref{fig:motivation_c} shows that performance degrades as the number of subtasks increases. This observation implies that decomposition complexity is a primary inducer of Lazy Reasoning.

We therefore revisit task decomposition as a remedy. Classical paradigms, such as Decomposed Prompting~\citep{khot2022decomposed} and Least-to-Most Prompting~\citep{least2most}, operate via a ``plan-and-execute'' mechanism. We adopt this strategy to provide explicit structural guidance, compelling the model to strictly follow step-by-step execution (\S\ref{ap:address lazy reasoning}). As demonstrated in Figure~\ref{fig:motivation_d}, our hypothesis is strongly corroborated: imposing appropriate decomposition structure acts as a catalyst for reasoning, effectively counteracting Lazy Reasoning and unlocking significant gains in multi-turn LRM scenarios.



\section{D-CORE}

\begin{figure*}[ht]
    \centering
    \includegraphics[width=1.0\textwidth]{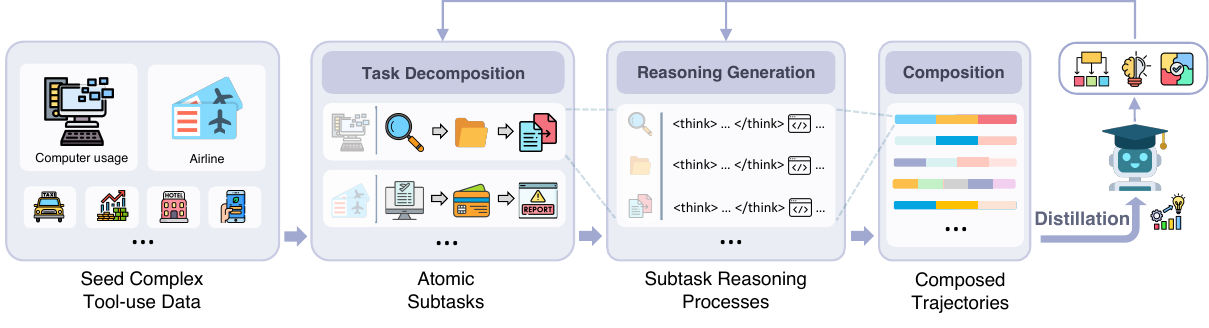}
    \caption{Overview of D-CORE self-distillation stage. Through self-distillation, a LRM with task decomposition and subtask execution capabilities is acquired.}
    \label{fig:demo}
\end{figure*}

Building on the effectiveness of decomposition-based prompting, we propose \textbf{D-CORE}, a framework that enables LRMs to autonomously master complex tool use. Our method operates in two stages: (i) self-distillation to incentivize task decomposition and subtask execution, and (ii) diversity-aware GRPO to stabilize reinforcement learning and enhance reflective reasoning.

\subsection{Incentivize Complex Task Decomposition Reasoning via Self-Distillation.}
While task decomposition is known to improve tool use, injecting this capability typically relies on powerful ``teacher'' LRMs to synthesize reasoning trajectories~\citep{huang2022large, schick2023toolformer, madaan2023self}. We challenge this reliance on external supervision. We observe that \textit{current LRMs inherently possess the capacity to generate high-quality reasoning trajectories when provided with explicit structural guidance}. Based on this insight, our self-distillation framework elicits decomposition capabilities directly from the model itself, eliminating the need for a stronger teacher.

\textbf{Task Decomposition.} We sample seed datasets and prompt LRM $\mathcal{M}$ to decompose queries $Q$  into subtasks $\mathcal{S} \leftarrow \text{Decompose}(\mathcal{C}, Q, Y^*, \mathcal{M})$,  where $\mathcal{C}=\{P,T,C\}$ represents comprehensive contextual information: system policy $P$, tool set $T$, conversation history $C$. The reference trajectories  $Y^*$ are provided along with few-shot examples to improve decomposition success rates, serving as the most critical factor for enhancing task decomposition performance. The complete prompt can be found in Appendix \ref{app:prompt}.

\textbf{Reasoning Generation.} Given the decomposed subtasks $\mathcal{S}=\{s_1,s_2,\dots,s_n\}$, we feed them into LRM $\mathcal{M}$ to generate reasoning processes and tool calls $(\mathcal{R}_i, \tau_i) \leftarrow \mathcal{M}(\mathcal{C},s_i)$. The generation process varies by scenario: sequential subtasks are processed iteratively to maintain context dependency, while parallel subtasks are handled simultaneously. For tool-irrelevant queries where decomposition are not applicable, we prompt $\mathcal{M}$ to generate explainations for why task cannot be decomposed. 

\textbf{Composition.} At composition stage, for sequential and parallel scenario, we compose subtasks $\mathcal{S}$, reasoning process $\mathcal{R}$, tool calls $\tau$ and tool response $o$ into complete reasoning trajectories $\mathcal{\hat{Y}}$ $\leftarrow \text{Compose}(\{(s_i, \mathcal{R}_i, \tau_i, o_i)\}_{i=1}^{|\mathcal{S}|})$ while $\mathcal{\hat{Y}}$ $\leftarrow \text{Compose}( \mathcal{R}, Y^*)$ for tool-irrelevant scenario. Reflection mechanisms are incorporated in Compose template for parallel and irrelevant scenarios.


\textbf{Distillation.} Based on the composed reasoning trajectories $\mathcal{\hat{Y}}$, we apply SFT on LRM $\mathcal{M}$. During the SFT, LRM acquire task decomposition and subtask execution capabilities through parameters $\theta$ by maximizing the probability $\pi_{\theta}(\mathcal{\hat{Y}}_t \mid (\mathcal{C},Q), \mathcal{\hat{Y}}_{1:t-1})$, the optimization loss function can be expressed as follows:
\begin{equation}
    \label{eq:SFT}
    \mathcal{L}_\text{self-distillation}(\theta) = -\mathbb{E}[\log \pi_{\theta}(\mathcal{\hat{Y}}_t\mid (\mathcal{C},Q), \mathcal{\hat{Y}}_{1:t-1})].
\end{equation}

In our self-distillation framework, the LRM $\mathcal{M}$ generates subtasks and corresponding reasoning traces. This process synthesizes a dataset that encapsulates the problem-solving logic of $\mathcal{M}$. We provide the specific prompts and construction details in the Appendix.
\begin{algorithm}[t!]
\caption{Decomposing Task and Composing Reasoning}
\label{alg:unified}
\textbf{Input:} Context $\mathcal{C} = \{P, T, C\}$ where $P$ is system policy, $T$ is tool set, $C$ is conversation history, query $Q$, LRM $\mathcal{M}$, reference trajectories  $Y^*$ \\
\textbf{Output:} Composed reasoning trajectories $\hat{\mathcal{Y}}$

\begin{algorithmic}[1]
\STATE $\mathcal{S} \leftarrow \textrm{Decompose}(\mathcal{C}, Q, Y^*, \mathcal{M})$ 
\RETURN $\emptyset$ \textbf{if} $|\mathcal{S}| \not= Y^*$

\IF{$\mathcal{S}$ is \textsc{Sequential}} 
    \STATE $\mathcal{I}_0 \leftarrow \text{InitInput}(\mathcal{C})$
    \FOR{$i = 1, \ldots,|\mathcal{S}|$}\STATE $(\mathcal{R}_i, \tau_i) \leftarrow \mathcal{M}(\mathcal{I}_{i-1}, s_i)$ 
    \STATE $o_i \leftarrow \text{Execute}(\tau_i)$ 
    \STATE $\mathcal{I}_i \leftarrow \text{Update}(\mathcal{I}_{i-1}, \tau_i, o_i)$
    \ENDFOR
    \STATE $\mathcal{\hat{Y}} \leftarrow \text{Compose}_{\text{seq}}(\{(s_i, \mathcal{R}_i, \tau_i, o_i)\}_{i=1}^{|\mathcal{S}|})$

\ELSIF{$\mathcal{S}$ is \textsc{Parallel}}
    \FOR{$i = 1, \ldots,|\mathcal{S}|$}\STATE
    $(\mathcal{R}_i, \tau_i) \leftarrow \mathcal{M}(\mathcal{C},s_{i})$ 
    \ENDFOR
    \STATE $\mathcal{\hat{Y}} \leftarrow \text{Compose}_{\text{par}}( \{(s_i,\mathcal{R}_i, \tau_i)\}_{i=1}^{|\mathcal{S}|})$

\ELSE 
    \STATE $\mathcal{R} \leftarrow \mathcal{M}(\mathcal{C},Q)$
    \STATE $\mathcal{\hat{Y}} \leftarrow \text{Compose}_{\text{irr}}(\mathcal{R}, Y^*)$
\ENDIF
\RETURN $\mathcal{\hat{Y}}$ \textbf{if} \text{Verify}($\mathcal{\hat{Y}}$, $Y^*$) \textbf{else} $\emptyset$
\end{algorithmic}
\end{algorithm}

\subsection{Diversity-Aware GRPO}

\begin{figure}
\centering
\begin{subfigure}{0.58\linewidth}\centering
  \includegraphics[width=\textwidth]{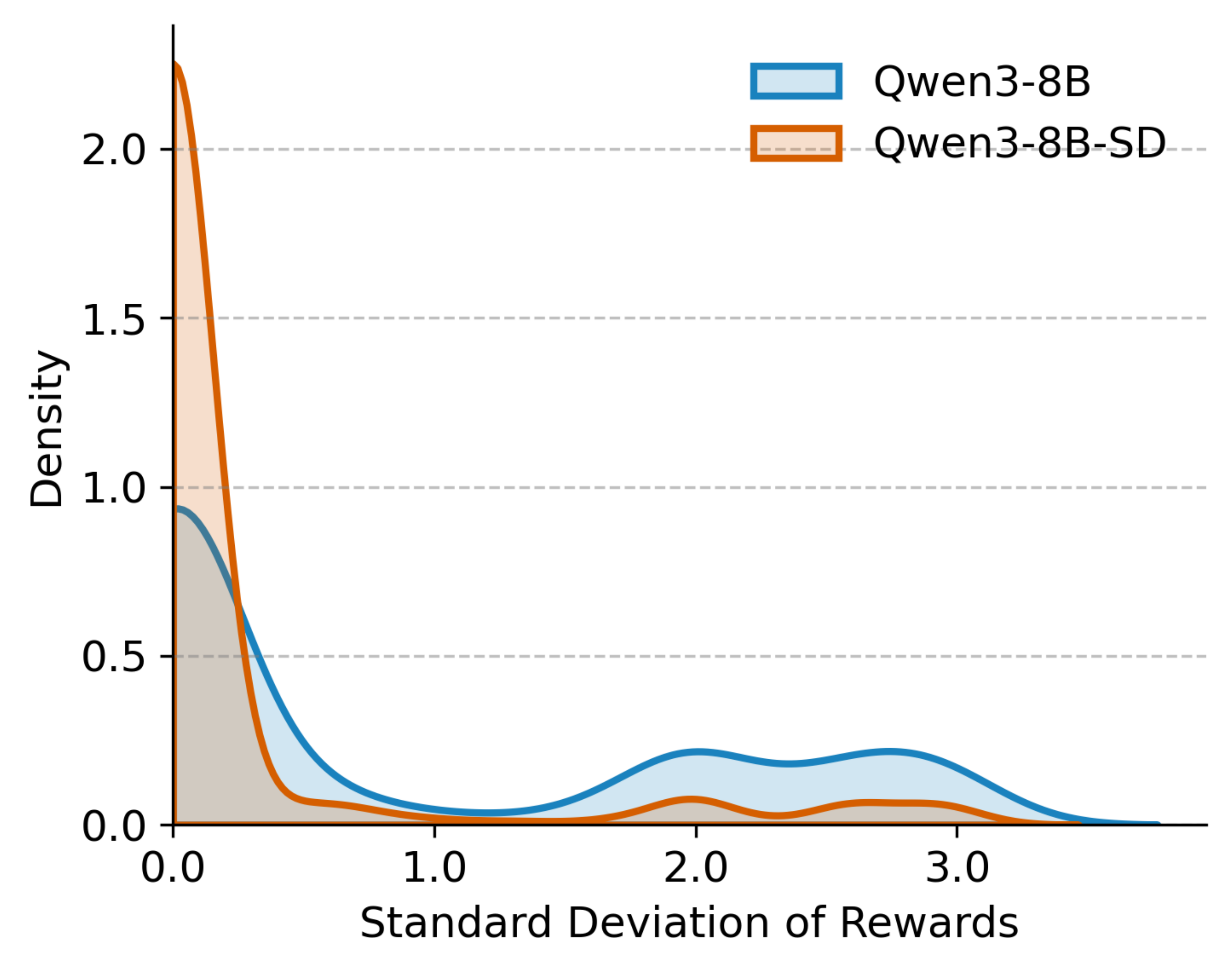}
  \vspace{-5mm}
  \caption{Std Distribution}
  \label{fig:self_distillation_std}
\end{subfigure}
\hfill
\begin{subfigure}{0.4\linewidth}
  \centering
  \includegraphics[width=\textwidth]{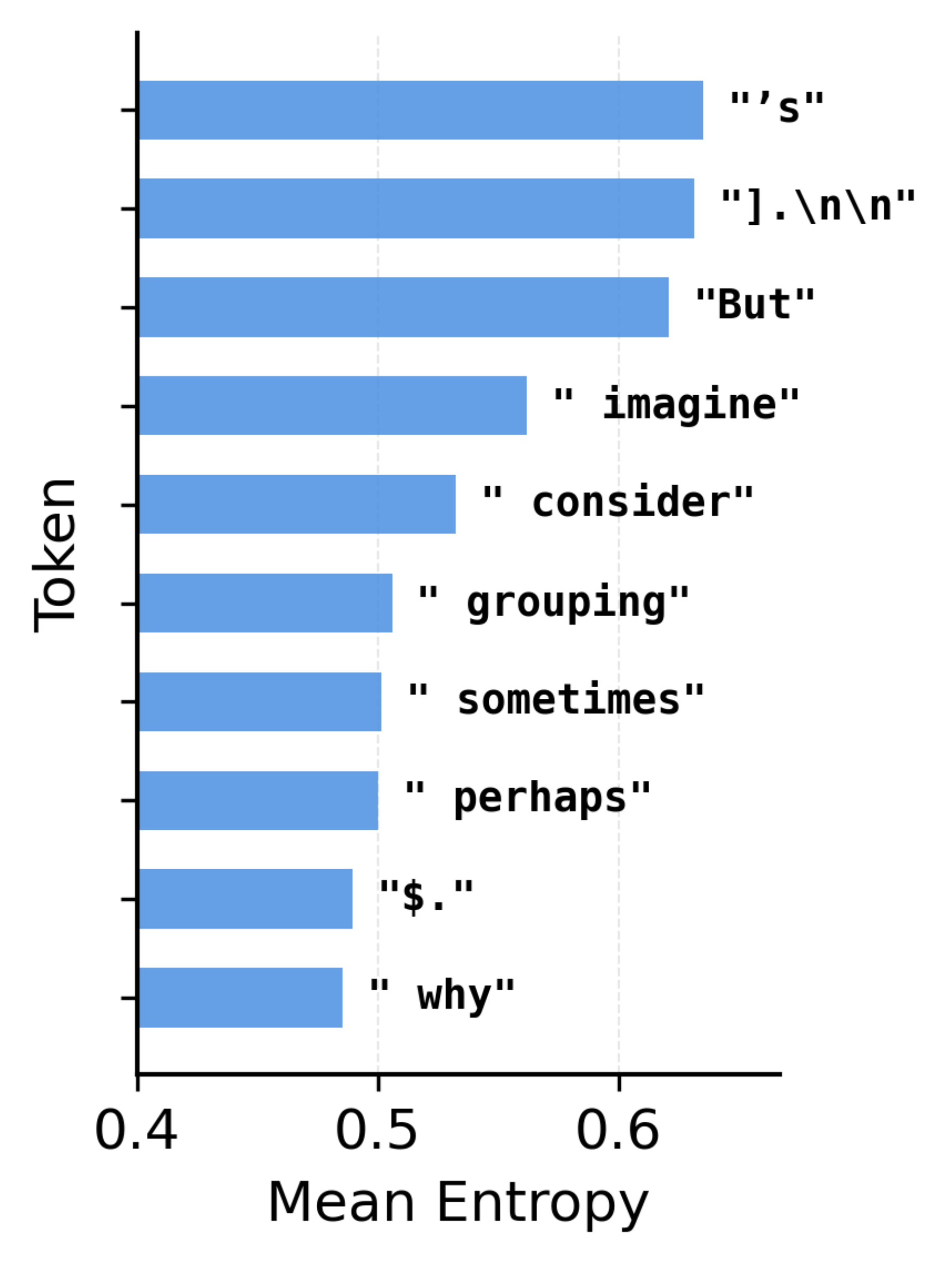}
  \vspace{-5mm}
  \caption{Token Entropy}
  \label{fig:high_entropy_token}
\end{subfigure}
\caption{ \textbf{(a)} Effect of self-distillation on reward variance. SD: self-distillation. \textbf{(b)} Top 10 high-entropy tokens for Qwen3-8B-SD.}
\label{fig:motivation}
\vspace{-10pt} 
\end{figure}

Although self-distillation improves task decomposition, it suppresses reflective reasoning and exploration. This reduction in diversity is evidenced by the near-zero reward standard deviation of the self-distilled model (Figure~\ref{fig:self_distillation_std}). 
Such low variance hinders GRPO optimization, where the advantage $A_{i,t}$, which depends on the deviation from mean reward, becomes negligible when all rewards are nearly identical:
\begin{equation}
A_{i,t} = \frac{R_i - \text{mean}(\{R_i\}_{i=1}^G)}{\text{std}(\{R_i\}_{i=1}^G)},
\label{eq:adavantage}
\end{equation}
where $G$ is the rollouts number. Omitting the KL penalty, the GRPO objective is formulated as:
\begin{equation}
    \label{eq:GRPO}
    \begin{aligned}
    \mathcal{J}_\text{GRPO}(\theta) &= \mathbb{E}[\text{min}( r_{i,t} {A}_{i,t}, \text{clip}(r_{i,t}, 1 - \epsilon, 1 + \epsilon)A_{i,t})].
    \end{aligned}
\end{equation}
For clarity, we omit the clipping term in subsequent derivations. The resulting policy gradient of GRPO can be expressed as:
\begin{equation}
\small 
\nabla_\theta \mathcal{J}_{\text{GRPO}} = \frac{1}{G}\Bigg(\sum_{\substack{(i,t)}} \frac{1}{|y_i|} r_{i,t}(\theta) A_{i,t} \nabla_\theta \log \pi_\theta(y_{i,t} | x_i, y_{i,<t}) \Bigg),
\end{equation}
where
\begin{equation}
r_{i,t}(\theta) = \frac{\pi_{\theta}(y_{i,t}\mid x_i,y_{i,<t})}{\pi_\text{old}(y_{i,t}\mid x_i,y_{i,<t})},
\end{equation}
We observe a critical degeneracy: as the generation variance diminishes, the advantage $A_{i,t}$ vanishes, causing the gradient signal to collapse. This necessitates an alternative formulation. Motivated by recent evidence that high-entropy tokens correlate with reflective reasoning~\citep{wang2025beyond,cheng2025reasoning}—a behavior we corroborate in Figure~\ref{fig:high_entropy_token}—we investigate the following question: \textit{Can an entropy-aware advantage formulation mitigate gradient collapse and sustain reasoning diversity?}

Guided by this observation, we propose \textbf{Diversity-Aware GRPO}. By reshaping the advantage with an entropy-based term, our method mitigates gradient collapse while explicitly incentivizing complex reasoning. The formulation is defined as:
\begin{align}
\label{equ:advantage}
\hat{A}_{i,t} = \begin{cases}
\psi(\mathcal{H}_{i,t}) & \text{if } A_{i,t} < \zeta,  \\
A_{i,t} & \text{otherwise},
\end{cases}
\end{align}
where $\zeta$ is a small constant for numerical stability (e.g., $\zeta = 10^{-8}$) and $\psi(\mathcal{H}_{i,t})$ is an entropy-based advantage term:
\begin{equation}
\psi(\mathcal{H}_{i,t})=\min({\alpha} \cdot {\mathcal{H}^{\mathrm{detach}}_{i,t}},~\delta),
\label{eq:da-grpo-advantage}
\end{equation}
in which larger $\alpha$ promotes high-entropy token generation, while $\delta$ bounds the entropy advantage. The entropy term $\mathcal{H}^{\mathrm{detach}}_{i,t}$ is detached during backpropogation, serving as a fixed offset that adjust update magnitude without affecting gradients. It is computed over the token vocabulary $\mathcal{V}$ as:
\begin{equation}
\mathcal{H}^{\mathrm{detach}}_{i,t}=-\sum_{v\in \mathcal{V}}\pi_\theta(v|x_i,y_{i,<t})\log\pi_\theta(v|x_i,y_{i,<t}).
\end{equation}
The key idea is to scale each token’s gradient update in proportion to its detached entropy $\mathcal{H}^{\mathrm{detach}}_{i,t}$ when std approaches zero. The objective function of DA-GRPO is formulated as:
\begin{equation}
    \label{eq:GRPO}
    \small 
    \begin{aligned}
    \mathcal{J}_\text{DA-GRPO}(\theta) &= \mathbb{E}[\text{min}( r_{i,t}(\theta)\hat{A}_{i,t}, \text{clip}(r_{i,t}(\theta), 1 - \epsilon, 1 + \epsilon)\hat{A}_{i,t})\\
    & \quad- \lambda \mathbb{D}_\text{KL}[\pi_\theta||\pi_\text{ref}]].
    \end{aligned}
\end{equation}
Omitting the KL divergence term, the policy gradient of DA-GRPO is formulated as:
\begin{equation}
\nabla_\theta \mathcal{J}_{\text{DA-GRPO}} = \mathbb{E}[r_{i,t}(\theta) \hat{A}_{i,t} \nabla_\theta \log \pi_\theta(y_{i,t} | x_i, y_{i,<t})].
\end{equation}

We partition the sample indices into two disjoint sets based on the advantage values:
\begin{align}
\mathcal{T}_{\neq 0} &= \{(i,t) : \hat{A}_{i,t} \neq 0\}, \\
\mathcal{T}_{=0} &= \{(i,t) : \hat{A}_{i,t} = 0\}.
\end{align}
Accordingly, the policy gradient can be decomposed into two distinct components:
\begin{equation}
\begin{aligned}
&\nabla_\theta \mathcal{J}_{\text{DA-GRPO}} = \nabla_\theta \mathcal{J}_{\text{GRPO}}\\
&\quad + \underbrace{\frac{1}{G}\sum_{(i,t) \in \mathcal{T}_{=0}} \frac{1}{|y_i|} r_{i,t}(\theta) \psi(\mathcal{H}_{i,t}) \nabla_\theta \log \pi_\theta(y_{i,t} | x_i, y_{i,<t})}_{\text{Entropy Advantage Term}}.
\end{aligned}
\end{equation}
\begin{table*}[!ht]
\small
\setlength{\tabcolsep}{4pt}
\centering
\setlength{\tabcolsep}{0.5mm}
\caption{\small Comparison of the accuracy on BFCLv3 and $\tau$-Bench. The best results are \textbf{bolded}.}
\label{main_table_1}
\newcolumntype{C}[1]{>{\centering\arraybackslash}p{#1}}
\renewcommand\theadalign{cc}

\begin{tabular}{@{}C{2.8cm}|C{1.45cm}C{1.5cm}C{1.5cm}C{1.5cm}C{1.5cm}C{1.5cm}|C{1.5cm}C{1.5cm}C{1.5cm}@{}}
\toprule
     \textbf{Model} &\multicolumn{6}{c}{\textbf{BFCLv3}}  & \multicolumn{3}{c}{\textbf{$\tau$-Bench}}  \\
     \midrule
     & \thead{Average} & \thead{Live} & \thead{Non-Live} & \thead{Relevance} & \thead{Irrelevance} & \thead{Multi-Turn} & \thead{Average} &\thead{Retail} & \thead{Airline}\\
   \hline
   \multicolumn{10}{c}{Proprietary Model Results} \\
   \hline
   Claude-3.7-Sonnet & 58.6 & 78.4 & 41.3 & 81.4 & 72.2 & 48.4 & 59.8 & \textbf{78.3} & 41.2 \\
   DeepSeek-R1 & 63.8 & 77.3 & 75.5 & 73.3 & 77.8 & 38.9 & 47.8 & 55.6 & 40.0 \\
    o1 & 67.8 & 80.6 & 85.7  & 87.8 & 72.2 & 36.1 & \textbf{63.9} & 73.5 & \textbf{54.2} \\
   GPT-4o & 71.7 & 78.8 & 86.8  & 81.3 & 83.3 &  50.0 & 52.9 & 62.8 & 43.0 \\
   \hline
   \multicolumn{10}{c}{Open-Source Model Results} \\
   \hline
    Qwen3-8B & 66.3 & 78.5 & 88.8 & 79.1 & 77.8 & 33.0 & 29.0 & 34.7 & 23.2 \\
   Qwen3-14B & 65.9  & 80.0 & 88.2  & 81.0 & 72.2 & 36.6 & 33.6 & 41.6 & 25.6 \\
   Qwen3-32B & 69.2 & 77.3 & 88.9 & 75.8 & 72.2 & 43.1 & 36.6 & 39.6 & 33.6 \\
   xLAM2-8B & 72.0 & 66.7 & 84.4 & 64.3 & 77.8 & 69.1 & 42.4 & 50.7 & 34.0 \\
   xLAM2-32B & 76.4 & 74.2 & 89.3 & 76.7 & \textbf{88.9} & 67.1 & 45.0 & 52.5 & 37.6 \\
   xLAM2-70B & 78.4 & 72.9 & 88.4 & 78.9 & 66.7 & \textbf{75.0} & 48.3 & 57.7 & 38.8\\
   \hline
    \multicolumn{10}{c}{Custom-Trained Model Results} \\
   \hline
   ToolRL-Qwen3-8B & 65.9 & 82.4 & 88.9 & 85.9 & 77.8 & 26.8 & 31.5 & 40.6 & 22.4\\
   ToolRL-Qwen3-14B & 68.5 & 81.5 & \textbf{89.4} & 81.9 & 72.2 & 35.8 & 37.8 & 49.5 & 26.0 \\
   \rowcolor[HTML]{EAE8FD} D-CORE-8B(ours) & 77.7 & 82.2 & 86.4 & 88.2 & 77.8 & 63.8 & 47.6 & 50.7 & 44.4 \\
   \textcolor{gray}{\textit{$\Delta$ from Qwen3-8B}} & \small{\textcolor{blue}{+11.4}} & \small{\textcolor{blue}{+3.7}} & \small{\textcolor[HTML]{006400}{-2.4}} & \small{\textcolor{blue}{+9.1}} & \textcolor{gray}{0.0} & \small{\textcolor{blue}{+30.8}} & \small{\textcolor{blue}{+18.6}} & \small{\textcolor{blue}{+16.0}} & \small{\textcolor{blue}{+21.8}}\\
   \rowcolor[HTML]{EAE8FD} D-CORE-14B(ours) & \textbf{79.3} & \textbf{82.9} & 87.2 & \textbf{89.2} & 72.2 & 67.4 & 51.3 & 56.5 & 46.0\\
   \textcolor{gray}{\textit{$\Delta$ from Qwen3-14B}} & \small{\textcolor{blue}{+13.4}} & \small{\textcolor{blue}{+2.9}} & \small{\textcolor[HTML]{006400}{-1.0}} & \small{\textcolor{blue}{+8.4}} & \textcolor{gray}{0.0} & \small{\textcolor{blue}{+30.8}} & \small{\textcolor{blue}{+17.7}} & \small{\textcolor{blue}{+14.9}} & \small{\textcolor{blue}{+20.4}}\\  
\bottomrule
\end{tabular}
\end{table*}

\vspace{-10pt}
\begin{theorem}[Prevention of Learning Stagnation]
\label{thm:stagnation}
Let $\mathcal{T}_{=0} \neq \emptyset$ be the set of positions where $A_{i,t} = 0$. If there exists $(i,t) \in \mathcal{T}_{=0}$ such that:
\begin{enumerate}
    \item $r_{i,t}(\theta) \neq 0$
    \item $\psi(\mathcal{H}_{i,t}) > 0$ (i.e., $\pi_\theta(\cdot|q, o_{i,<t})$ is not degenerate)
\end{enumerate}
Then $\|\nabla_\theta \mathcal{J}_{\text{DA-GRPO}}\| > 0$, ensuring continued learning.
\end{theorem}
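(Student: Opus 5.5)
The plan is to start from the gradient decomposition displayed just before the theorem,
$\nabla_\theta \mathcal{J}_{\text{DA-GRPO}} = \nabla_\theta \mathcal{J}_{\text{GRPO}} + (\text{Entropy Advantage Term})$,
and to treat the two pieces separately. I will reduce the claim to showing that the entropy advantage term is a nonzero vector.

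\textbf{Step 1: reduce to the stagnation regime.} This is the setting the theorem targets: the group reward standard deviation collapses, so every $A_{i,t}$ in the group vanishes. By Eq.~\eqref{eq:adavantage} each such position then falls into the first branch of Eq.~\eqref{equ:advantage}, and the plain GRPO contribution from these positions is zero. If some positions still carry $A_{i,t}\neq 0$, I will argue that their GRPO contribution and the entropy contribution come from disjoint index sets. In that case it suffices that the total is nonzero, and I will handle it with the same argument as below, run on the combined weighted sum. So the core object is
\[
g \;=\; \frac{1}{G}\sum_{(i,t)\in\mathcal{T}_{=0}} w_{i,t}\,\nabla_\theta \log \pi_\theta(y_{i,t}\mid x_i,y_{i,<t}),
\qquad
w_{i,t}=\frac{1}{|y_i|}\,r_{i,t}(\theta)\,\psi(\mathcal{H}_{i,t}).
\]
Since $\pi_\theta,\pi_{\text{old}}>0$, every ratio satisfies $r_{i,t}>0$, and $\psi\ge 0$ termwise. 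By hypotheses~1 and~2 at least one weight $w_{i,t}$ is strictly positive. Because $\mathcal{H}^{\mathrm{detach}}$ is detached, $w_{i,t}$ acts as a constant coefficient and needs no differentiation.

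\textbf{Step 2: each individual term is nonzero.} I will apply the chain rule through the softmax. The gradient of $\log\pi_\theta(y\mid\cdot)$ with respect to the logits $z$ at that position is $e_{y}-\pi_\theta(\cdot\mid\cdot)$. Hypothesis~2 says $\psi(\mathcal{H}_{i,t})>0$, so $\mathcal{H}_{i,t}>0$ and $\pi_\theta$ is not one-hot. Hence $e_{y_{i,t}}-\pi_\theta\neq 0$, since its $y_{i,t}$-coordinate is $1-\pi_\theta(y_{i,t})>0$. I will then restrict attention to the output-layer parameters, the unembedding matrix or the logit bias. For these the Jacobian from parameters to logits is surjective onto directions of the form $e_y-\pi$, so the parameter-gradient of that term is nonzero.

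\textbf{Step 3 (main obstacle): ruling out cancellation across positions.} A sum of nonzero vectors with positive weights can still vanish, so Step~2 alone is not enough. I will handle this by grouping positions by context, i.e., by the prefix $(x_i,y_{i,<t})$.
\begin{itemize}
\item Under a context-separable (tabular-logit) view, different contexts contribute to independent logit blocks. Cancellation can then occur only within a single context.
\item Within one context $c$ the combined logit-gradient is $\sum_k w_k\,(e_{y_k}-\pi_c)$. This vanishes if and only if $\pi_c=\sum_k w_k e_{y_k}/\sum_k w_k$, that is, iff the policy exactly equals the weighted empirical distribution of the sampled tokens.
\item A single sample at a non-degenerate context can never satisfy this, because a one-hot vector cannot equal a non-degenerate $\pi_c$. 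With several samples the equality is a measure-zero coincidence.
\end{itemize}
For a general shared-parameter network I would first try to exhibit a single parameter direction $u$ whose directional derivative is strictly positive. I would try the logit-bias direction $u=e_{y^\star}$ for a token $y^\star$ sampled at a position of maximal weight. If that fails, I will state the result under a mild genericity assumption, namely that the exact-matching coincidence above does not occur. This is the step where the theorem as stated implicitly needs an extra hypothesis, and I expect the proof to make that hypothesis explicit.
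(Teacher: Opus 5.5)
\textbf{Verdict.} Your proposal does not prove the statement as written, but that is because the statement is false as written. You correctly diagnose that an extra hypothesis is needed. Your Step~3 is precisely where the paper's own proof breaks.

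\textbf{Comparison with the paper.} The paper, like you, only treats the regime $\mathcal{T}_{\neq 0}=\emptyset$. It asserts without argument that a single qualifying term $g_{i,t}$ is nonzero; your output-layer computation in Step~2 supplies that missing justification. It then concludes via $\|\nabla_\theta \mathcal{J}_{\text{DA-GRPO}}\|\ge\|g_{i,t}\|>0$. That inequality does not hold for a sum of vectors, and it is exactly the cancellation you worry about.

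\textbf{Counterexample.} Take a two-token vocabulary $\{a,b\}$ and a prompt $x$ with $\pi_\theta(\cdot\mid x)=(\tfrac12,\tfrac12)$. Take $G=2$ one-token rollouts $y_1=a$, $y_2=b$ with equal rewards, so $A_{1,1}=A_{2,1}=0$. Sample on-policy, so $r_{1,1}=r_{2,1}=1$. Both positions satisfy hypotheses~1 and~2 with $\psi=\min(\alpha\log 2,\delta)>0$. Yet the gradient with respect to the logits $z$ at $x$ is
\[
\nabla_z \mathcal{J}_{\text{DA-GRPO}} = \tfrac{\psi}{2}\bigl[(e_a-\pi_\theta)+(e_b-\pi_\theta)\bigr] = \tfrac{\psi}{2}\bigl[(\tfrac12,-\tfrac12)+(-\tfrac12,\tfrac12)\bigr] = 0 .
\]
By the chain rule, the parameter gradient then vanishes for every parametrization. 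This is exactly your condition $\pi_c=\sum_k w_k e_{y_k}/\sum_k w_k$. So an extra hypothesis is unavoidable, and on this point your proposal is more careful than the paper.

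\textbf{Repairs to your own plan.}
\begin{enumerate}
\item \emph{Mixed case.} Disjointness of $\mathcal{T}_{\neq 0}$ and $\mathcal{T}_{=0}$ says nothing about cancellation in parameter space. Moreover, the GRPO weights $A_{i,t}$ can be negative, so your positive-weight characterization no longer applies. State the result only for the all-zero regime, as the paper's proof implicitly does.
\item \emph{The direction $e_{y^\star}$ need not work.} Its directional derivative is $\frac{1}{G}\sum w_{i,t}\bigl(\mathds{1}[y_{i,t}=y^\star]-\pi_\theta(y^\star\mid x_i,y_{i,<t})\bigr)$, which has no fixed sign. Use instead a token $v$ that is never sampled at a positive-weight position. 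With a context-independent logit bias $b$, every term contributes $-w_{i,t}\,\pi_\theta(v\mid x_i,y_{i,<t})\le 0$. At least one term is strictly negative, because some $w_{i,t}>0$ and softmax has full support. Hence $\partial_{b_v}\mathcal{J}_{\text{DA-GRPO}}<0$, and no cancellation is possible. Without an output bias, perturb the unembedding row as $W_v\mapsto W_v+\epsilon u$. This gives the same conclusion whenever the final hidden states at the positive-weight positions lie in a common open half-space $\{h:u^\top h>0\}$.
\item \emph{Sharpening your ``measure-zero'' remark.} The same full-support observation shows that, in the tabular view, cancellation within a context requires every vocabulary token to be sampled there.
\end{enumerate}

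\textbf{Suggested hypothesis.} You can replace the vague genericity assumption by a concrete one: all $A_{i,t}=0$, a shared-bias or half-space condition, and an unsampled token. The counterexample above samples the entire vocabulary, which shows the last condition cannot be dropped.
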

\begin{theorem}[Entropy Reduction Property of DA-GRPO]
\label{thm:entropy_reduction}
When $A_{i,t} = 0$ for some token position $(i,t)$, DA-GRPO encourages the generation of high-entropy tokens by reducing their entropy. Specifically, for $(i,t) \in \mathcal{T}_{=0}$, the gradient contribution is:
\begin{equation}
\nabla_\theta \mathcal{J}_{i,t} = \frac{1}{G}\sum_{(i,t)}\frac{1}{|y_i|} r_{i,t}(\theta) \psi(\mathcal{H}_{i,t}) \nabla_\theta \log \pi_\theta(y_{i,t} | x_i, y_{i,<t})
\end{equation},
where $\mathcal{H}_{i,t} = -\log \pi_{\theta_{\text{old}}}(y_{i,t} | x_i, y_{i,<t})$ is the detached cross-entropy. Since $r_{i,t}(\theta) > 0$ and tokens with higher $\mathcal{H}_{i,t}$ (lower probability under $\pi_{\theta_{\text{old}}}$) receive stronger positive gradients, DA-GRPO preferentially increases the probability of high-entropy tokens, thereby reducing their entropy and making them more likely to be generated.
\end{theorem}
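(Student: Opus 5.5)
The plan is to restrict attention to positions $(i,t)\in\mathcal{T}_{=0}$ and read off the gradient there from the decomposition of $\nabla_\theta \mathcal{J}_{\text{DA-GRPO}}$ given just before \cref{thm:stagnation}. For such positions the original advantage is zero, so by \eqref{equ:advantage} the reshaped advantage is $\hat{A}_{i,t}=\psi(\mathcal{H}_{i,t})$. Substituting this into the DA-GRPO policy gradient (clipping and KL omitted, with the same $\tfrac{1}{G}\tfrac{1}{|y_i|}$ normalisation as GRPO) gives exactly the displayed contribution $\nabla_\theta\mathcal{J}_{i,t}$. This is precisely the ``Entropy Advantage Term'', so the first claim is bookkeeping.

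Next I establish the signs and the monotonicity of the per-token weight
\[
c_{i,t}=\tfrac{1}{G|y_i|}\,r_{i,t}(\theta)\,\psi(\mathcal{H}_{i,t}).
\]
\begin{itemize}
\item $r_{i,t}(\theta)>0$: the token $y_{i,t}$ was sampled from $\pi_{\text{old}}$, so its old probability is positive, and the current softmax probability is also positive.
\item $\psi=\min(\alpha\mathcal{H},\delta)$ with $\alpha>0$ and $\mathcal{H}\ge 0$ is nonnegative and nondecreasing in $\mathcal{H}$. It is strictly increasing below the cap $\delta/\alpha$.
\end{itemize}
Hence $c_{i,t}\ge 0$, and $c_{i,t}$ is larger for tokens with larger $\mathcal{H}_{i,t}=-\log\pi_{\theta_{\text{old}}}(y_{i,t}\mid\cdot)$, i.e.\ lower old probability.

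To turn this into a statement about probabilities, I take one gradient-ascent step $\theta'=\theta+\eta\nabla_\theta\mathcal{J}_{i,t}$ and expand to first order:
\[
\Delta\log\pi_\theta(y_{i,t}\mid\cdot)\approx \eta\, c_{i,t}\,\|\nabla_\theta\log\pi_\theta(y_{i,t}\mid\cdot)\|^2\ \ge 0 .
\]
To make this clean I isolate the single position, or equivalently work in the logit (softmax) parameterisation. There $\nabla_z\log\pi(y)=e_y-\pi$, which is nonzero whenever $\pi$ is non-degenerate; this links to condition 2 of \cref{thm:stagnation}. The increase is strictly positive and scales with $\psi(\mathcal{H}_{i,t})$. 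So high-surprisal tokens receive the strongest push, and their generation probability rises, i.e.\ they become more likely to be generated.

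The main obstacle is the phrase ``reducing their entropy''. If it were read as the Shannon entropy of the full next-token distribution, the logit update $\Delta z=c(e_y-\pi)$ yields
\[
\Delta H\approx -c\Big[\pi_y(\log\pi_y+H)-\textstyle\sum_k\pi_k^2(\log\pi_k+H)\Big],
\]
whose sign is not determined in general. For a low-probability $y$ this can even be an increase. I will therefore prove the claim in the sense the theorem itself fixes, namely for the per-token cross-entropy $\mathcal{H}_{i,t}=-\log\pi(y_{i,t}\mid\cdot)$. The first-order computation above shows this quantity strictly decreases, at a rate proportional to $\psi(\mathcal{H}_{i,t})$. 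I would add a remark that the statement about the full distribution's entropy holds only under extra assumptions, e.g.\ $\log\pi_y+H>0$.
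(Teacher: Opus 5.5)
Your proposal is correct and follows the same route as the paper's proof. Like the paper, you isolate the entropy-advantage term, use $r_{i,t}(\theta)>0$ and the monotonicity of the weight in $\mathcal{H}_{i,t}=-\log\pi_{\theta_{\text{old}}}(y_{i,t}\mid x_i,y_{i,<t})$, and read ``reducing entropy'' as a decrease of the per-token surprisal $-\log\pi_\theta(y_{i,t}\mid x_i,y_{i,<t})$; you are more careful than the paper about the cap $\delta$ in $\psi$ (the paper's ``proportional to $\mathcal{H}_{i,t}$'' holds only below $\delta/\alpha$) and about the first-order step.

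One correction to your closing remark: $\log\pi_y+H>0$ is necessary but not sufficient for the full Shannon entropy to fall. The first-order condition is $\pi_y(\log\pi_y+H)>\sum_k\pi_k^2(\log\pi_k+H)=\mathrm{Cov}_{k\sim\pi}(\pi_k,\log\pi_k)\ge 0$. Moreover, whenever $-\log\pi_y>H$, which covers precisely the high-surprisal tokens the theorem targets, the full entropy actually increases to first order.
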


The reward employed in DA-GRPO aligns with that of ToolRL~\citep{qian2025toolrl}(Appendix~\ref{ap:reward_function}):
\begin{gather}
R_i = R_{\text{format}} + R_{\text{struct}} + R_{\text{key}} + R_{\text{value}}.
\label{eq:reward}
\end{gather}


\section{Experiments}
\subsection{Experimental Setups}

We collect data from two sources: subsets of open-source datasets~\citep{liu2024toolace,liu2024apigen, prabhakar2025apigen} and trajectories of our custom-built tool use agent, ensuring coverage across diverse domains and difficulty levels. From these sources, we generate 40,000 self-distillation instances and sample 5,000 instances for RL. To facilitate reproducibility, we release the full dataset and the data generation pipeline. We use Qwen3-8B/14B~\citep{yang2025qwen3} as the backbone LRM, with packing technique~\citep{xu2024mixture} for self-distillation and verl framework~\citep{sheng2024hybridflow} for RL training.  We evaluate on two realistic agent benchmarks: BFCLv3 ~\citep{patilberkeley} and $\tau$-bench ~\citep{yao2024tau}. We compare against open-source models~\citep{Deepseek-R1, liu2024toolace, prabhakar2025apigen}, as well as closed-source LRMs and LLMs~\citep{openai_o1,Anthropic2025Claude3.7,gemini2.0-pro, gpt4o-0513}. We implement the data generation pipeline (Algorithm \ref{alg:unified}) on a single NVIDIA A100 GPU. Generating 40k samples takes 25 hours (8B) and 30 hours (14B). The D-CORE training requires 22 hours (8B) and 38 hours (14B). Both self-distillation and DA-GRPO are trained for 3 epochs on 8 x A100. Details are provided in Appendix \ref{ap:experiments}.

\vspace{-2mm}
\subsection{Main Results}
\textbf{$\tau$-bench.} As shown in Table~\ref{main_table_1}, D-CORE achieves substantial gains on $\tau$-bench, improving accuracy by 18.6\% for Qwen3-8B and 17.7\% for Qwen3-14B, validating the effectiveness and scalability of our approach. Notably, D-CORE-14B excels in the $\tau$-airline task with the highest accuracy of 46.0\%, where the LRM handles complex refund evaluation and compensation decisions, requiring 4-5 subtasks per query when user intentions are unclear.

\textbf{BFCLv3.} Table~\ref{main_table_1} reveals that D-CORE achieves substantial accuracy gains of 11.4\% and 13.4\% for Qwen3-8B and Qwen3-14B respectively, with remarkable improvements of 30.8\% on challenging multi-turn tasks. In contrast, the baseling ToolRL method using GRPO alone fails to deliver meaningful improvements in multi-turn scenarios, highlighting the effectiveness of our approach. D-CORE-8B establishing new state-of-the-art among 8B models at 77.7\% overall accuracy, significantly outperforming xLAM2-8B. D-CORE-14B achieves 79.3\% overall accuracy with 5× fewer parameters than the previous state-of-the-art 70B LLM, validating LRM's test-time scaling effectiveness.
\begin{table*}[h!]
    \centering
    \caption{Accuracy on out-of-distribution tasks. The best results are \textbf{bolded}.}
    
    \newcolumntype{C}[1]{>{\centering\arraybackslash}p{#1}}
    \renewcommand\theadalign{cc}
    
    \resizebox{\textwidth}{!}{%
    \begin{tabular}{{@{}C{3cm}|C{1.35cm}C{1.35cm}C{1.35cm}C{1.35cm}|C{1.35cm}C{1.35cm}C{1.35cm}C{1.35cm}|C{1.35cm}C{1.35cm}@{}}}
        \toprule
         \textbf{Model} &\multicolumn{4}{c}{\textbf{$\tau^2$-Bench}}  & \multicolumn{4}{c}{\textbf{ACEBench-en}} & \multicolumn{2}{c}{\textbf{BFCLv4-Agentic}} \\
         \hline
         & \thead{Retail} & \thead{Airline} & \thead{Telecom} & \thead{Overall} & \thead{Normal} & \thead{Special} & \thead{Agent} & \thead{Overall} & \thead{Web-base} & \thead{Memory}\\
        \hline
        Qwen3-8B & 41.5 & 31.3 & 26.3 & 33.0 & 71.4 & 75.3 & 29.1 & 65.9 & 16.0 & 18.9\\
        Qwen3-14B & 46.5 & 30.0 & 31.7 & 36.1 & 66.9 & 84.0 & 44.2 & 68.0 & 34.0 & 25.2\\
        Qwen3-32B & 49.1 & 36.0 & 28.4 & 37.8 & 75.9 & 77.3 & 49.2 & 72.2 & 32.0 & 25.0\\
        \hline
        xLAM2-8B & 51.3 & 35.4 & 22.4 & 36.4 & 58.8 & 0.0 & 5.0 & 34.8 & 8.0 & 15.7 \\
        xLAM2-32B & 53.1 & 41.6 & 26.1 & 40.3 & 69.2 & 24.7 & 13.4 & 52.5 & 31.0 & 16.8\\
        xLAM2-70B & \textbf{54.9} & \textbf{46.0} & 29.3 & 43.4 & 57.1 & 5.3 & 38.4 & 36.5 & 13.0 & 17.6\\
        \hline
       \rowcolor[HTML]{EAE8FD}D-CORE-8B (ours) & 49.8 & 38.4 & 30.5 & 39.6 & \textbf{77.9} & 78.7 & \textbf{59.2} & 75.2 &  36.0 & 20.4\\
        \rowcolor[HTML]{EAE8FD}D-CORE-14B (ours)& 53.5 & 44.1 & \textbf{34.9} & \textbf{44.2} & 77.1 & \textbf{88.7}  & 55.8 & \textbf{76.9} & \textbf{39.0} & \textbf{26.5}  \\
        
        \bottomrule
    \end{tabular}
    }
    \label{tab:ood}
\end{table*}

\vspace{-3mm}
\subsection{Ablation Study and Analysis}
\vspace{-2mm}
\begin{figure}[H]
    \centering
    \includegraphics[width=\linewidth]{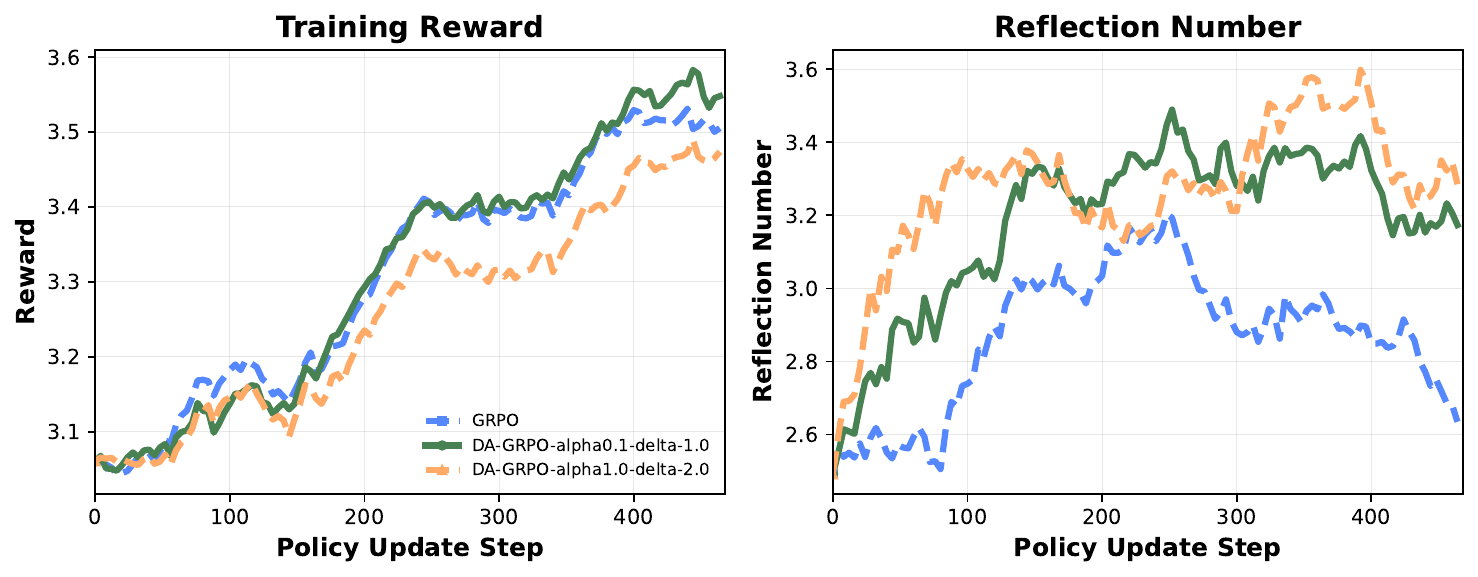}
    \vspace{-2mm}
    \caption{Training dynamics comparison across GRPO and DA-GRPO on D-CORE-8B. }
    \label{fig:training_results}
\end{figure}

\textbf{Training dynamics.} Figure~\ref{fig:training_results} shows the training dynamics of D-CORE-8B. With $\alpha$=0.1, DA-GRPO achieves the highest reward while consistently increasing reflection tokens. At $\alpha$=1.0, reflection tokens increase progressively but rewards remain lower than GRPO. This revealing a trade-off: entropy-based advantages encourage reflection, but excessive $\alpha$ causes exploration collapse.

\begin{table*}[t!]
    \centering
    \caption{Accuracy of BFCLv3 and $\tau$-Bench, where SD represents self-distillation. Avg: The average score of BFCLv3 and $\tau$-Bench, indicating the average expected gain. The best results are \textbf{bolded}.}

    \newcolumntype{C}[1]{>{\centering\arraybackslash}p{#1}}
    \renewcommand\theadalign{cc}

    \resizebox{\textwidth}{!}{%
    \begin{tabular}{{@{}C{4cm}|C{1.4cm}|C{1.45cm}C{1.45cm}C{1.45cm}C{1.45cm}C{1.45cm}|C{1.45cm}C{1.45cm}@{}}}
        \toprule
         \textbf{Model} & \textbf{Avg} &\multicolumn{5}{c}{\textbf{BFCLv3}}  & \multicolumn{2}{c}{\textbf{$\tau$-Bench}}  \\
         \hline
         & \thead{Accuracy} & \thead{Live} & \thead{Non-Live} & \thead{Relevance} & \thead{Irrelevance} & \thead{Multi-Turn} & \thead{Retail} & \thead{Airline}\\
        \hline
        Qwen3-8B & 38.1 & 78.5 & \textbf{88.8} & 77.8 & 79.1 & 33.0 & 34.7 & 23.2 \\
        +SD & 46.2 & \textbf{82.5} & 86.5 & 66.7 & \textbf{89.7} & 57.5 & 42.0 & 31.2 \\
       \rowcolor[HTML]{EAE8FD}+GRPO & 55.6 & 75.6 & 85.4 & 72.2 & 84.5 & 67.4 & 49.2  & 41.2  \\
       \textcolor{gray}{\textit{$\Delta$ from SD}} & \textcolor{blue}{+9.4} & \textcolor[HTML]{006400}{-6.9} & \textcolor[HTML]{006400}{-1.1} & \textcolor{blue}{+5.5} & \textcolor[HTML]{006400}{-5.2} & \textcolor{blue}{+9.9} & \textcolor{blue}{+7.2} & \textcolor{blue}{+10.0} \\
        \midrule
        +DA-GRPO$_{\alpha=0.01,\delta=0.5}$ & 56.2 & 77.0 & 86.7 & 77.8 & 83.7 & \textbf{67.9} & 52.5 & 38.8  \\
        \rowcolor[HTML]{EAE8FD}+DA-GRPO$_{\alpha=0.1,\delta=0.5}$ & \textbf{57.6} & 82.3 & 86.4 & 77.8 & 89.4 & 63.8 & 50.7 & \textbf{44.4}  \\
        \textcolor{gray}{\textit{$\Delta$ from SD}} & \textcolor{blue}{+11.4} & \textcolor[HTML]{006400}{-0.2} & \textcolor[HTML]{006400}{-0.1} & \textcolor{blue}{+11.1} & \textcolor[HTML]{006400}{-0.3} & \textcolor{blue}{+6.3} & \textcolor{blue}{+8.7} & \textcolor{blue}{+13.2} \\
        +DA-GRPO$_{\alpha=0.4,\delta=0.5}$ & 55.5 & 78.8 & 86.1 & 72.2 & 85.5 & 59.9 & \textbf{54.6} & 36.8  \\
        +DA-GRPO$_{\alpha=1.0,\delta=1.0}$ & 54.0 & 76.4 & 84.8 & \textbf{83.3} & 84.4 & 62.1 & 47.9 & 39.6  \\
        \bottomrule
    \end{tabular}
    }
    \label{tab:RL}
\end{table*}

\textbf{Generalizability of D-CORE.} 
We validate out-of-distribution generalization on three diverse benchmarks: ACEBench~\citep{chen2025acebench}, $\tau^2$-Bench~\citep{barres2025tau} and BFCLv4-agentic~\citep{patilberkeley}. $\tau^2$-Bench challenges agents in dual-control environments with shared world states. ACEBench, characterized by complex system and user prompts, serves as a stress test for generalization. It reveals a critical limitation in standard fine-tuning: SFT models often degrade in generalization, underperforming open-source general-purpose models. Our results demonstrate that D-CORE effectively mitigates this issue, surpassing the Qwen3 baseline. We further extend the evaluation to BFCLv4-agentic, which introduces scenarios for Web-search and Memory. As shown in Table~\ref{tab:ood}, D-CORE remains highly competitive on these unseen, complex tasks. This consistency confirms that our performance gains stem from intrinsic improvements in task decomposition and reasoning, rather than overfitting to specific training distributions.
\vspace{-2mm}
\begin{table}[H]
\setlength{\tabcolsep}{2.8pt}
\small
\centering
    \begin{tabular}{@{}l|c|c@{}}   
    \toprule
    \textbf{Model} & {\textbf{BFCLv3}} & {\textbf{$\tau$-bench }} \\
          & MT(\%) & Overall(\%)  \\
        \midrule
        Qwen3-8B & 33.0 & 29.0 \\
        +GRPO & 26.8\small{\textcolor[HTML]{006400}{(-6.2)}} & 31.5\small{\textcolor{blue}{(+2.5)}} \\
        \hline
        +$\text{SD}_{n=10,000}$ & 51.5 & 22.1 \\
        \hline
        +$\text{SD}_{n=20,000}$ & 53.1 & 37.8 \\
        \hline
        +$\text{SD}_{n=40,000}$ & 57.5 & 36.6 \\
        +GRPO & 67.4\small{\textcolor{blue}{(+9.9)}} & 45.2\small{\textcolor{blue}{(+8.6)}} \\
        \bottomrule
    \end{tabular}
    \caption{Effect of self-distillation on multi-turn tool use capability. SD: self-distillation; MT: Multi-Turn; n: number of training samples.}
    \label{tab:sd}
\end{table}
\vspace{-20pt}
\textbf{Effectiveness of Self-Distillation.} As shown in Table~\ref{tab:sd}, self-distillation demonstrates improved effectiveness with increasing sample size and mitigates RL optimization challenges. Notably, when GRPO is applied to Qwen3-8B, improvements are marginal or even negative. In contrast, a self-distilled model achieves substantial gains of 9.9\% and 8.6\% on the respective benchmarks under identical settings, highlighting the efficacy of our approach.


\textbf{Quality of D-CORE Reasoning Trajectories.}
To evaluate the effectiveness of D-CORE generated trajectories (Algorithm \ref{alg:unified}), we conduct SFT on Llama3.1-8B and Qwen2.5-14B using trajectories produced by Qwen3-8B. As shown in Table~\ref{tab:general}, models trained with our trajectories significantly outperform Deepseek-R1 SFT on complex tool-use scenarios.
\vspace{-2mm}

\begin{table}[H]
\centering
    \begin{tabular}{@{}l|cc@{}}
    \toprule
    \textbf{Model} & {\textbf{BFCLv3}} & {\textbf{$\tau$-bench }}\\
     & overall(\%) & overall(\%) \\
    \midrule
    DS-R1-Llama3.1-8B & 27.8 & 19.0 \\
    D-CORE-Llama3.1-8B & 63.7 & 36.0 \\
    \hline
    DS-R1-Qwen2.5-14B & 47.9 & 18.2\\
    D-CORE-Qwen2.5-14B & 70.5 & 41.5  \\
    \bottomrule
    \end{tabular}
    \caption{Performance comparison of trajectory-distilled reasoning models: Deekseek-R1 vs. D-CORE (Qwen3-8B).}
    \label{tab:general}
\end{table}
 \vspace{-5pt}
\textbf{Effectiveness of Task Decomposition.}
We evaluate the impact of task decomposition strategies using the Qwen3-8B model (Table~\ref{tab:robust}). While relying on ground-truth reference trajectories or few-shot examples yields optimal decomposition, such supervision is often inaccessible in complex, real-world scenarios. To address this, we explore a more scalable alternative: leveraging pseudo-labels generated by a stronger model (Qwen3-Max). Our results demonstrate that this approach maintains high effectiveness, bridging the gap between ideal supervision and practical applicability.
\begin{table}[H]
\centering
\begin{tabular}{@{}ccccc@{}}
\toprule
Ref & Few & Traj & Success & $\tau$-bench \\
Traj & Shot & Type & Rate(\%) & overall(\%) \\
\midrule
\xmark & \xmark & \xmark &73.8 & 20.3\\
\cmark & \xmark & GT & 89.1 & 33.8\\
\cmark & \cmark & GT & 93.2 & 37.1\\
\midrule
\cmark & \cmark & PL & 92.8 & 29.6 \\

\bottomrule
\end{tabular}
\caption{Task decomposition success rates based on Qwen3-8B. GT: ground truth. PL: pseudo-labels. }
\label{tab:robust}
\end{table}

\begin{figure}[H]
\centering
\begin{subfigure}{0.48\linewidth}
\centering
\includegraphics[width=\linewidth]{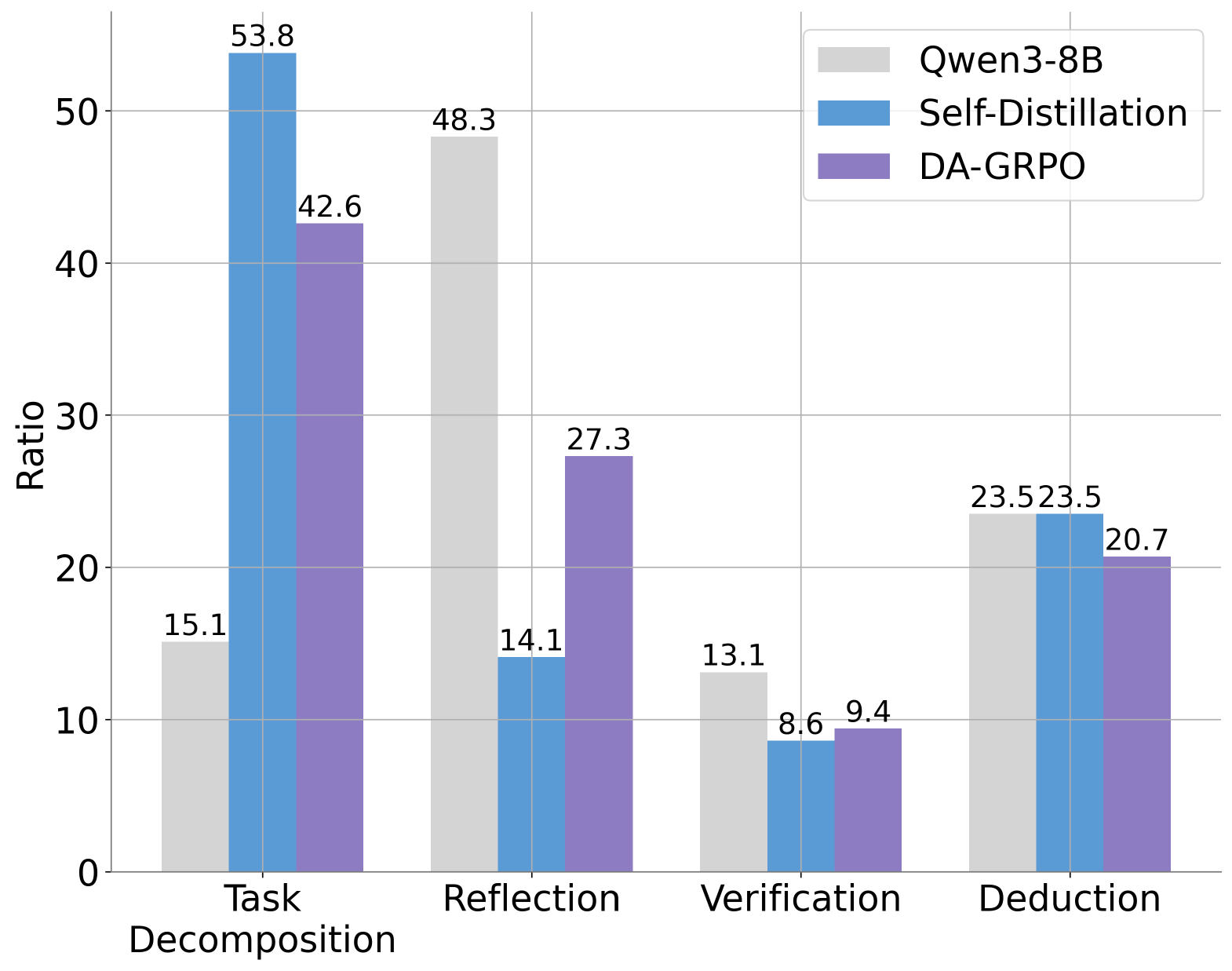}
\vspace{-5mm}
  \caption{\small Four Behaviors}
  \label{fig:motivation_w}
\end{subfigure}
\begin{subfigure}{0.48\linewidth}
  \centering
  \includegraphics[width=\linewidth]{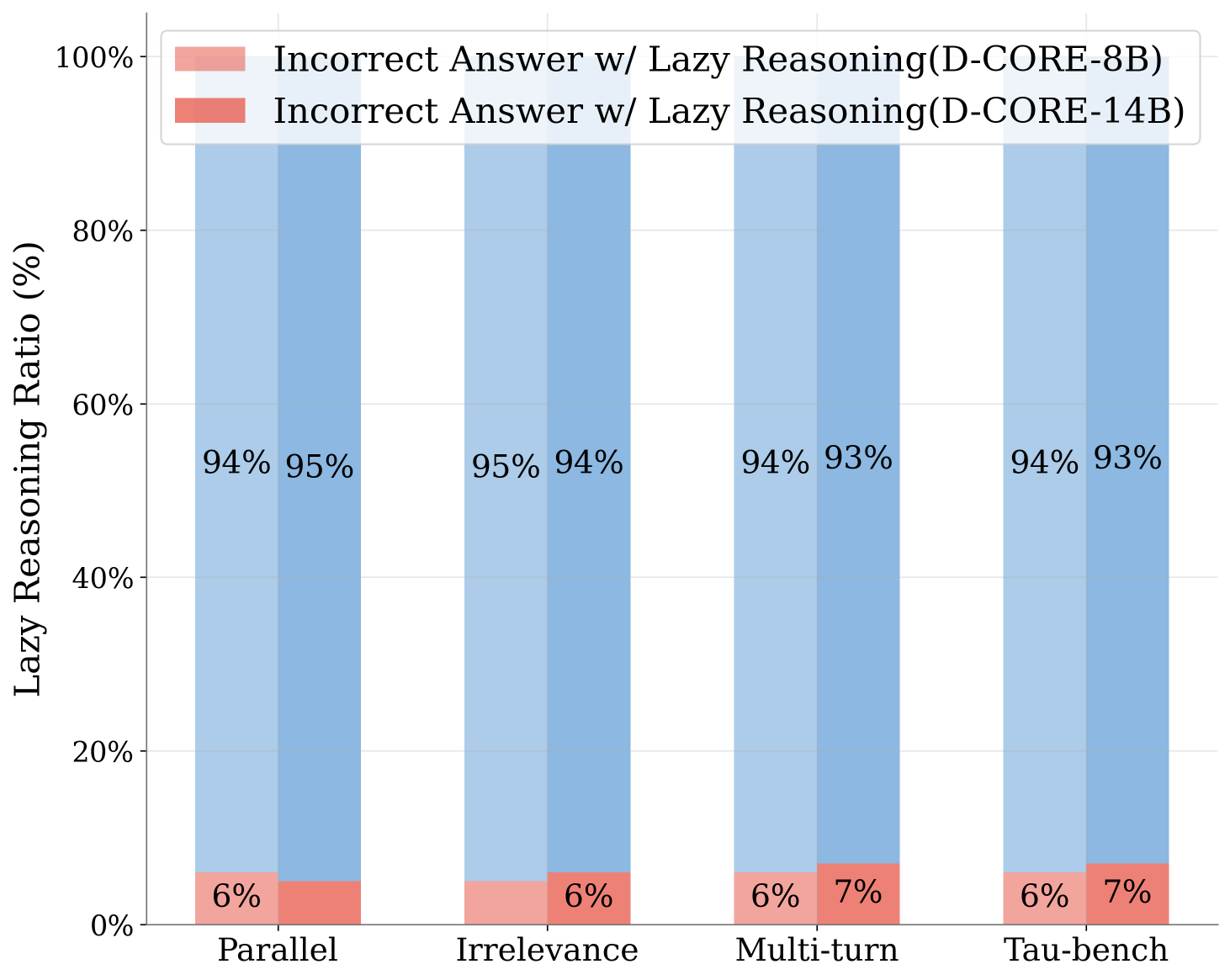}
\vspace{-5mm}
   \caption{Address Lazy Reasoning}
   \label{fig:motivation_z}
\end{subfigure}
\caption{\textbf{(a)} Behavioral distribution changes after self-distillation and DA-GRPO. \textbf{(b)} Lazy Reasoning ratios across  tasks for the D-CORE models.} 
\label{fig:final_performance}
\end{figure}
\vspace{-5pt}
\textbf{Mitigating the Lazy Reasoning.}
 Figure~\ref{fig:motivation_z} demonstrates that the D-CORE model reduces the proportion of Lazy Reasoning in incorrect answers for BFCLv3 Multi-Turn and $\tau$-bench to levels comparable with Parallel and Irrelevance categories. As a direct control, the 8B model shows a reduction in errors caused by Lazy Reasoning from 45\% to 6\% in Multi-Turn tasks, demonstrating the effectiveness of our approach in mitigating the Lazy Reasoning phenomenon.

\textbf{Effectiveness of DA-GRPO.} Figure~\ref{fig:motivation_w} shows that self-distillation significantly increases task decomposition while reducing reflection in thoughts. DA-GRPO subsequently restores some reflection proportion. Table~\ref{tab:RL} demonstrates DA-GRPO ensures RL optimization targets both correct tool usage and increased reflective reasoning. However, as $\alpha$ increases, tool accuracy first rises then declines, indicating excessive entropy can confuse rewards. We selected $\alpha$=0.1, as it achieved the best balance between exploration and exploitation. Case study in Appendix~\ref{ap:case study}

\section{Related Work}
\textbf{Tool Use.}  Recent efforts have focused on creating and curating datasets to enhance LLMs tool use competencies~\citep{patil2023gorilla,liu2024toolace,liu2024apigen,chen2024facilitating, toolformer}. APIGen-MT~\citep{prabhakar2025apigen} and Magnet~\citep{yin2025magnet} all aim to enhance LLMs' multi-turn tool use capabilities by constructing datasets that closely resemble real-world scenarios. To tackle the poor generalizability of models trained using SFT, ToolRL~\citep{qian2025toolrl} and Nemotron-N1~\citep{zhang2025nemotron} applied long CoT and RL training methodologies to tool use tasks and conducted evaluations on BFCLv3~\citep{patilberkeley} single-turn tasks. However, the analysis and improvement of long CoT and RL's effectiveness in enhancing tool use within complex scenarios~\citep{yao2024tau} remains a challenging problem.

\textbf{Large Reasoning Models.} Large Reasoning Models represent a significant advancement in the evolution of language models~\citep{openai_o1, o3-mini, Anthropic2025Claude3.7, yang2025qwen3}. DeepSeek-R1~\citep{Deepseek-R1} demonstrates that the GRPO~\citep{deepseekmath} optimization algorithm combined with outcome reward mechanisms can enhance models' reasoning capabilities. A series of works have analyzed the impact of the reasoning process on outcomes from various perspectives~\citep{ning2025not, hu2025distillation,gandhi2025cognitive, shojaee2025illusion}. Additionally, some research on task-specific reasoning approach~\citep{decomposed, least2most, yao2023tree, yao2023react, besta2024graph} and Tool-Integrated Reasoning~\citep{li2025start,dong2025tool,li2025webthinker}has regained attention for improving current LRM training. We systematically transfer reasoning advances to complex tool use, yielding strong empirical gains.

\section{Conclusion}
\vspace{-2mm}
In this work, we systematically study LRMs on tool use tasks. We identify that ``Lazy Reasoning'' in complex scenarios arises from insufficient task decomposition capabilities. We propose D-CORE to address this issue via self-distillation and DA-GRPO, achieving state-of-the-art results on BFCL-v3 and 
$\tau$-Bench. Future work will extend this framework to multimodal models, and explore advanced reinforcement learning algorithms for efficient reasoning.

\section*{Impact Statement}
This paper aims to contribute to the advancement of Machine Learning through the exploration of Large Reasoning Models' tool use techniques. These techniques have the potential to enhance model efficiency and scalability, leading to broader applicability across various domains. While there are numerous possible societal implications of our work, we do not identify any that require specific emphasis at this time.

\bibliography{main}
\bibliographystyle{icml2026}

\clearpage
\appendix

\onecolumn
\section{Appendix}

\subsection{Details of Lazy Reasoning}\label{ap:lazy_reasoning}

\newcommand{\mytcbinput}[4]{
\tcbinputlisting{
      listing file=#1,
      breakable,
      title=#2,
      label=lst:#4,
      listing only
    }
}

\newtcolorbox{inputbox}[1][]{
    colback=white,
    colframe=black,
    boxrule=0.5pt,
    arc=2pt,title=\textbf{Lazy Reasoning in Qwen3-8B},
    breakable
}
\newtcolorbox{outputbox}[1][]{
    colback=white,
    colframe=black,
    boxrule=0.5pt,
    arc=2pt,title=\textbf{Lazy Reasoning in Qwen3-32B},
    breakable
}

\newtcolorbox{prompt_1}[1][]{
    colback=white,
    colframe=black,
    boxrule=0.5pt,
    arc=2pt,title=\textbf{Task Decomposition Prompt},
    fontupper=\small\ttfamily,
    #1,
    breakable
}

\lstset{backgroundcolor={}, numbers=none}

\newtcblisting{codebox}[1][]{
    listing only,
    breakable,    
    enhanced,     
    colback=white,
    colframe=black,
    fonttitle=\bfseries,
    title=Task Decomposition Prompt,
    listing options={
        language=Python,
        basicstyle=\ttfamily\footnotesize,
        breaklines=true,
        keywordstyle=\color{blue},
        commentstyle=\color{green!60!black},
        stringstyle=\color{red}},
    #1,
    breakable
}

\newtcblisting{codebox1}[1][]{
    listing only,
    enhanced,                     
    colback=white,
    colframe=black,
    fonttitle=\bfseries,
    title=Sequential Reasoning Composition Template,
    listing options={
        language=Python,
        basicstyle=\ttfamily\footnotesize,
        breaklines=true,
        keywordstyle=\color{blue},
        commentstyle=\color{green!60!black},
        stringstyle=\color{red}},
    #1,
}

\newtcblisting{codebox2}[1][]{
    listing only,
    colback=white,
    colframe=black,
    fonttitle=\bfseries,
    title=Parallel Reasoning Composition Template,
    listing options={
        language=Python,
        basicstyle=\ttfamily\footnotesize,
        breaklines=true,
        numbers=left,
        numberstyle=\tiny\color{gray},
        keywordstyle=\color{blue},
        commentstyle=\color{green!60!black},
        stringstyle=\color{red}},
    #1,
}

\newtcblisting{codebox3}[1][]{
    listing only,
    enhanced,                     
    colback=white,
    colframe=black,
    fonttitle=\bfseries,
    title=Irrelevance Reasoning Composition Template,
    listing options={
        language=Python,
        basicstyle=\ttfamily\footnotesize,
        breaklines=true,
        keywordstyle=\color{blue},
        commentstyle=\color{green!60!black},
        stringstyle=\color{red}},
    #1
}

\newtcblisting{codebox4}[1][]{
    listing only,
    colback=white,
    colframe=black,
    fonttitle=\bfseries,
    title=Sequential Reasoning Generation,
    listing options={
        language=Python,
        basicstyle=\ttfamily\footnotesize,
        breaklines=true,
        keywordstyle=\color{blue},
        commentstyle=\color{green!60!black},
        stringstyle=\color{red}},
    #1
}

\newtcblisting{codebox5}[1][]{
    listing only,
    colback=white,
    colframe=black,
    fonttitle=\bfseries,
    title=Parallel Reasoning Generation,
    listing options={
        language=Python,
        basicstyle=\ttfamily\footnotesize,
        breaklines=true,
        keywordstyle=\color{blue},
        commentstyle=\color{green!60!black},
        stringstyle=\color{red}},
    #1,
    breakable
}

The following cases clearly demonstrate the Lazy Reasoning phenomenon exhibited by Qwen3-8B when completing tasks on BFCLv3 and $\tau$-Bench. This manifests as the models' lack of detailed planning for the subtasks required to complete the main task, as well as the absence of specific reasoning for each subtask execution. Instead, the models display an inefficient, random pattern of trial-negation-retry thinking. Such reasoning processes occur in both correct and incorrect responses, indicating a cognitive pattern that models adopt when solving tool-use tasks, rather than an accidentally triggered behavior.
\subsubsection{Case study on BFCLv3-Multi-Turn 196}

\begin{tcolorbox}[
    colback=white,
    colframe=black,
    title=Qwen3-8B,
    ]
\{\textbf{"role"}: "user", \textbf{"content"}: "Gearing up for some exciting travel adventures! I wish to put a firm budget in place, setting a cap equivalent to 50,000 RMB for my upcoming European excursions.  I'd appreciate if you could take charge of the conversion and establish a budget framework for me using the access token 'abc123xyz'."\}
\vspace{0.5cm}
    
\{\textbf{"role"}: "assistant", \textbf{"content"}: "\texttt{<think>}
Okay, let's see. The user wants to set a budget limit of 50,000 RMB for their European travels and needs the conversion handled using the access token 'abc123xyz'. 

First, I need to check which functions are available. The relevant ones here are probably related to the travel system since they mentioned budgeting and converting currency. Looking at the functions, there's \lstinline{'set_budget_limit'} which requires an access token and a budget limit in USD. But the user provided the budget in RMB. So I need to convert 50,000 RMB to USD.

\textcolor{red}{Wait}, there's also \lstinline{'compute_exchange_rate'} which can convert between currencies. The parameters for that function include \lstinline{base_currency}, \lstinline{target_currency}, and value. The user's budget is in RMB \lstinline{(base_currency: RMB)}, and they want it in USD (\lstinline{target_currency: USD}). The value is 50,000. So I should first call \lstinline{compute_exchange_rate} with these parameters to get the USD equivalent.
Once I have the converted USD amount, I can then use \lstinline{'set_budget_limit'} with the access token 'abc123xyz' and the converted USD value to set the budget. \textcolor{red}{But} since the user mentioned \"put a firm budget in place\", maybe they just want the limit set in RMB? 
\textcolor{red}{Wait}, the function \lstinline{'set_budget_limit'} specifies the budget limit in USD. So the user's 50,000 RMB needs to be converted to USD first.

So the steps are: 

1. Use \lstinline{compute_exchange_rate} to convert 50,000 RMB to USD.

2. Use the resulting USD value with \lstinline{set_budget_limit}.

\textcolor{red}{But} the user might not have mentioned the exchange rate, so I need to make sure to call \lstinline{compute_exchange_rate} first. Let me check the parameters again. The \lstinline{compute_exchange_rate} requires \lstinline{base_currency}, \lstinline{target_currency}, and value. The user's base is RMB, target is USD, value is 50000. Then, the response will give the \lstinline{exchanged_value} in USD. Then, use that value in \lstinline{set_budget_limit}.
So the function calls would be:
First, \lstinline{compute_exchange_rate(base_currency='RMB', target_currency='USD', value=50000)}. Then, \lstinline{set_budget_limit(access_token='abc123xyz', budget_limit=exchanged_value)}.
\textcolor{red}{But} since the user's message doesn't mention the exchange rate, I need to perform the conversion first. Therefore, the correct sequence is to first call \lstinline{compute_exchange_rate}, then use the result in \lstinline{set_budget_limit}. \textcolor{red}{However}, the user might not have provided the exchange rate, so the assistant needs to handle that step.

\texttt{</think>}

\lstinline{[compute_exchange_rate(base_currency='RMB', target_currency='USD', value=50000),} 
\lstinline{set_budget_limit(access_token='abc123xyz', budget_limit=exchanged_value)]}\}
\end{tcolorbox}
\newpage
\begin{tcolorbox}[
    colback=white,
    colframe=black,
    title=D-CORE-8B,
    ]
\{\textbf{"role"}: "user", \textbf{"content"}: "Gearing up for some exciting travel adventures! I wish to put a firm budget in place, setting a cap equivalent to 50,000 RMB for my upcoming European excursions. I'd appreciate if you could take charge of the conversion and establish a budget framework for me using the access token 'abc123xyz'."\}

\vspace{0.5cm}

\{\textbf{"role"}: "assistant", \textbf{"content"}:"
\texttt{<think>}
To complete this task, I need to break it down into the following subtasks:

1. Use the \lstinline{'compute_exchange_rate'} tool to convert 50,000 RMB to USD for budget setting, specifying 'RMB' as base currency and 'USD' as target currency.

2. Use the \lstinline{'set_budget_limit'} tool to establish the converted USD amount as the budget framework using the provided access token 'abc123xyz'.

Now I should analyze the execution process of subtask 1. Okay, let's see. The user wants to convert 50,000 RMB to USD using the \lstinline{compute_exchange_rate} tool. First, I need to check the parameters required for that function. The tool needs \lstinline{base_currency}, \lstinline{target_currency}, and value. The user specified RMB as the base and USD as the target, with a value of 50,000. So I should call \lstinline{compute_exchange_rate} with those parameters. Let me make sure the parameters are correctly formatted. \lstinline{Base_currency} is 'RMB', \lstinline{target_currency} is 'USD', and value is 50000. That should do it. No other functions are needed here since the question is straightforward.
</think>
\lstinline{[compute_exchange_rate(base_currency=\"RMB\", target_currency=\"USD\", value=50000)]}
\vspace{0.5cm}

\{\textbf{"role"}: "tool", \textbf{"content"} : {\lstinline{"exchanged_value"}: 7142.86}\}
\vspace{0.5cm}

\{\textbf{"role"}: "assistant", \textbf{"content"}:"
\texttt{<think>}
Now I should analyze the execution process of subtask 2:Use the \lstinline{'set_budget_limit'} tool to establish the converted USD amount as the budget limit using access token 'abc123xyz'.

Okay, the user wants to set a budget limit using the converted USD amount from the previous exchange rate calculation. The exchange rate tool returned 7142.86 USD for 50,000 RMB. Now, they need to use the \lstinline{'set_budget_limit'} function with the access token 'abc123xyz' and the budget limit as 7142.86.

First, I check the parameters required for \lstinline{'set_budget_limit'}. The required parameters are \lstinline{'access_token'} and \lstinline{'budget_limit'}. The user provided the access token 'abc123xyz' and the converted value 7142.86. 

I need to make sure the budget limit is a float, which it is. So the function call should be \lstinline{set_budget_limit(access_token='abc123xyz', budget_limit=7142.86)}. 

No other parameters are needed, and the function should execute correctly. The response will confirm the budget limit set in USD.
\texttt{</think>}

\lstinline{[set_budget_limit(access_token=\"abc123xyz\", budget_limit=7142.86)]}\}
\end{tcolorbox}

\subsubsection{Case study on $\tau$-bench airline 6}
\begin{tcolorbox}[
    colback=white,
    colframe=black,
    title=Qwen3-8B,
    ]
        \{\textbf{"role"}: "user",\textbf{"content"}: "My user ID is \lstinline{aarav_garcia_1177}. I don't have the reservation ID right now, but I want to change my flight to the cheapest economy option for the day after my original booking. I'm okay with the original payment being refunded as credit."\},
        \vspace{0.5cm}

        \{\textbf{"role"}: "assistant", \textbf{"content"}: "[\lstinline{get_user_details]}"\},
        \vspace{0.5cm}

        \{\textbf{"role"}: "tool", \textbf{"content"}: "\lstinline{user_details}"\},
        \vspace{0.5cm}

        \{\textbf{"role"}: "assistant", \textbf{"content"}: "\texttt{<think>}
        Okay, let's see. The user wants to change their flight from ATL to PHL. They mentioned they don't have the reservation ID but want the cheapest economy option the day after their original booking. They're okay with the original payment being refunded as credit.

        First, I need to get the user's reservations. The user provided their user ID, so I called \lstinline{get_user_details}. The response shows they have two reservations: M05KNL and UHDAHF. Since they don't have the reservation ID, I should check both or ask for more details. 
        
        \textcolor{red}{But} the user mentioned the day after the original booking. \textcolor{red}{Wait}, the original booking's date isn't provided yet. \textcolor{red}{Maybe} I need to find out the original flight dates.
        
        \textcolor{red}{Wait}, the user's original flight was from ATL to PHL. \textcolor{red}{But} without the reservation ID, I can't get the specific flight details. The user might not know the reservation ID, so maybe I should proceed with the available reservations. \textcolor{red}{However}, the system requires the reservation ID to modify a flight. Since the user doesn't have it, I need to handle this.
        
        Looking at the tools, there's a function to \lstinline{get_reservation_details}, but that requires the reservation ID. Since the user doesn't have it, I can't proceed with modifying the flight yet. The user might need to provide the reservation ID, but they mentioned they don't have it. \textcolor{red}{Alternatively}, maybe the system can find the relevant reservation based on other info, but the tools don't support that. 
        
        \textcolor{red}{Wait}, the user is asking to change the flight to the cheapest economy option the day after the original booking. \textcolor{red}{Maybe} I can first find the original flight's date. \textcolor{red}{But} without the reservation ID, I can't get the original flight details. So, the next step is to inform the user that I need the reservation ID to proceed. \textcolor{red}{However}, the user might not have it. 
        
        \textcolor{red}{Alternatively}, maybe the user's original booking is the most recent one. Let me check the reservations. The user has two reservations: M05KNL and UHDAHF. The current date is 2024-05-15. If the original booking was made recently, maybe one of these reservations is the one they're referring to. 
        
        But without knowing the dates, I can't be sure. 
        
        Since the user is a gold member, they have 3 free checked bags for economy. \textcolor{red}{But} that's for baggage, not flight changes. The user wants to change the flight to the cheapest economy option the day after. So, perhaps I need to first find the original flight's date, then search for flights the next day. But without the reservation ID, I can't get the original flight's date. 
        
        Therefore, the correct step is to inform the user that I need the reservation ID to proceed with modifying their flight. Since they don't have it, maybe they need to check their account or contact support. \textcolor{red}{Alternatively}, if the system can retrieve the reservation based on other info, but the tools don't allow that. So, the next step is to ask the user to provide the reservation ID or transfer to a human agent. \textcolor{red}{But} according to the policy, transfer is only if the request can't be handled. Since the user is asking for a modification, which requires the reservation ID, and they don't have it, the agent can't proceed. Therefore, the correct action is to transfer to a human agent with the summary that the user needs the reservation ID to modify their flight.
        \texttt{</think>}
        
        [\lstinline{transfer_to_human_agents}(summary=\"User wants to change their flight from ATL to PHL but does not have the reservation ID. They mentioned wanting the cheapest economy option the day after their original booking and are okay with refunding the original payment as credit. The agent needs the reservation ID to proceed with the modification.\")]"\}
\end{tcolorbox}

\begin{tcolorbox}[
    colback=white,
    colframe=black,
    title=D-CORE-8B,
    ]
        \{\textbf{"role"}: "user", \textbf{"content"}: "My user ID is \lstinline{aarav_garcia_1177}. I don't have the reservation ID right now, but I want to change my flight to the cheapest economy option for the day after my original booking. I'm okay with the original payment being refunded as credit."\},
        \vspace{0.5cm}

        \{\textbf{"role"}: "assistant", \textbf{"content"}: "[\lstinline{get_user_details]}"\},
        \vspace{0.5cm}

        \{\textbf{"role"}: "tool", \textbf{"content"}: "\lstinline{user_details}"\},
        \vspace{0.5cm}

        \{\textbf{"role"}: "assistant", \textbf{"content"}: "\texttt{<think>}
        OK, let me analyze the overall task: My user ID is \lstinline{aarav_garcia_1177}. I don't have the reservation ID right now, but I want to change my flight to the day after the original one, and I'm looking for the cheapest economy option.
     
        To complete this task, I need to break it down into the following subtasks:

        1. Retrieve user details for \lstinline{user_id} \lstinline{'aarav_garcia_1177'} to identify all reservations associated with the user.
        
        2. Retrieve details for reservation 'M05KNL' to check if it matches the ATL-PHL route and verify flight details.
        
        3. Retrieve details for reservation 'UHDAHF' to check if it matches the ATL-PHL route and verify flight details.
        
        I have already solve the subtask:

        1.Retrieve user details for \lstinline{user_id} \lstinline{'aarav_garcia_1177'} to identify all reservations associated with the user.
        
        Now I should analyze the execution process of subtask 2:Retrieve details for reservation 'M05KNL' to check if it matches the ATL-PHL route and verify flight details.
        
        Okay, the user wants to change their flight from ATL to PHL. They provided their user ID, so I first retrieved their details. The user has two reservations: M05KNL and UHDAHF. Now, I need to check which of these reservations is for the ATL-PHL route.
        
        I should start by getting the details for reservation M05KNL. Let me call \lstinline{get_reservation_details} with \lstinline{reservation_id} 'M05KNL'. Once I have that information, I can check the origin and destination of the flights in that reservation. If it's ATL to PHL, then I can proceed to help them modify the flight. If not, I'll need to check the other reservation, UHDAHF.
        
        Assuming the reservation details come back, I'll look at the flights array. Each flight object should have origin and destination. If the origin is ATL and destination is PHL, then that's the one. If there are multiple flights, I need to confirm which segment they want to change. Also, I need to check the status of the flight to ensure it's still available for modification. If the flight is already departed, the user might need a different solution.
        
        Once confirmed, I can proceed to update the reservation. Since the user wants to change the flight, I'll need to use \lstinline{update_reservation_flights}. \textcolor{red}{But} first, I must make sure the new flight is available. I might need to search for available flights on the desired date. However, the user hasn't specified a new date yet. Wait, in the initial message, the user just said they need to change the flight but didn't mention the new date. So maybe I should ask for the new date after confirming the current reservation.
        
        \textcolor{red}{Wait}, the user's original request was to change the flight from ATL to PHL. They might have a specific date in mind. \textcolor{red}{But} since they didn't provide it, perhaps I need to prompt them for the new date once I confirm the existing reservation. \textcolor{red}{Alternatively}, maybe the reservation M05KNL has a specific date that I can check. Let me proceed step by step.
        
        First, get the reservation details for M05KNL. Then, check if the flights in that reservation are ATL to PHL. If yes, then I can ask the user for the new date they want to change to. Then, search for available flights on that date and proceed with the update. If the reservation doesn't match ATL-PHL, check UHDAHF next.
        \texttt{</think>}

        [\lstinline{get_reservation_details}(\lstinline{reservation_id}=M05KNL)"\}
      ]
\end{tcolorbox}

\subsubsection{Case study on $\tau$-bench airline 30}
\begin{tcolorbox}[
    colback=white,
    colframe=black,
    title=Qwen3-8B,
    ]
        \{\textbf{"role"}: "user", \textbf{"content"}: "I think there might be an issue with my flight bookings. I accidentally ended up with multiple flights on the same days. Can you help me check if that\u2019s actually been booked under my profile?"\},
        \vspace{0.5cm}

        \{\textbf{"role"}: "assistant", \textbf{"content"}: "[\lstinline{get_user_details]}"\},
        \vspace{0.5cm}

        \{\textbf{"role"}: "tool", \textbf{"content"}: "\lstinline{user_details}"\},
        \vspace{0.5cm}

        \{\textbf{"role"}: "assistant", \textbf{"content"}: "\texttt{<think>}
        Okay, let's see. The user is asking to check all their reservations for overlapping flights. They have seven reservations listed. My task is to check each reservation's flights to see if there are any flights on the same day.
        
        First, the function \lstinline{get_reservation_details} requires a \lstinline{reservation_id}. Since there are seven reservations, I'll need to call this function seven times, once for each reservation ID.
        
        \textcolor{red}{Wait}, but the user might not know which ones are problematic. So I should check all of them. Let me start with the first reservation, MFRB94. I'll call \lstinline{get_reservation_details} with that ID. Then proceed to the next one, PUNERT, and so on until FDZ0T5, HTR26G, and 5BGGWZ.
        
        Each time I get the reservation details, I'll look at the flights array. For each flight, note the date and check if there are multiple flights on the same date. If any reservation has flights on the same day, I'll need to inform the user about that. Also, I should check if the flights are on the same day across different reservations, but the user mentioned \"multiple flights on the same days,\" which might refer to within the same reservation or across different ones. The initial problem statement says \"multiple flights on the same days,\" so maybe they have two separate flights on the same day in different reservations. \textcolor{red}{But} the user is asking to check all reservations for any overlaps.
        
        \textcolor{red}{Wait}, the user's original message was about \"multiple flights on the same days,\" which could mean that within a single reservation, there are two flights on the same day, which isn't possible for a round trip. Or maybe they have two separate reservations with flights on the same day. \textcolor{red}{But} the user is asking to check all their reservations to see if any flights are on the same day. So I need to check each reservation's flights for same-day flights and also check across reservations if there's any overlap. 
        
        \textcolor{red}{However}, the user might be referring to within their own bookings, not overlapping with others. \textcolor{red}{But} the problem is that the user might have two separate reservations with flights on the same day, which could be an issue if they're planning trips.
        
        \textcolor{red}{But} according to the policy, the agent can only check the reservations under the user's profile. So I need to check each reservation's flights for same-day flights. Let me start by getting the details for each reservation one by one. Once I have all the flight details, I can compile them and check for any duplicates or overlaps.
        
        So the next step is to call \lstinline{get_reservation_details} for each reservation ID provided. Let's start with the first one, MFRB94.
        \texttt{</think>}
        
        \lstinline{[get_reservation_details(reservation_id=\"MFRB94\")]}"\}
\end{tcolorbox}
\newpage
\begin{tcolorbox}[
    colback=white,
    colframe=black,
    title=D-CORE-8B,
    ]
        \{\textbf{"role"}: "user", \textbf{"content"}: "I think there might be an issue with my flight bookings. I accidentally ended up with multiple flights on the same days. Can you help me check if that\u2019s actually been booked under my profile?"\},
        \vspace{0.5cm}

        \{\textbf{"role"}: "assistant", \textbf{"content"}: "[\lstinline{get_user_details]}"\},
        \vspace{0.5cm}

        \{\textbf{"role"}: "tool", \textbf{"content"}: "\lstinline{user_details}"\},
        \vspace{0.5cm}

        \{\textbf{"role"}: "assistant", \textbf{"content"}: "\texttt{<think>}
        OK, let me analyze the overall task: Sure, my user ID is \lstinline{sophia_martin_4574}.
         
        To complete this task, I need to break it down into the following subtasks:
        
        1. Retrieve user details for \lstinline{user_id} '\lstinline{sophia_martin_4574}' to identify all reservations associated with the user.
        
        2. Fetch reservation details for \lstinline{reservation_id} 'MFRB94' to check flight bookings for the same day.
        
        3. Fetch reservation details for \lstinline{reservation_id} 'PUNERT' to check flight bookings for the same day.
        
        4. Fetch reservation details for \lstinline{reservation_id} 'HSR97W' to check flight bookings for the same day.
        
        5. Fetch reservation details for \lstinline{reservation_id} 'SE9KEL' to check flight bookings for the same day.
        
        6. Fetch reservation details for \lstinline{reservation_id} 'FDZ0T5' to check flight bookings for the same day.
        
        7. Fetch reservation details for \lstinline{reservation_id} 'HTR26G' to check flight bookings for the same day.
        
        I have already solve the subtask:
        
        1.Retrieve user details for \lstinline{user_id} '\lstinline{sophia_martin_4574}' to identify all reservations associated with the user.
        
        Now I should analyze the execution process of subtask 2:Fetch reservation details for \lstinline{reservation_id} 'MFRB94' to check flight bookings for the same day.
        
        Okay, the user wants to check if there are multiple flights booked for the same day. They provided their user ID, and I retrieved their reservations. Now, I need to look into each reservation to see the flight dates.

        First, I'll start with the first reservation, 'MFRB94'. I'll call \lstinline{get_reservation_details} for that ID. Once I have the details, I'll check the flights' dates. If there are multiple flights on the same date, I'll note that. Then, I'll proceed to the next reservation IDs listed: 'PUNERT', 'HSR97W', and so on. For each reservation, I'll repeat the process of fetching details and checking the flight dates. I need to make sure I don't miss any reservations. The user mentioned a mix-up with their assistant, so it's possible that multiple bookings were made accidentally. By checking each reservation's flight dates, I can identify if any dates are duplicated. Once I have all the information, I can inform the user which reservations have conflicting dates and offer to help cancel or modify any duplicate bookings.
        \texttt{</think>}
        
        \lstinline{[get_reservation_details(reservation_id=\"MFRB94\")}"\}
      ]
\end{tcolorbox}

\subsection{Reward Function in DA-GRPO}
\label{ap:reward_function}
The reward employed in DA-GRPO aligns with that of ToolRL~\citep{qian2025toolrl}:
\begin{gather}
R_{\text{format}} = \mathds{1}\left(\texttt{"<think>$\backslash$n"}.*\texttt{"$\backslash$n</think>$\backslash$n$\backslash$n"}\right), \quad R_{\text{struct}} = \mathds{1}(\mathcal{N}_G =\mathcal{N}_P) ,\\
R_{\text{key}} = \frac{1}{|\mathcal{K}|}\sum_{j=1}^{|\mathcal{K}|} \mathds{1}(\mathcal{K}_j^{G}=\mathcal{K}_j^P),\quad
R_{\text{value}} = \frac{1}{|\mathcal{K}|}\sum_{j=1}^{|\mathcal{K}|}\frac{1}{|\mathcal{V}_j|}\sum_{k=1}^{|\mathcal{V}_j|} \mathds{1}(\mathcal{V}^G_j[k]=\mathcal{V}^P_j[k]),\\
R_i = R_{\text{format}} + R_{\text{struct}} + R_{\text{key}} + R_{\text{value}},
\label{eq:reward}
\end{gather}
where the $\mathds{1}$ stands for exact matching. $\mathcal{N}$ are tool names, $\mathcal{K}$ are parameter names, $\mathcal{V}$ are parameter values, $P$ and $G$ represents predicted and ground truth. 

\newpage
\subsubsection{Analysis of Lazy Reasoning.}
\label{ap:discussion_on_lazy_reasoning}
\begin{figure}[!h]
    \centering
    \includegraphics[width=1.0\linewidth]{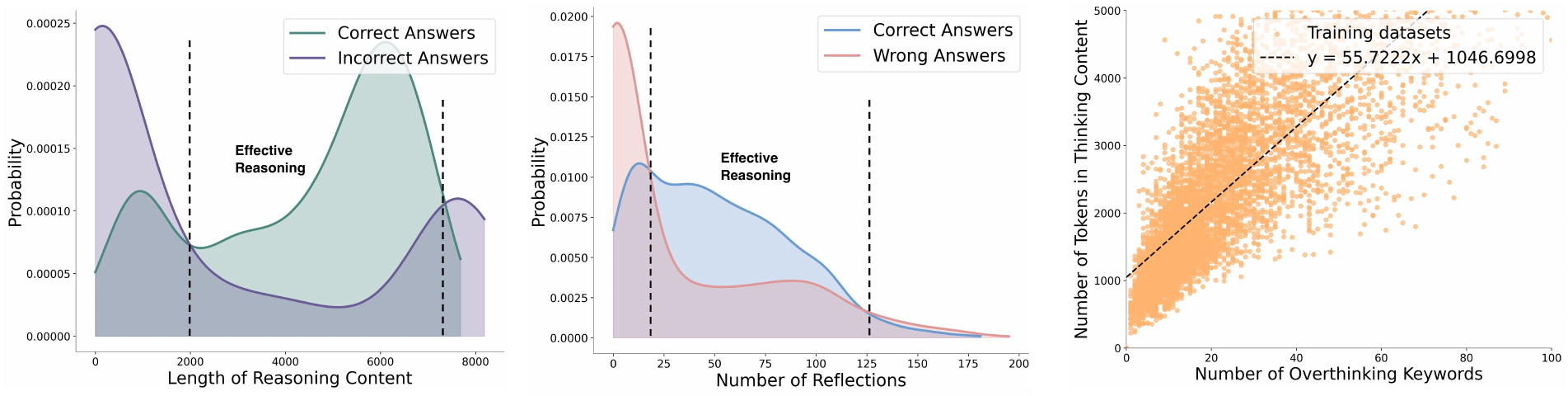}
    \caption{Distribution analysis of reasoning processes from Qwen3-8B rollout experiments on the MATH dataset. (a) Probability density functions of reasoning process lengths for correct vs. incorrect answers. (b) Probability density functions of reflection counts in reasoning processes for correct vs. incorrect answers. (c) Distribution and fitted function of reflection words versus reasoning length.}
    \label{fig:math_reasoning}
\end{figure}

\begin{figure}[!h]
    \centering
    \includegraphics[width=1.0\linewidth]{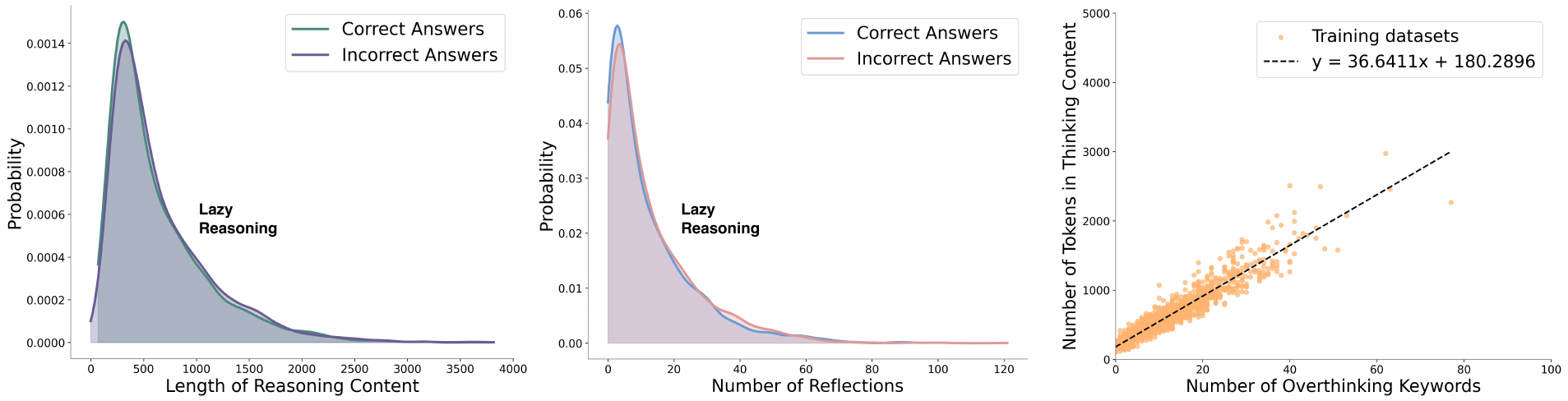}
    \caption{Distribution analysis of reasoning processes from Qwen3-8B rollout experiments on the BFCLv3 multi-turn dataset. (a) Probability density functions of reasoning process lengths for correct vs. incorrect answers. (b) Probability density functions of reflection counts in reasoning processes for correct vs. incorrect answers. (c) Distribution and fitted function of reflection words versus reasoning length.}
    \label{fig:multi_turn_reasoning}
\end{figure}

Figure~\ref{fig:math_reasoning} and Figure~\ref{fig:multi_turn_reasoning} present statistical analyses of rollout sampling experiments conducted with Qwen3-8B across MATH and BFCLv3 multi-turn tasks, using cases with exactly 50\% accuracy (10 correct, 10 incorrect of 20 rollouts). For MATH tasks, increased reasoning length and reflection frequency create an effective reasoning region where correct responses dominate, proving reasoning's value. However, BFCLv3 multi-turn tasks show highly consistent distributions between correct and incorrect answers, with no effective reasoning region.  This phenomenon suggests two plausible explanations: (1) the complex tool use task requires no reasoning, or (2) the model's reasoning provides no benefit for complex tool use, indicating fundamental issues with the current reasoning approach. While the first explanation presents an intriguing possibility, we maintain that reasoning should enhance tool use accuracy based on our single-turn experimental evidence—otherwise, this would create a fundamental contradiction with established findings. 

We hypothesize substantial improvement potential in Qwen3-8B's complex tool use reasoning. Between RL-based refinement and SFT-based knowledge injection, we pursue SFT after RL proved ineffective—unsurprisingly, given that current RL primarily reinforces existing patterns. This raises a fundamental question: what constitutes effective tool use reasoning? To narrow our research scope, we propose a key hypothesis—LRMs may inherently possess tool use capabilities that are somehow constrained, supported by their strong single-turn performance.

To validate this conjecture, we initially attempted direct intervention by inserting desired reasoning patterns within \texttt{<think>} and \texttt{</think>}, but this failed catastrophically, reducing tool-calling accuracy to near-zero. Since LRM reasoning is shaped by RL optimization, instruction-following within reasoning blocks remains unoptimized. Given this constraint, query modification becomes the only viable intervention point, naturally motivating the D-CORE framework.

\subsubsection{Filtering Lazy Reasoning.}
\vspace{-3mm}
\begin{figure*}[ht!]
\centering
\begin{subfigure}{0.4\columnwidth}
\centering
\includegraphics[width=\textwidth]{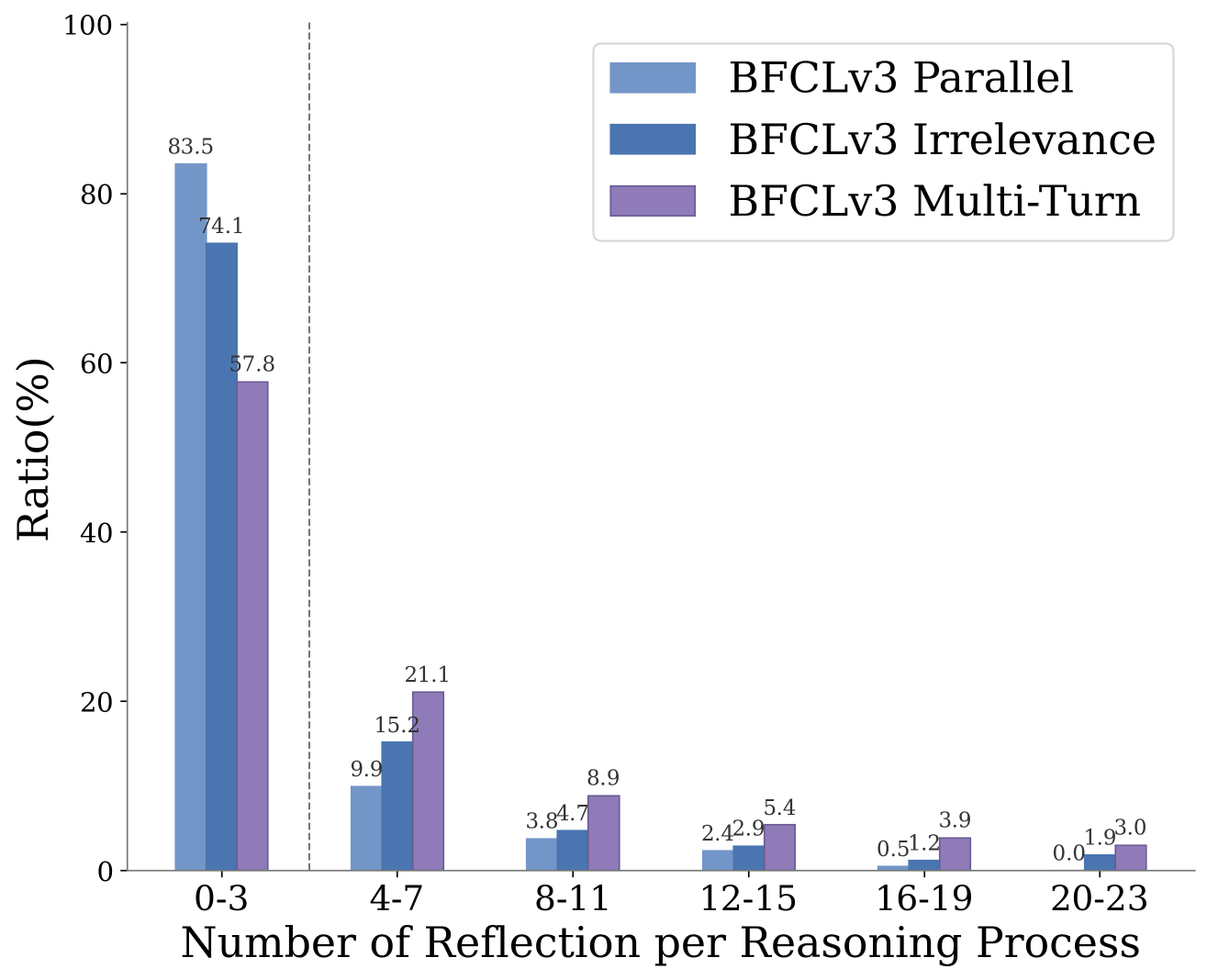}
\vspace{-3mm}
  \caption{\small Filtering Reflection on Qwen3-8B}
  \label{fig:qwen3_8b_lazy_reasoning_reflection_filter}
\end{subfigure}
\begin{subfigure}{0.4\columnwidth}
\centering
\includegraphics[width=\textwidth]{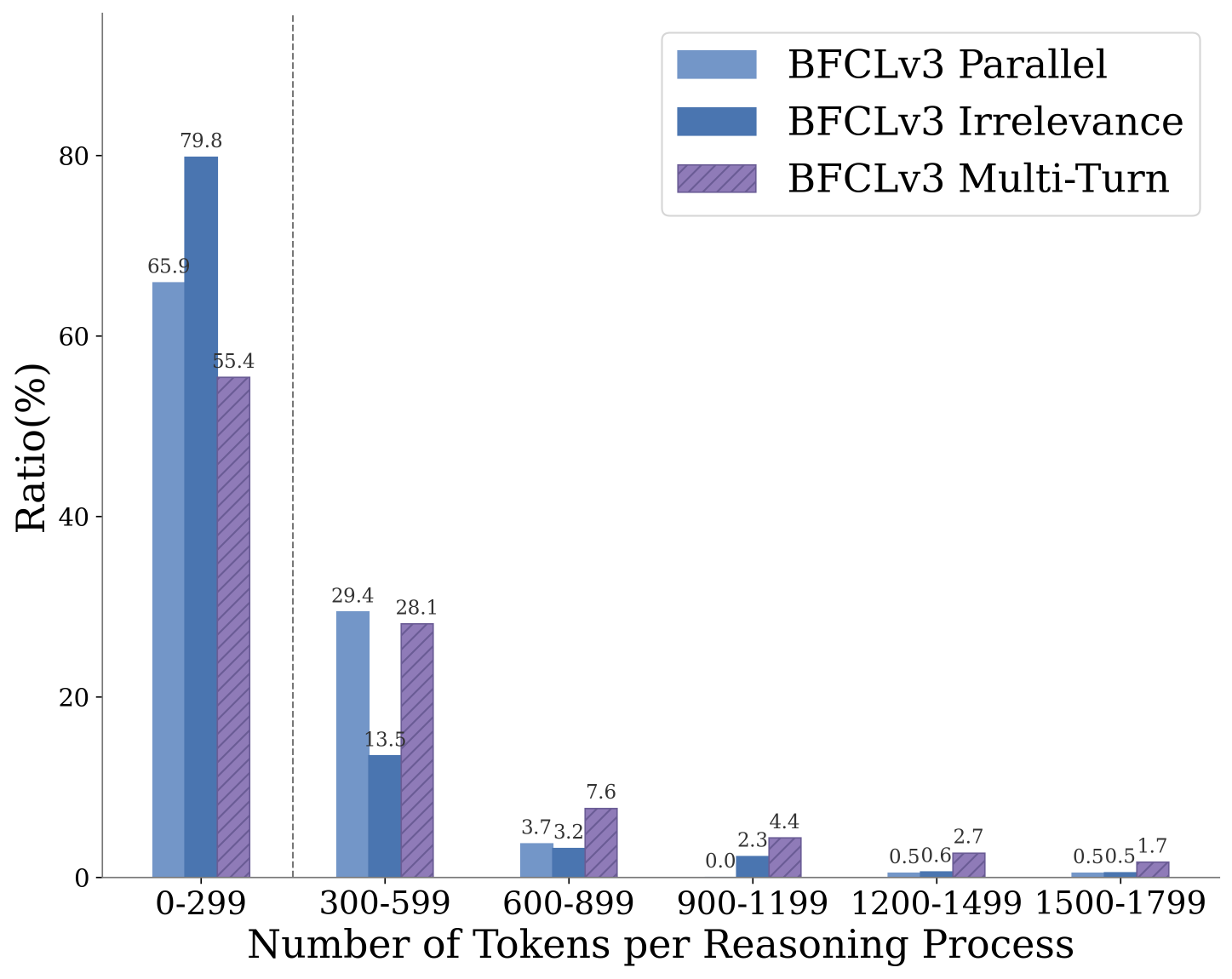}
\vspace{-3mm}
  \caption{\small Filering Length on Qwen3-8B}
  \label{fig:qwen3_8b_lazy_reasoning_length_filter}
\end{subfigure}
\begin{subfigure}{0.4\columnwidth}
\includegraphics[width=\textwidth]{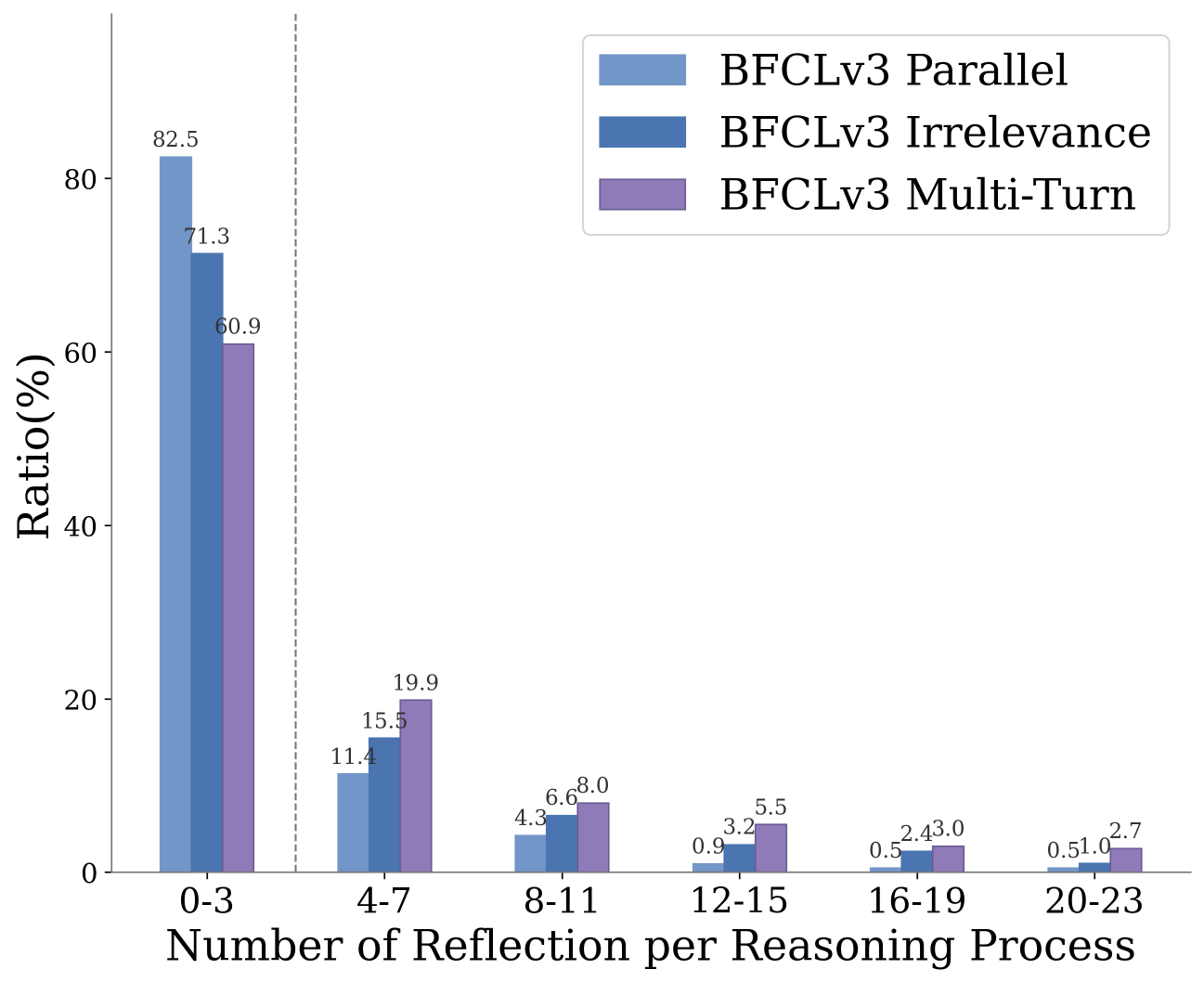}
\vspace{-3mm}
  \caption{\small Filtering Reflection on Qwen3-32B}
  \label{fig:qwen3_32b_lazy_reasoning_reflection_filter}
\end{subfigure}
\begin{subfigure}{0.4\columnwidth}
\centering
\includegraphics[width=\textwidth]{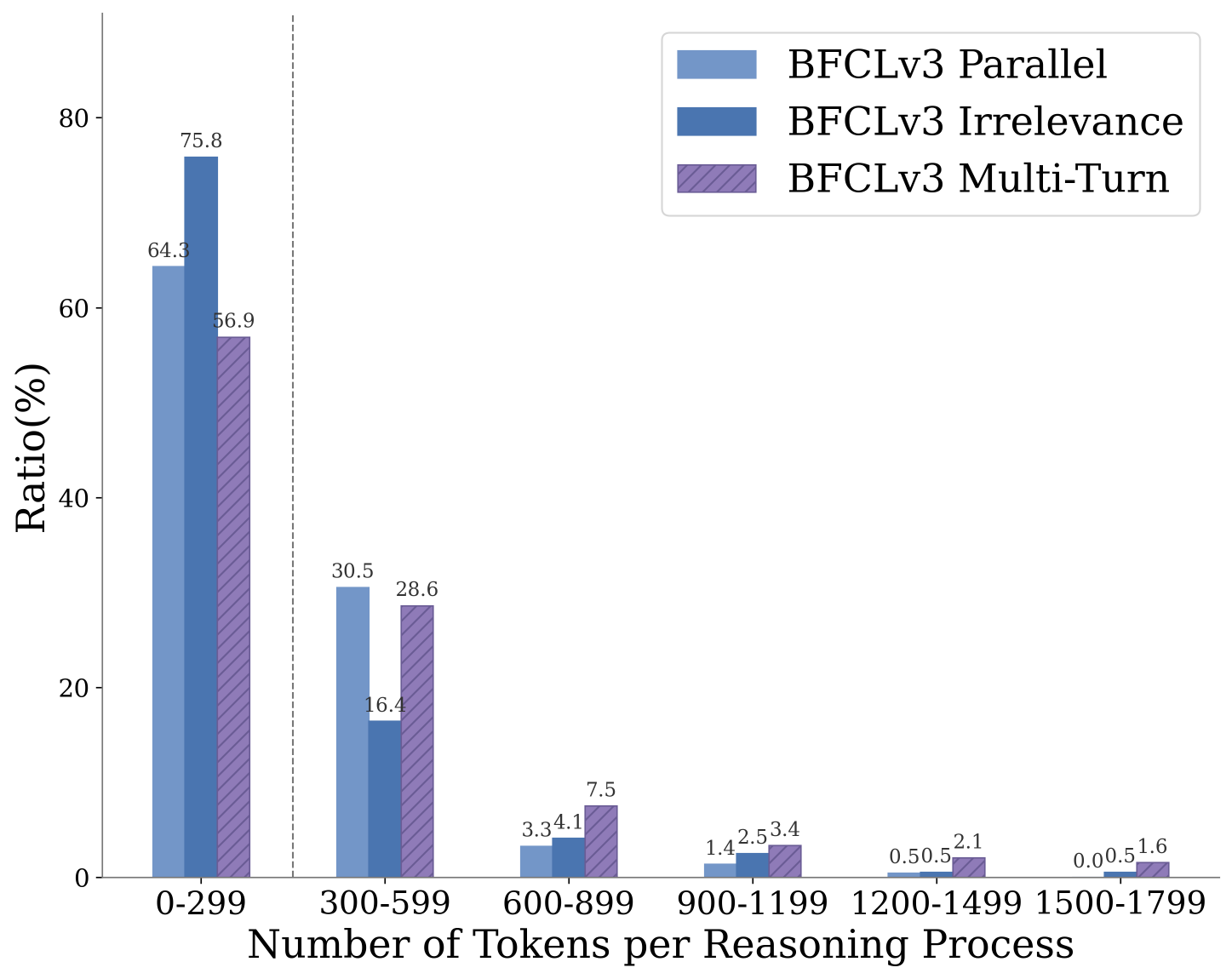}
  \caption{\small Filering Length on Qwen3-32B}
  \label{fig:qwen3_32b_lazy_reasoning_length_filter}
\end{subfigure}

\caption{\textbf{(a)} Filtering Lazy Reasoning from the reasoning processes of Qwen3-8B on BFCLv3 using reflection threshold-based filtering. \textbf{(b)} Filtering Lazy Reasoning from the reasoning processes of Qwen3-8B on BFCLv3 using token number threshold-based filtering \textbf{(c)} Filtering Lazy Reasoning from the reasoning processes of Qwen3-32B on BFCLv3 using reflection threshold-based filtering. \textbf{(d)} Filtering Lazy Reasoning from the reasoning processes of Qwen3-32B on BFCLv3 using token number threshold-based filtering.} 
\vspace{-3mm}
\label{fig:filtering_lazy_reasoning}
\end{figure*}

After observing the Lazy Reasoning phenomenon, we first analyze the reasoning processes of Qwen3-8B and Qwen3-32B models on BFCLv3. We use reflection keywords to count the number of reflections contained in each reasoning process, and then compile histograms as shown in Figure~\ref{fig:filtering_lazy_reasoning}. We define reasoning processes with more than 300 tokens and containing over 3 reflections as demonstrating Lazy Reasoning behavior, and apply this common filtering threshold across all tasks to ensure fairness. As can be observed, within each histogram interval, the Qwen3 models generate more reflections in Multi-Turn scenarios. Finally, we calculate the proportion of incorrect answers containing Lazy Reasoning relative to all answers, which yields Figure~\ref{fig:motivation_c} presented in the main text.

\subsubsection{Why LRMs have Lazy Reasoning?}
The question of why models exhibit Lazy Reasoning is both straightforward and complex. It is straightforward because models develop this phenomenon when encountering specific problems due to the influence of optimization methods and data distribution during the Supervised Fine-Tuning (SFT) or Reinforcement Learning (RL) stages. It is complex because current open-source models do not publicly disclose their training datasets and optimization approaches, and even when such information is available, few organizations possess sufficient resources to conduct comprehensive debugging research to determine at which stage and through what type of data introduction this phenomenon emerges. However, we reveal one possible cause of this phenomenon: models lack critical thinking patterns for certain specific problems. This finding advocates that future reasoning model training should consider adopting different thinking approaches for different tasks.

\subsubsection{Do all LRMs have Lazy Reasoning?}
The answer is yes. As defined in the main text, Lazy Reasoning is essentially a posterior concept. Whenever a LRM demonstrates low accuracy on a specific problem or category of problems, and engages in extensive ineffective reflection in the incorrect cases, the model exhibits Lazy Reasoning on those problems. This phenomenon can certainly be detected through testing, as current LRMs cannot demonstrate clear and effective reasoning processes across all problems in the world—while this remains a shared aspiration among algorithm researchers, it would require enormous costs to achieve.
As demonstrated in our filtering experiments, we can always identify such patterns using appropriate thresholds and certain metrics. More importantly, this phenomenon reveals that when training reasoning models, we need to inject some a prior problem-solving approaches for specific problems into the LRM. This approach can significantly help LRM avoid optimization difficulties during the RL process.

\subsubsection{Case Study of Addressing Lazy Reasoning}\label{ap:address lazy reasoning}
\begin{figure}[!h]
    \centering
    \includegraphics[width=0.7\linewidth]{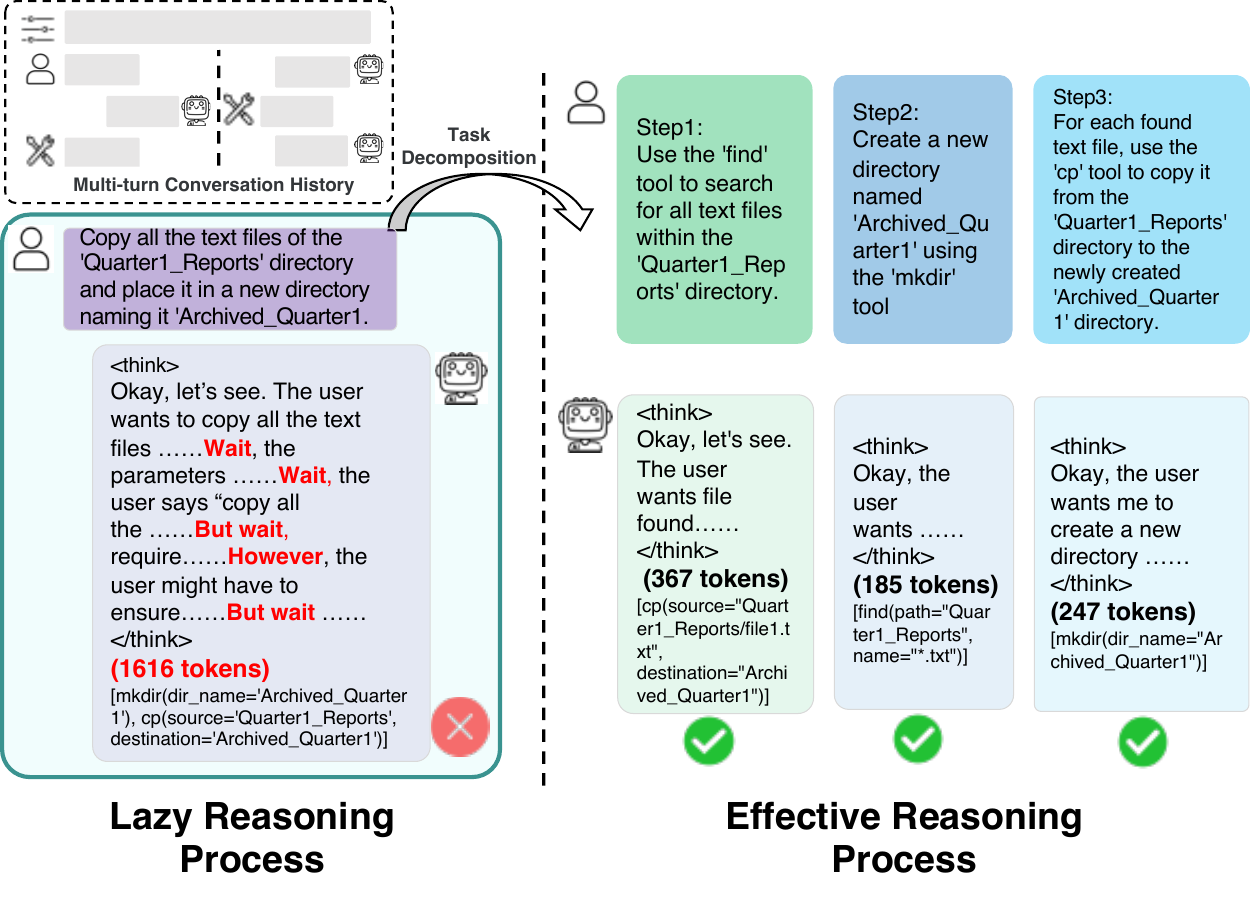}
    \caption{Workflow of converting Lazy Reasoning process to Effective Reasoning process using decomposed prompting. The model is Qwen3-8B, and the example is from the first question of BFCLv3 Multi-Turn Base task.}
    \label{fig:decompose_prompting}
\end{figure}

We employ prompts to guide the model in decomposing queries into subtasks. We then manually replace all queries in the tasks with these decomposed subtasks and sequentially collect the execution results of each subtask as the model's response to the original query. This approach significantly mitigates Lazy Reasoning. Figure~\ref{fig:decompose_prompting} demonstrates the workflow of using Qwen3-8B with Decomposed Prompting on BFCLv3 Multi-Turn Base task. As shown, the Qwen3-8B model requires 1,616 tokens for reasoning on original query. However, after decomposition into three subtasks, the reasoning lengths for each subtask are 367, 185, and 247 tokens respectively, totaling 799 tokens—only half of the original reasoning length. Moreover, the aggregated responses from the three subtasks form the correct answer. This case study provides compelling evidence for the feasibility of using Decomposed Prompting to alleviate Lazy Reasoning. Naturally, this decomposition approach does not achieve a 100\% success rate. We address this limitation in our D-CORE algorithm by refining the decomposition mechanism through the addition of ground truth and few-shot examples, thereby substantially improving the approach's feasibility.

\newpage
\subsection{Task Decomposition Prompts}\label{app:prompt}
\begin{codebox}
You are a task decomposition expert. Now you need to reverse-
engineer the process of breaking down complex queries into subtasks
based on the given information.

###Input Information:
1. System Policy:[Insert the system policy here]
2. Available Tool List:[{"name": Tool Name 1, "description":Function description}, {"name": Tool Name 2, "description": Function description}...]
3. Chat History:[Insert the chat history here]
4. Query:[Insert the query here]
5. Final Tool Invocation Results:[Call Tool A, Call Tool B]
...

## Task Requirements:
Based on the above information, please reverse-engineer a 
reasonable subtask decomposition process based on Query and Chat 
History. Just output the subtask list in following format.Do not 
include information in the subtask description that does not exist 
in the chat history and query.

## Output Format:
[{"step":1 , "description": Subtask 1}, {"step": 2, "description": Subtask 2}...]

<example_1>
...
</example_1>

<example_2>
...
</example_2>

<example_3>
...
</example_3>

Please begin the analysis:
\end{codebox}

\newpage
\subsection{Entropy Based Advantage.}
\label{app:entropy}
\begin{figure}[!h]
    \centering
    \includegraphics[width=\linewidth]{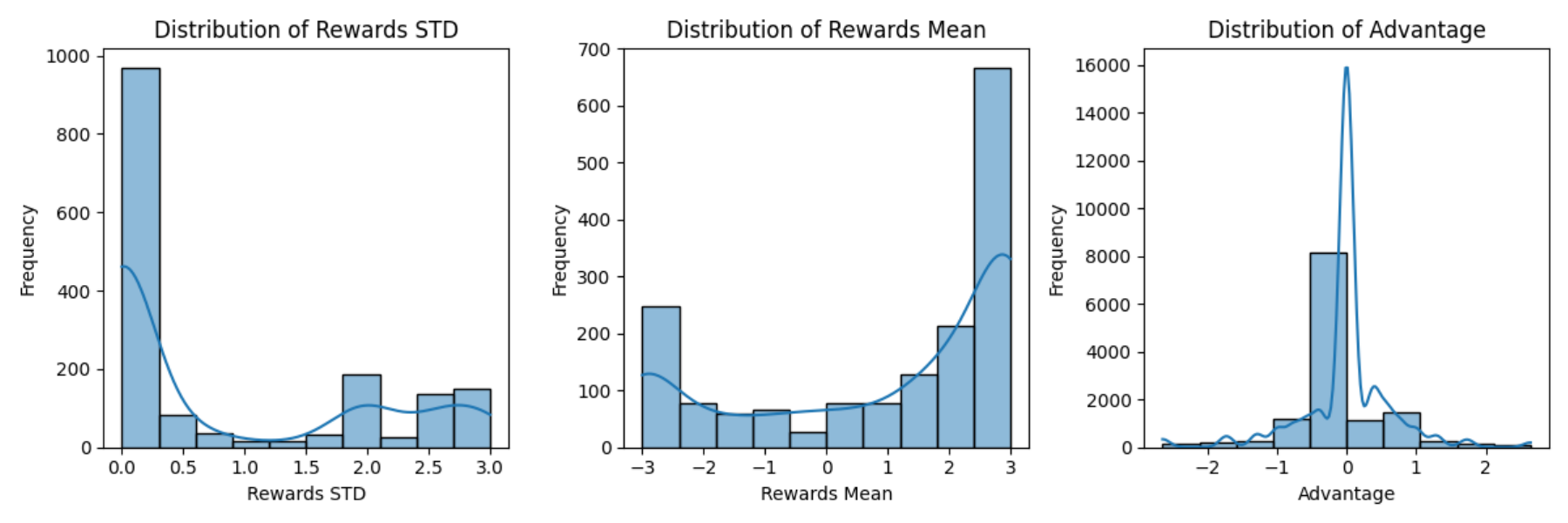}
    \vspace{-5mm}
    \caption{Distribution of Standard Deviation, Mean, and Advantage during GRPO Training on Qwen3-8B at step 0.}
    \label{fig:advantage_qwen3}
    \vspace{-3mm}
\end{figure}

\begin{figure}[!h]
    \centering
    \includegraphics[width=\linewidth]{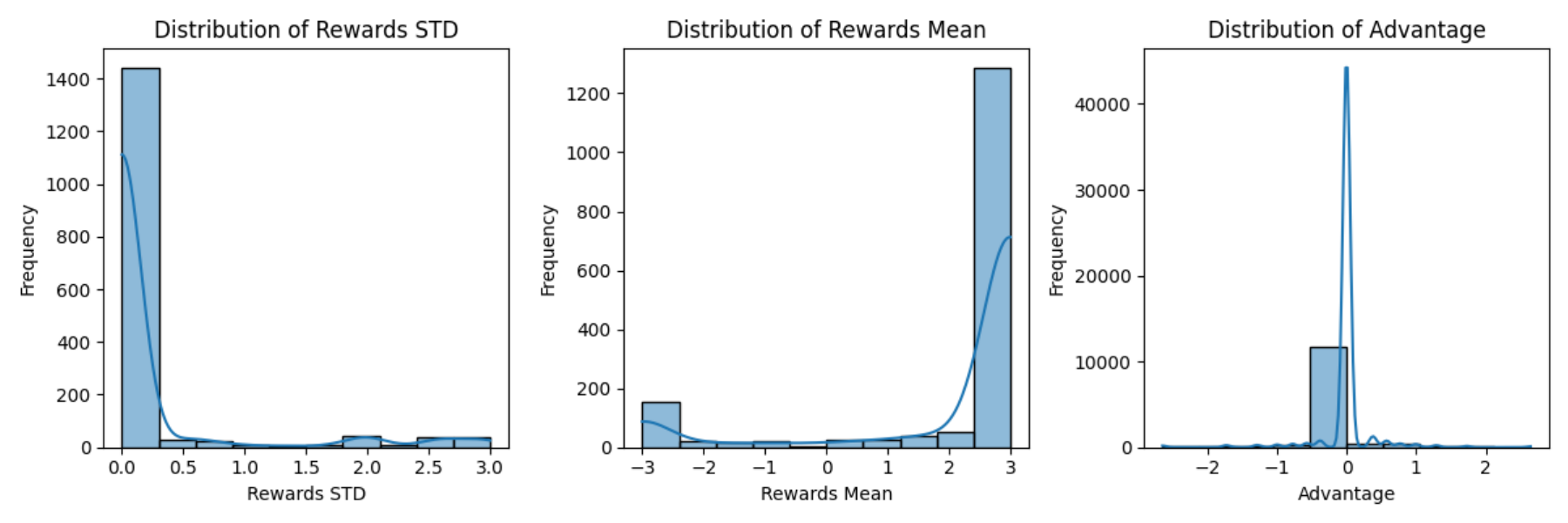}
    \vspace{-5mm}
    \caption{Distribution of Standard Deviation, Mean, and Advantage during GRPO Training on Self Distillation Qwen3-8B at step 0.}
    \label{fig:advantage_distillation}
    \vspace{-3mm}
\end{figure}

Figures~\ref{fig:advantage_qwen3} and \ref{fig:advantage_distillation} present the reward distributions at step 0 for the complex tool use dataset when performing GRPO with the original Qwen3-8B model and the self-distilled Qwen3-8B model, respectively. The results demonstrate that the self-distilled model exhibits more concentrated mean and variance in rewards, leading to advantage distributions heavily clustered around zero. This phenomenon indicates that self-distillation has already enhanced the model's capability for tool use, resulting in rollouts with high accuracy and consistency on RL training datasets. Consequently, numerous groups emerge with a mean reward of 3 and 0 variance. When these groups are processed through the advantage function, their collective advantage become zero.

\vspace{-3mm}
\begin{figure*}[h!]
\centering
\begin{subfigure}{0.4\columnwidth}
\centering
\includegraphics[width=\textwidth]{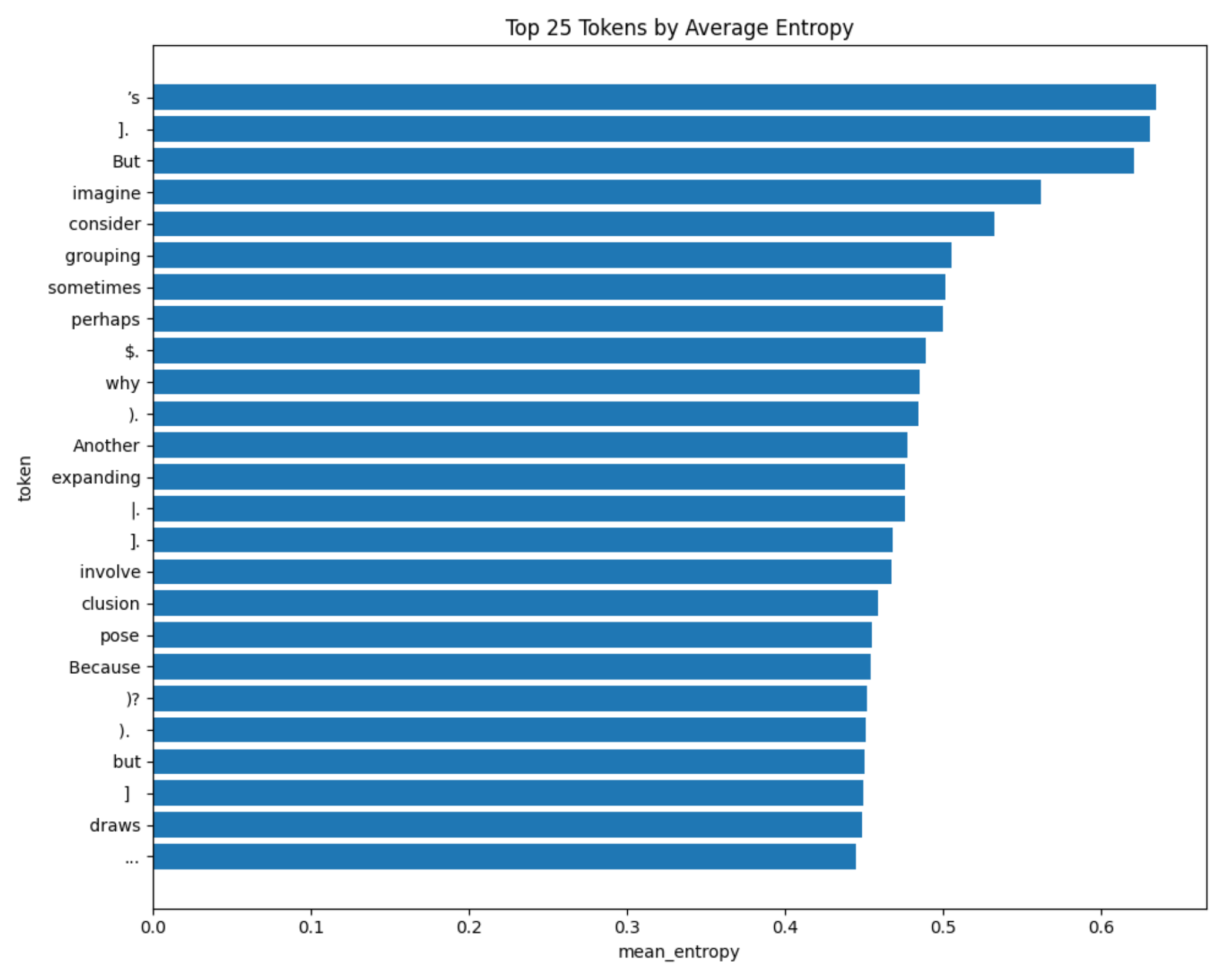}
  \caption{\small Top 25 tokens}
  \label{fig:top_entropy_token}
\end{subfigure}
\begin{subfigure}{0.4\columnwidth}
\centering
\includegraphics[width=\textwidth]{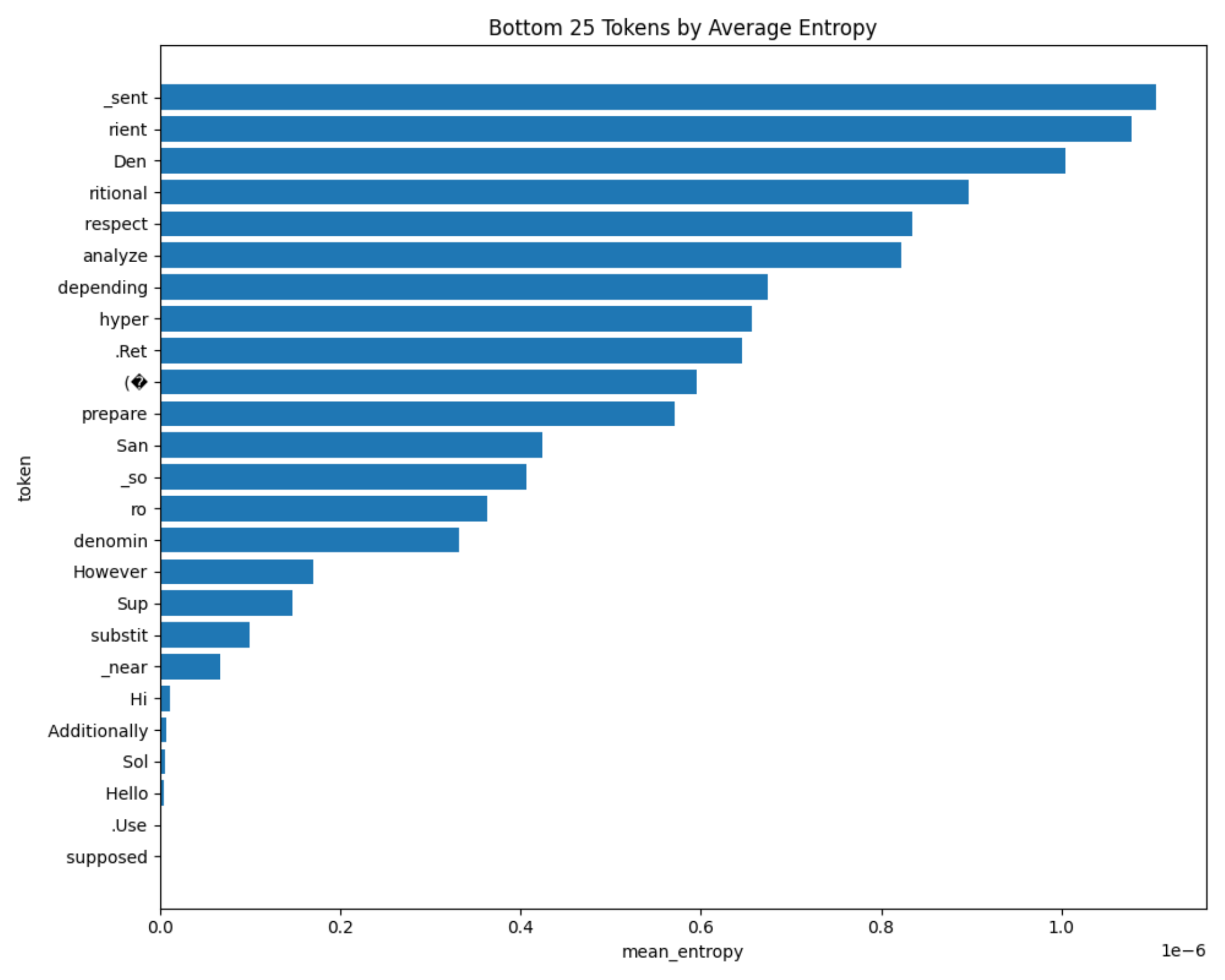}
  \caption{\small Bottom 25 tokens}
  \label{fig:bottom_entropy_token}
\end{subfigure}
\caption{\textbf{(a)} Top 25 tokens with highest average entropy. \textbf{(b)}Bottom 25 tokens with lowest average entropy.} 
\vspace{-3mm}
\label{fig:token_entropy}
\end{figure*}

 Figure~\ref{fig:token_entropy} displays the 25 tokens with the highest average entropy, and Figure 10b shows the 25 tokens with the lowest average entropy. From the distribution of these tokens, we can observe that high-entropy tokens contain words like \texttt{'but,'} \texttt{'perhaps,'} and \texttt{'because.'}. Based on these observations, combined with recent research developments~\citep{cheng2025reasoning}, we propose incorporating the average entropy of each rollout into the advantage function. This modification prevents advantages from clustering around zero and guides the model toward exploring tokens with higher entropy.

\newpage
\subsection{GRPO Failure Case Study}\label{ap:case study}
Below are three case studies comparing D-CORE-8B trained with GRPO, DA-GRPO ($\alpha$=0.1, $\sigma$=0.5), and DA-GRPO ($\alpha$=1.0, $\sigma$=1.0). The case is the 28th sample from the retail task in $\tau$-bench, where the user requests to return order \texttt{W7181492}. This order contains two items—\texttt{"5753502325"} and \texttt{"9851293632"}—thus requiring the model to invoke the \texttt{return delivered order items} function with parameter \texttt{item ids=["5753502325", "9851293632"]}. GRPO answered incorrectly with \texttt{item ids="5753502325"}, omitting \texttt{"9851293632"}. The underlying reason is that the model decomposed the task into 6 subtasks. In contrast, DA-GRPO with $\alpha$=0.1 successfully answered this question by rewarding high-entropy tokens, enabling the model to learn diverse reasoning processes and break free from the default 6-subtask reasoning path, leading to correct \texttt{item ids=["5753502325", "9851293632"]}. However, under the setting of $\alpha$=1.0 and $\sigma$=1.0, DA-GRPO introduces excessive high-entropy token advantages that interfere with the original reward advantage, biasing the model's reasoning process toward generating more high-entropy tokens such as \texttt{"wait"}, \texttt{"but"}, and \texttt{"however"}, ultimately leading to incorrect answers. This comparison clearly demonstrates how DA-GRPO diversifies model reasoning by rewarding high-entropy tokens, and how excessive reflection leads to failure when high-entropy token advantages dominate.

\begin{tcolorbox}[
colback=white,
colframe=black,
title=GRPO,
breakable
]
\textcolor{red}{\textit{\textbf{Ground truth}:}}
\begin{verbatim}
{
  "name": "return_delivered_order_items",
  "arguments": "{"order_id": "#W7181492", 
  "item_ids": ["5753502325", "9851293632"],
  "payment_method_id": "paypal_3024827"}"
}
\end{verbatim}

\textcolor{red}{\textit{\textbf{Reasoning Process}:}}\\
OK, let me analyze the overall task: Yes, and the reason is ordered by mistake. \\
To complete this task, I need to break it down into the following subtasks:\\
1. Process the return for the skateboard (Item ID: \lstinline{4293355847}) from order \lstinline{#W3792453} using the original payment method (PayPal).\\
2. Process the return for the garden hose (Item ID: \lstinline{5753502325}) from order \lstinline{#W7181492} using the original payment method (PayPal).\\
\textcolor{red}{3. Process the return for the backpack (Item ID: \lstinline{9851293632}) from order \lstinline{#W7181492} using the original payment method (PayPal).\\
4. Process the return for the mechanical keyboard (Item ID: \lstinline{9570044148}) from order \lstinline{#W5565470} using the original payment method (PayPal).\\
5. Process the return for the pet bed (Item ID: \lstinline{6857426243}) from order \lstinline{#W5565470} using the original payment method (PayPal).\\
6. Cancel the garden hose order (Order ID: \lstinline{#W2575533}) with the reason 'ordered by mistake'.\\}
I have already solve the subtask:\\
1.Process the return for the skateboard (Item ID: \lstinline{4293355847}) from order \lstinline{#W3792453} using the original payment method (PayPal).\\
Now I should analyze the execution process of subtask 2:\\
Process the return for the garden hose (Item ID: \lstinline{5753502325}) from order \lstinline{#W7181492} using the original payment method (PayPal).\\
Okay, let's see. The user wants to return the garden hose from order \lstinline{#W7181492}. First, I need to check if that order is delivered. From the previous tool responses, when we checked order \lstinline{#W7181492}, the status was "delivered". So, returning items from a delivered order is allowed.\\
The function to use here is \lstinline{return_delivered_order_items}. The parameters required are \lstinline{order_id, item_ids,} and \lstinline{payment_method_id}. The user provided the order ID as \lstinline{#W7181492}, the item ID as 5753502325, and the payment method is PayPal, which has the ID \lstinline{paypal_3024827}. \\
I should make sure that the item ID exists in the order. Looking back at the order details for \lstinline{#W7181492}, yes, the garden hose has item ID \lstinline{5753502325}. The payment method is the original one, which is correct as per the user's instruction. \\
No other parameters are needed, and the function call should be straightforward. Just need to format it correctly with the right parameters.
\end{tcolorbox}

\begin{tcolorbox}[
colback=white,
colframe=black,
title=DA-GRPO with \textit{$\alpha$=0.1, $\sigma$=0.5},
breakable
]
\textcolor{red}{\textit{\textbf{Ground truth}:}}
\begin{verbatim}
{
  "name": "return_delivered_order_items",
  "arguments": "{"order_id": "#W7181492", 
  "item_ids": ["5753502325", "9851293632"],
  "payment_method_id": "paypal_3024827"}"
}
\end{verbatim}

\textcolor{red}{\textit{\textbf{Reasoning Process}:}}\\
OK, let me analyze the overall task: Okay. Just process the returns and tell me the total refund amount.\\
To complete this task, I need to break it down into the following subtasks:\\
1. Process the return for order \lstinline{#W3792453} by calling \lstinline{return_delivered_order_items} with the item ID of the skateboard and using the original payment method \lstinline{(paypal_3024827)} for the refund.\\
2. Process the return for order \lstinline{#W7181492} by calling \lstinline{return_delivered_order_items} with the item IDs of the \textbf{garden hose and backpack}, and using the original payment method \lstinline{(paypal_3024827)} for the refund.\\
3. Process the return for order \lstinline{#W5565470} by calling \lstinline{return_delivered_order_items} with the item IDs of the mechanical keyboard and pet bed, and using the original payment method \lstinline{(paypal_3024827)} for the refund.\\
I have already solve the subtask:\\
1.Process the return for order \lstinline{#W3792453} by calling \lstinline{return_delivered_order_items} with the item ID of the skateboard and using the original payment method \lstinline{(paypal_3024827)} for the refund.\\
Now I should analyze the execution process of subtask 2:\\
Process the return for order \lstinline{#W7181492} by calling \lstinline{return_delivered_order_items} with the item IDs of the garden hose and backpack, and using the original payment method \lstinline{(paypal_3024827)} for the refund.\\
Okay, let's see. The user wants to return items from order \lstinline{#W7181492}. \textbf{The items are the garden hose and backpack}. I need to check the order details again to make sure those item IDs are correct.\\
Looking back, order \lstinline{#W7181492} has items with IDs \lstinline{5753502325} (Garden Hose) and \lstinline{9851293632} (Backpack). The user specified those item IDs, so that's correct. The payment method is PayPal, which is the original method. \\
I should call the \lstinline{return_delivered_order_items} function with \lstinline{order_id=#W7181492, item_ids=[5753502325]}, and \lstinline{payment_method_id=paypal_3024827}. \\
Need to make sure the parameters are in the right format and required fields are included. All required parameters are present, so the function call should work.\\
\end{tcolorbox}

\begin{tcolorbox}[
colback=white,
colframe=black,
title=DA-GRPO with \textit{$\alpha$=1.0, $\sigma$=1.0},
breakable
]
\textcolor{red}{\textit{\textbf{Ground truth}:}}
\begin{verbatim}
{
  "name": "return_delivered_order_items",
  "arguments": "{"order_id": "#W7181492", 
  "item_ids": ["5753502325", "9851293632"],
  "payment_method_id": "paypal_3024827"}"
}
\end{verbatim}

\textcolor{red}{\textit{\textbf{Reasoning Process}:}}\\
OK, let me analyze the overall task: Yes.\\
To complete this task, I need to break it down into the following subtasks:\\
1. Process the return for the specified items from order \lstinline{#W3792453, #W7181492, and #W5565470} by calling \lstinline{return_delivered_order_items}. The items to return are: skateboard (item ID: \lstinline{4293355847}), garden hose from order \lstinline{#W7181492 }(item ID: \lstinline{5753502325}), backpack from order \lstinline{#W7181492} (item ID: \lstinline{9851293632}), mechanical keyboard from order \lstinline{#W5565470} (item ID: \lstinline{9570044148}), and pet bed from order \lstinline{#W5565470} (item ID: \lstinline{6857426243}). The refund will be processed to the user's PayPal account \lstinline{(payment_method_id: paypal_3024827)}.\\
2. Cancel the pending order \lstinline{#W2575533} by calling \lstinline{cancel_pending_order}, as the user no longer needs the garden hose (item ID: \lstinline{5206946487}) and the order is in 'pending' status.\\
Now I should analyze the execution process of subtask 1: \\
Process the return for the specified items from order \lstinline{#W3792453}, \lstinline{#W7181492}, and \lstinline{#W5565470} by calling \lstinline{return_delivered_order_items}. The items to return are: skateboard (item ID: \lstinline{4293355847}), garden hose from order \lstinline{#W7181492} (item ID: \lstinline{5753502325}), backpack from order \lstinline{#W7181492} (item ID: \lstinline{9851293632}), mechanical keyboard from order \lstinline{#W5565470} (item ID: \lstinline{9570044148}), and pet bed from order \lstinline{#W5565470} (item ID: \lstinline{6857426243}). The refund will be processed to the user's PayPal account (\lstinline{payment_method_id: paypal_3024827}).\\
Okay, let's see. The user wants to return several items from different orders. The orders mentioned are \lstinline{#W3792453, #W7181492, and #W5565470}. The items to return are the skateboard from \lstinline{#W3792453}, the garden hose and backpack from \lstinline{#W7181492}, the mechanical keyboard and pet bed from \lstinline{#W5565470}. The refund should go to their PayPal account.\\
First, I need to check if these orders are in the 'delivered' status because returns can only be processed for delivered orders. From the previous interactions, the status of \lstinline{#W3792453} was delivered, \lstinline{#W7181492} was delivered, and \lstinline{#W5565470} was also delivered. So that's good.\\
Next, the function required for returning items is \lstinline{return_delivered_order_items}. The parameters needed are \lstinline{order_id, item_ids, and payment_method_id}. However, the function seems to require a single \lstinline{order_id} per call. The user has items from three different orders. \\
\textcolor{red}{Wait}, looking at the function definition again: the \lstinline{return_delivered_order_items} function takes an \lstinline{order_id}, which suggests that each call is for a single order. So, if the items are from different orders, I need to make multiple calls. But the user's instruction says to call \lstinline{return_delivered_order_items once}. That might not be possible because the function expects a single order ID each time. \\
\textcolor{red}{But} maybe the user expects a single call, but according to the function's parameters, each call is per order. So, perhaps I need to make three separate calls: one for each order. \textcolor{red}{However}, the user's message says \lstinline{\"call return_delivered_order_items\}" once, which might be a mistake. \textcolor{red}{Alternatively}, maybe the function allows multiple item IDs across different orders in one call, but the parameters don't specify that. The function's parameters have \lstinline{order_id} as a string, \lstinline{item_ids} as an array, but the \lstinline{order_id} is per order. \\
So, the correct approach would be to split the items into their respective orders and make separate function calls. For example:\\
1. For order \lstinline{#W3792453}, return the skateboard (item ID \lstinline{4293355847}).\\
2. For order \lstinline{#W7181492}, return the garden hose (\lstinline{5753502325}) and backpack (\lstinline{9851293632}).\\
3. For order \lstinline{#W5565470}, return the mechanical keyboard (\lstinline{9570044148}) and pet bed (\lstinline{6857426243}).\\
Each of these would require a separate call to \lstinline{return_delivered_order_items} with the respective \lstinline{order_id} and \lstinline{item_ids}. The \lstinline{payment_method_id} would be the same \lstinline{(paypal_3024827)} for all, as specified.\\
\textcolor{red}{However}, the user's instruction says to process the return by calling the function once. This might be an error,\textcolor{red}{but} given the function's parameters, it's not possible to handle all items in a single call. Therefore, I should proceed by making multiple function calls as needed.
\end{tcolorbox}

 \subsection{Experimental Details}\label{ap:experiments}
 \subsubsection{Self-Distillation}
 In the data generation phase described in Algoritm \ref{alg:unified}, we deploy Qwen3-8B and Qwen3-14B models using vLLM 0.8.2 on a single 80G A100 GPU. For the Qwen3-8B model, it takes 25 hours to generate 40,000 data samples, while the Qwen3-14B model requires 30 hours to generate 40,000 training samples.

 At the SFT stage, all of our experiments are performed on 8x80G A100 GPUs, using the Qwen3 model as the base for each independent experiment. We utilized the Huggingface Transformers library with version 4.51.3 to execute our training. During training, we employed DeepSpeed Zero3 optimization and Flashattention-2 to enhance memory efficiency. Learning rate is 1.0e-05, using cosine learning rate scheduler type, max context size during training is 16384. Batch size is 2 per GPU. Specifically, when processing multi-turn training data, we split each multi-turn tool-use sample into multiple samples based on the number of turns. For these split samples, we only compute the loss for the final turn's answer in each sample, excluding the loss calculation for intermediate turns' answers. We then pack multiple samples into a single sample of length 16,384 using packing techniques to accelerate training. Additionally, the packed samples employ attention mask isolation to prevent contamination. The training time is 11 hours for Qwen3-8B and 21 hours for Qwen3-14B model, with both models trained for 3 epochs.
 
\subsubsection{Diversity-Aware Reinforcement Learning}
 Our reinforcement learning training uses verl version 0.5.0, with all experiments conducted on 8×80G A100 GPUs. We set the advantage clipping ratios (adv clip ratio low, adv clip ratio high, and adv clip ratio) to 0.2. For KL divergence settings, we disable KL usage in reward calculation ({use kl in reward=False}) and set kl coef to 0.0, as GRPO employs KL loss in the actor rather than the reward. We enable KL loss in the actor (use kl loss=True) with a coefficient of 0.001 and use the 'low var kl' loss type. The maximum prompt length is set to 8,192 tokens and maximum response length to 4,096 tokens, with overlong prompts filtered out. For actor rollout sampling, we use a temperature of 1.0, top p of 1.0, top k of -1 (indicating vLLM rollout), and validation top p of 0.7. For diversity-aware settings, both Qwen3-8B and Qwen3-14B models are trained with $\alpha$ = 0.1 and $\sigma$ = 0.5. The training time is 11 hours for Qwen3-8B and 17 hours for Qwen3-14B model, with both models trained for 3 epochs.

\subsection{Proof of DA-GRPO}
\label{ap:proof_of_da_grpo}
The objective of DA-GRPO:
\begin{equation}
    \label{eq:GRPO}
    \begin{aligned}
    \mathcal{J}_\text{DA-GRPO}(\theta) &= \mathbb{E}[\text{min}( \frac{\pi_{\theta}(y_{i,t}\mid x_i,y_{i,<t})}{\pi_\text{old}(y_{i,t}\mid x_i,y_{i,<t})}\hat{A}_{i,t}, \text{clip}(\frac{\pi_\theta(y_{i,t}\mid x_i,y_{i,<t})}{\pi_\text{old}(y_{i,t}\mid x_i,y_{i,<t})}, 1 - \epsilon, 1 + \epsilon)\hat{A}_{i,t})\\
    & \quad- \lambda \mathbb{D}_\text{KL}[\pi_\theta||\pi_\text{ref}]],
    \end{aligned}
\end{equation}

where the advantage function as:
\begin{equation}
\hat{A}_{i,t} = \begin{cases}
A_{i,t}, & \text{if } A_{i,t} \neq 0, \\
\psi(\mathcal{H}_{i,t}), & \text{if } A_{i,t} = 0,
\end{cases}
\end{equation}
where $\mathcal{H}_{i,t} = -\sum_{v} \pi_\theta(v|x_i, y_{i,<t}) \log \pi_\theta(v|x_i, y_{i,<t})$ is the entropy of the policy distribution at position $t$. We then define importance sampling ratio $r_{i,t}(\theta) = (\frac{\pi_\theta(y_{i,t}\mid x_i,y_{i,<t})}{\pi_\text{ref}(y_{i,t}\mid x_i,y_{i,<t})}$, the objective of DA-GRPO becomes:

\begin{equation}
\mathcal{J}_\text{DA-GRPO}(\theta) = \mathbb{E}[\text{min}( r_{i,t}(\theta)\hat{A}_{i,t}, \text{clip}(r_{i,t}(\theta), 1 - \epsilon, 1 + \epsilon)\hat{A}_{i,t}) - \lambda \mathbb{D}_\text{KL}[\pi_\theta||\pi_\text{ref}]].
\end{equation}

After omitting the KL term, the policy gradient of DA-GRPO can be formulated as:
\begin{equation}
\nabla_\theta \mathcal{J}_{\text{DA-GRPO}} = \mathbb{E}[r_{i,t}(\theta) \hat{A}_{i,t} \nabla_\theta \log \pi_\theta(y_{i,t} | x_i, y_{i,<t})]
\end{equation}

Define two disjoint index sets:
\begin{align}
\mathcal{T}_{\neq 0} &= \{(i,t) : \hat{A}_{i,t} \neq 0\}, \\
\mathcal{T}_{=0} &= \{(i,t) : \hat{A}_{i,t} = 0\}.
\end{align}
Then the policy gradient decomposes as:
\begin{equation}
\begin{aligned}
\nabla_\theta \mathcal{J}_{\text{DA-GRPO}} = \underbrace{\frac{1}{G}\sum_{(i,t) \in \mathcal{T}_{\neq 0}} \frac{1}{|y_i|} r_{i,t}(\theta) A_{i,t} \nabla_\theta \log \pi_\theta(y_{i,t} | x_i, y_{i,<t})}_{\text{Original GRPO Term}} \\
\quad + \underbrace{\frac{1}{G}\sum_{(i,t) \in \mathcal{T}_{=0}} \frac{1}{|y_i|} r_{i,t}(\theta) \psi(\mathcal{H}_{i,t}) \nabla_\theta \log \pi_\theta(y_{i,t} | x_i, y_{i,<t})}_{\text{Entropy Advantage Term}}
\end{aligned}
\end{equation}

\begin{theorem}[Prevention of Learning Stagnation]
\label{thm:stagnation}
Let $\mathcal{T}_{=0} \neq \emptyset$ be the set of positions where $A_{i,t} = 0$. If there exists $(i,t) \in \mathcal{T}_{=0}$ such that:
\begin{enumerate}
    \item $r_{i,t}(\theta) \neq 0$
    \item $\psi(\mathcal{H}_{i,t}) > 0$ (i.e., $\pi_\theta(\cdot|q, o_{i,<t})$ is not degenerate)
\end{enumerate}
Then $\|\nabla_\theta \mathcal{J}_{\text{DA-GRPO}}\| > 0$, ensuring continued learning.
\end{theorem}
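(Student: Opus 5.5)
The plan is to argue directly from the decomposition of $\nabla_\theta \mathcal{J}_{\text{DA-GRPO}}$ into the ``Original GRPO Term'' over $\mathcal{T}_{\neq 0}$ and the ``Entropy Advantage Term'' over $\mathcal{T}_{=0}$ derived just above. I will first treat the regime the theorem is designed for: a group whose rewards are all equal, so every position of every rollout in that group lies in $\mathcal{T}_{=0}$. In that case the GRPO term contributes nothing, and the gradient reduces to
\[
g \;=\; \frac{1}{G}\sum_{(i,t)\in\mathcal{T}_{=0}} c_{i,t}\,\nabla_\theta \log \pi_\theta(y_{i,t}\mid x_i,y_{i,<t}),
\qquad c_{i,t} = \frac{1}{|y_i|}\, r_{i,t}(\theta)\,\psi(\mathcal{H}_{i,t}).
\]
Each $r_{i,t}(\theta)$ is a ratio of probabilities, hence nonnegative. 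Each $\psi(\mathcal{H}_{i,t})=\min(\alpha\mathcal{H}^{\mathrm{detach}}_{i,t},\delta)$ is nonnegative and is detached. So all coefficients satisfy $c_{i,t}\ge 0$, and hypotheses 1--2 give at least one strictly positive coefficient $c_{i^*,t^*}>0$.

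The second step is to show that a single summand is nonzero. I will use the chain rule through the logits $z_{i,t}$ at position $(i,t)$:
\[
\nabla_\theta \log\pi_\theta(y_{i,t}\mid\cdot) = J_{i,t}^\top\bigl(e_{y_{i,t}}-\pi_\theta(\cdot\mid x_i,y_{i,<t})\bigr),
\]
where $J_{i,t}$ is the Jacobian of the logits with respect to $\theta$. Hypothesis 2 says the distribution is non-degenerate: positive entropy means it is not a point mass. Hence $e_{y_{i,t}}-\pi_\theta\neq 0$. Under a mild rank condition on $J_{i,t}$, which I will state explicitly, this summand is nonzero. One sufficient choice is that the output-layer parameters can move the logit of $y_{i,t}$ independently.

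The main obstacle is the third step: ruling out cancellation among several nonnegatively weighted vectors, and, outside the all-equal-reward regime, between the entropy term and the GRPO term. Nonnegative coefficients alone do not prevent the vectors $\nabla_\theta\log\pi_\theta(y_{i,t}\mid\cdot)$ from summing to zero. I would first try a directional-derivative argument. Take a perturbation $u$ of the output-layer parameters that raises every sampled token's log-probability at the positions with $c_{i,t}>0$, for instance through a per-context logit adjustment. Then $\langle g,u\rangle=\sum c_{i,t}\,\partial_u\log\pi_\theta(y_{i,t}\mid\cdot)>0$, which forces $g\neq 0$. If no such $u$ exists for a general shared parametrization, I will add the existence of such a direction as an explicit genericity assumption. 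For the mixed case, I would argue that the two terms are generically not exact negatives of each other: the $\psi$ values are continuous in $\theta$ and detached, so an exact cancellation can hold only on a measure-zero set of parameters. I would present that caveat honestly rather than claim an unconditional inequality $\|\nabla_\theta\mathcal{J}_{\text{DA-GRPO}}\|>0$.
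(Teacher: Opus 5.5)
You follow the paper's own route. It also assumes every position has $A_{i,t}=0$ so that the GRPO term vanishes, picks one $(i,t)$ satisfying the two hypotheses, and argues that its summand $g_{i,t}$ is nonzero. It then concludes with $\|\nabla_\theta\mathcal{J}_{\text{DA-GRPO}}\|\ge\|g_{i,t}\|>0$. That final inequality is exactly the non-cancellation step you flag as the main obstacle, and the paper gives no justification for it; the triangle inequality only bounds the norm of a sum from above.

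You are right not to take that step, because the theorem as stated is false, even in the all-equal-reward regime the paper treats. Take $G=2$ with single-token responses, a two-token vocabulary with tabular logits $\theta=z$, and $\pi_\theta=(1/2,1/2)$. Evaluate at $\theta=\theta_{\text{old}}$, so $r_{i,1}=1$, and let the two rollouts sample different tokens and receive equal rewards. Then $A_{i,1}=0$ and $\psi=\min(\alpha\log 2,\delta)>0$ at both positions, yet
\[
\nabla_\theta\mathcal{J}_{\text{DA-GRPO}}=\tfrac{\psi}{2}\bigl[(e_1-\pi_\theta)+(e_2-\pi_\theta)\bigr]=0 .
\]
The same holds with the token-surprisal version of $\mathcal{H}_{i,t}$ used in the entropy-reduction theorem. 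So your extra hypotheses cannot be dropped: the rank condition on the logit Jacobian, the common ascent direction $u$, and either $\mathcal{T}_{\neq 0}=\emptyset$ or an explicit non-cancellation assumption. They make explicit what the paper leaves implicit in ``$\nabla_\theta\log\pi_\theta\neq 0$ for non-degenerate distributions'' and in its silent restriction to $\mathcal{T}_{\neq 0}=\emptyset$.

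Your third step needs two refinements.

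\emph{When the per-context adjustment exists.} It does not always exist, and you can say exactly when it does. By Gordan's theorem of the alternative, a direction with positive inner product against every positively weighted $\nabla\log\pi_\theta(y_{i,t}\mid\cdot)$ exists if and only if no nontrivial nonnegative combination of these vectors vanishes. In logit space, with each context $\kappa$ having its own logits, this holds if and only if, at every context, the set $S_\kappa$ of positively weighted sampled tokens is not all of $\mathcal{V}$. If $S_\kappa\neq\mathcal{V}$, the shift $d_\kappa=\mathbf{1}_{S_\kappa}$ gives $\langle e_y-\pi_\kappa,d_\kappa\rangle=1-\pi_\kappa(S_\kappa)>0$ for all $y\in S_\kappa$. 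If $S_\kappa=\mathcal{V}$, the combination $\sum_y\pi_\kappa(y)(e_y-\pi_\kappa)=0$ blocks it, and the counterexample above is exactly this case. For LLM vocabularies and small $G$ the condition is automatic. The real assumption is therefore that the shared parameters can realize all the shifts $d_\kappa$ simultaneously, for example linearly independent final hidden states at the distinct contexts when perturbing only the output layer.

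\emph{The measure-zero claim.} Continuity of $\psi$ does not make the cancellation set measure-zero, since continuous functions can vanish on sets of positive measure. You would need real-analyticity in $\theta$ on each piece (the $\min$ in $\psi$ creates kinks), plus a separate argument that the gradient is not identically zero there. It is cleaner to state non-cancellation as a hypothesis.
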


\begin{proof}
Assume all $(i,t)$ satisfy $A_{i,t} = 0$, so $\mathcal{T}_{\neq 0} = \emptyset$ and $\mathcal{T}_{=0} = \{(i,t) : 1 \leq i \leq G, 1 \leq t \leq |o_i|\}$.

The original GRPO gradient is:
\begin{equation}
\nabla_\theta \mathcal{J}_{\text{GRPO}} = \frac{1}{G}\sum_{i=1}^{G} \frac{1}{|y_i|}\sum_{t=1}^{|y_i|} r_{i,t}(\theta) \cdot 0 \cdot \nabla_\theta \log \pi_\theta(y_{i,t} | x_i, y_{i,<t}) = 0
\end{equation}

The modified gradient becomes:
\begin{equation}
\nabla_\theta \mathcal{J}_{\text{DA-GRPO}} = \frac{1}{G}\sum_{(i,t) \in \mathcal{T}_{=0}} \frac{1}{|y_i|} r_{i,t}(\theta) \psi(\mathcal{H}_{i,t}) \nabla_\theta \log \pi_\theta(y_{i,t} | x_i, y_{i,<t})
\end{equation}

By the fundamental property of entropy:
\begin{equation}
\mathcal{H}_{i,t} = -\sum_a \pi_\theta(a|\cdot)\log\pi_\theta(a|\cdot) \geq 0
\end{equation}
with equality if and only if $\exists a^*: \pi_\theta(a^*|q, o_{i,<t}) = 1$ (degenerate distribution).

Consider any $(i,t) \in \mathcal{T}_{=0}$ satisfying conditions (1) and (2). The gradient contribution from this term is:
\begin{equation}
g_{i,t} = \frac{1}{G|o_i|} r_{i,t}(\theta) \psi(\mathcal{H}_{i,t}) \nabla_\theta \log \pi_\theta(o_{i,t} | q, o_{i,<t})
\end{equation}

Since:
\begin{itemize}
    \item $r_{i,t}(\theta) \neq 0$ (by condition 1)
    \item $\psi(\mathcal{H}_{i,t}) > 0$ (by condition 2)
    \item $\nabla_\theta \log \pi_\theta(o_{i,t} | q, o_{i,<t}) \neq 0$ for non-degenerate distributions
\end{itemize}

We have $\|g_{i,t}\| > 0$, which implies:
\begin{equation}
\|\nabla_\theta \mathcal{J}_{\text{new}}\| \geq \|g_{i,t}\| > 0
\end{equation}

Therefore, the gradient is non-zero and learning continues.
\end{proof}

\begin{theorem}[Entropy Reduction Property of DA-GRPO]
\label{thm:entropy_reduction}
When $A_{i,t} = 0$ for some token position $(i,t)$, DA-GRPO encourages the generation of high-entropy tokens by reducing their entropy. Specifically, for $(i,t) \in \mathcal{T}_{=0}$, the gradient contribution is:
\begin{equation}
\nabla_\theta \mathcal{J}_{i,t} = \frac{1}{G|y_i|} r_{i,t}(\theta) \psi(\mathcal{H}_{i,t}) \nabla_\theta \log \pi_\theta(y_{i,t} | x_i, y_{i,<t})
\end{equation}
where $\mathcal{H}_{i,t} = -\log \pi_{\theta_{\text{old}}}(y_{i,t} | x_i, y_{i,<t})$ is the detached cross-entropy. Since $r_{i,t}(\theta) > 0$ and tokens with higher $\mathcal{H}_{i,t}$ (lower probability under $\pi_{\theta_{\text{old}}}$) receive stronger positive gradients, DA-GRPO preferentially increases the probability of high-entropy tokens, thereby reducing their entropy and making them more likely to be generated.
\end{theorem}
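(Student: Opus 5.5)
The plan is to work at the level of a single token position $(i,t)\in\mathcal{T}_{=0}$ and to make each qualitative claim of the theorem precise through a first-order (one-gradient-step) analysis. Throughout I would use the definition given in the statement, $\mathcal{H}_{i,t}=-\log\pi_{\theta_{\text{old}}}(y_{i,t}\mid x_i,y_{i,<t})$, treated as a detached constant. I would also assume a softmax parameterization $\pi_\theta(\cdot\mid x_i,y_{i,<t})=\mathrm{softmax}(z_\theta)$ at that position.

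\textbf{Step 1 (form of the gradient).} I read off the gradient contribution directly from the decomposition of $\nabla_\theta\mathcal{J}_{\text{DA-GRPO}}$ into the Original GRPO Term and the Entropy Advantage Term derived just before Theorem~\ref{thm:stagnation}. The clipping is omitted as in the main text. Since $\hat A_{i,t}=\psi(\mathcal{H}_{i,t})$ on $\mathcal{T}_{=0}$, this yields exactly the displayed expression for $\nabla_\theta\mathcal{J}_{i,t}$. Positivity of $r_{i,t}(\theta)$ follows because the sampled token has $\pi_{\theta_{\text{old}}}(y_{i,t}\mid\cdot)>0$, and softmax gives $\pi_\theta>0$.

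\textbf{Step 2 (monotone weighting).} The function $\psi(h)=\min(\alpha h,\delta)$ is nonnegative and nondecreasing in $h$, and strictly increasing on $[0,\delta/\alpha)$. Hence the scalar coefficient $\frac{1}{G|y_i|}r_{i,t}\psi(\mathcal{H}_{i,t})$ is nonnegative, and it is larger for tokens with lower $\pi_{\theta_{\text{old}}}$-probability, up to the cap $\delta$. A gradient-ascent step $\theta\leftarrow\theta+\eta\nabla_\theta\mathcal{J}_{i,t}$ changes $\log\pi_\theta(y_{i,t}\mid\cdot)$ by
\[
\eta\,c_{i,t}\,\|\nabla_\theta\log\pi_\theta(y_{i,t}\mid\cdot)\|^2+O(\eta^2),
\]
where $c_{i,t}$ denotes that coefficient. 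This change is nonnegative and ordered by $\psi(\mathcal{H}_{i,t})$. That establishes the ``stronger positive gradient'' and ``more likely to be generated'' parts of the claim. Strictness comes from the non-degeneracy argument already used in Theorem~\ref{thm:stagnation}.

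\textbf{Step 3 (entropy change), the main obstacle.} In logit space the ascent direction is $u=e_{y}-\pi$, with $y=y_{i,t}$. Differentiating the Shannon entropy of the softmax gives
\[
\frac{d}{d\eta}H\big(\mathrm{softmax}(z+\eta u)\big)\Big|_{\eta=0}=-\pi_y(\log\pi_y+H)+\sum_v\pi_v^2(\log\pi_v+H),
\]
which is $-\mathrm{Cov}_{v\sim\pi}(\log\pi_v,\,u_v)$. This quantity is not negative unconditionally. For instance, pushing up a very unlikely token from a peaked distribution can raise the entropy.

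My first attempt would therefore state the entropy-reduction part under an explicit sufficient condition. The natural one is that the sampled token's surprisal does not exceed the positional entropy, i.e.\ $-\log\pi_y\le H$, possibly together with a bound on the remaining mass. Under that condition the covariance is nonnegative, so the entropy at position $(i,t)$ decreases at first order. I would then remark that this condition is typical for sampled tokens in expectation. The expected surprisal equals $H$, and averaging the derivative over $y\sim\pi$ gives $\mathbb{E}_y[u]=0$, so the pure first-order entropy change vanishes on average. The $\psi$-weighting then tilts the update toward the high-surprisal tokens. I expect this step, reconciling the informal ``reducing their entropy'' with the sign-indefinite derivative, to require restricting or reinterpreting the claim rather than proving it verbatim.
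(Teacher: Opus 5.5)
Your Steps 1 and 2 are, in substance, the paper's entire proof. The paper reads off the same per-token contribution. It uses $r_{i,t}>0$ and $\mathcal{H}_{i,t}\ge 0$ to conclude that the update raises $\log\pi_\theta(y_{i,t}\mid x_i,y_{i,<t})$. It illustrates the $\mathcal{H}$-dependent magnitude with the $0.1$ versus $0.8$ example; your handling of the cap $\delta$ is more careful than its ``proportional to $\mathcal{H}_{i,t}$''. The paper then gets ``entropy reduction'' immediately by taking the relevant entropy to be the sampled token's current surprisal $-\log\pi_\theta(y_{i,t}\mid x_i,y_{i,<t})$. That quantity decreases precisely because $\pi_\theta(y_{i,t}\mid\cdot)$ increases. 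Under that reading you are finished after Step 2.

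The gap is in Step 3, where you switch to the Shannon entropy $H$ of the positional distribution: your proposed sufficient condition does not make the covariance nonnegative. Write $f_v=\log\pi_v+H$, so that $\mathbb{E}_\pi f=0$. Your derivative is then
\[
\frac{dH}{d\eta}=-\pi_y f_y+\sum_v\pi_v^2 f_v,\qquad \sum_v\pi_v^2 f_v=\mathrm{Cov}_\pi(\pi_v,\log\pi_v)\ge 0,
\]
where the inequality holds because $\pi_v$ and $\log\pi_v$ are comonotone. The condition $-\log\pi_y\le H$, i.e.\ $f_y\ge 0$, controls only the first term. For instance, take $\pi=(0.5,0.4,0.1)$ and $y$ the token of probability $0.4$. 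Then $-\log\pi_y\approx 0.916<0.943\approx H$, yet $dH/d\eta\approx -0.011+0.053>0$.

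A condition that does work is that $y$ is a mode of $\pi$. Then $\pi_v^2 f_v\le\pi_v\,\pi_y f_y$ whenever $f_v>0$, so $\sum_v\pi_v^2 f_v\le\pi_y f_y$. But this is the opposite of the tokens the theorem is about. If $-\log\pi_y>H$, then $f_y<0$, both terms are nonnegative, and pushing $y$ up strictly \emph{increases} $H$.

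The $\psi$-tilt you invoke at the end does not help either. Take $\theta=\theta_{\text{old}}$ and $\psi(h)=\alpha h$ unsaturated. Averaging the logit updates $\psi(-\log\pi_y)(e_y-\pi)$ over $y\sim\pi$ gives the direction $-\alpha\,\pi\odot f$, along which $dH/d\eta=\alpha\sum_v\pi_v^2 f_v^2\ge 0$. So the Shannon-entropy version of the claim is false for exactly the relevant tokens, and no side condition of your type repairs it. The statement is provable only in the paper's surprisal sense. You should drop Step 3 and state explicitly that ``entropy'' there means $-\log\pi_\theta(y_{i,t}\mid\cdot)$.
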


\begin{proof}
For any token position $(i,t) \in \mathcal{T}_{=0}$ where $A_{i,t} = 0$, the gradient contribution is:
\begin{equation}
\nabla_\theta \mathcal{J}_{i,t} = \frac{1}{G|y_i|} r_{i,t}(\theta) \psi(\mathcal{H}_{i,t}) \nabla_\theta \log \pi_\theta(y_{i,t} | x_i, y_{i,<t})
\end{equation}

\textbf{Key observations:}
\begin{enumerate}
    \item The entropy term $\mathcal{H}_{i,t} = -\log \pi_{\theta_{\text{old}}}(y_{i,t} | x_i, y_{i,<t})$ is detached from the current policy $\pi_\theta$, acting as a fixed coefficient based on the old policy.
    
    \item For tokens with low probability under $\pi_{\theta_{\text{old}}}$:
    \begin{equation}
    \pi_{\theta_{\text{old}}}(y_{i,t} | x_i, y_{i,<t}) \text{ small} \Rightarrow \mathcal{H}_{i,t} = -\log \pi_{\theta_{\text{old}}}(y_{i,t} | x_i, y_{i,<t}) \text{ large}
    \end{equation}
    
    \item Since $r_{i,t}(\theta) > 0$ and $\mathcal{H}_{i,t} \geq 0$, the gradient always points in the direction of increasing $\log \pi_\theta(y_{i,t} | x_i, y_{i,<t})$.
    
    \item \textbf{Crucially}, the magnitude of the gradient is proportional to $\mathcal{H}_{i,t}$: tokens that had higher entropy (lower probability) under $\pi_{\theta_{\text{old}}}$ receive stronger positive gradients.
\end{enumerate}

\textbf{Entropy reduction mechanism:}

Consider two sampled tokens at positions where $A_{i,t} = 0$:
\begin{itemize}
    \item Token $y_i^{(1)}$ with high entropy: $\pi_{\theta_{\text{old}}}(y_i^{(1)} | \cdot) = 0.1 \Rightarrow \mathcal{H}_i^{(1)} \approx 2.3$
    \item Token $y_i^{(2)}$ with low entropy: $\pi_{\theta_{\text{old}}}(y_i^{(2)} | \cdot) = 0.8 \Rightarrow \mathcal{H}_i^{(2)} \approx 0.22$
\end{itemize}

The gradient magnitude for the high-entropy token is approximately $10\times$ larger than that for the low-entropy token. Therefore, DA-GRPO preferentially increases the probability of high-entropy tokens, which:
\begin{enumerate}
    \item \textbf{Increases their likelihood}: $\pi_\theta(y_{i,t} | \cdot)$ increases for tokens that were uncertain under $\pi_{\theta_{\text{old}}}$
    \item \textbf{Reduces their entropy}: As $\pi_\theta(y_{i,t} | \cdot)$ increases, the local entropy $-\log \pi_\theta(y_{i,t} | \cdot)$ decreases
    \item \textbf{Makes them more deterministic}: The model becomes more confident about generating these previously uncertain tokens
\end{enumerate}

This mechanism differs from standard entropy regularization (which would increase overall distribution entropy). Instead, DA-GRPO selectively reduces the entropy of high-entropy tokens that were sampled, encouraging exploration while consolidating discovered behaviors.
\end{proof}

\begin{remark}
This entropy reduction property has important implications:
\begin{itemize}
    \item \textbf{Exploration exploitation}: DA-GRPO encourages sampling diverse tokens (high-entropy) but then commits to them by reducing their entropy.
    \item \textbf{Stability}: Unlike additive entropy bonuses that can lead to unbounded entropy growth, DA-GRPO's mechanism is self-limiting—once a token's probability increases, its $\mathcal{H}_{i,t}$ in future iterations decreases.
    \item \textbf{Advantage-aware}: This mechanism only applies when $A_{i,t} = 0$, preserving the advantage signal where it matters while providing structured exploration elsewhere.
\end{itemize}
\end{remark}

\end{document}